\documentclass[twoside]{article}

\usepackage[accepted]{aistats2022}

\usepackage[utf8]{inputenc} % allow utf-8 input
\usepackage[T1]{fontenc}    % use 8-bit T1 fonts
\usepackage{hyperref}       % hyperlinks
\usepackage{url}            % simple URL typesetting
\usepackage{booktabs}       % professional-quality tables
\usepackage{amsfonts}       % blackboard math symbols
\usepackage{nicefrac}       % compact symbols for 1/2, etc.
\usepackage{microtype}      % microtypography
\usepackage{xcolor}         % colors

% my packages
% If you use natbib package, activate the following three lines:
\usepackage[round]{natbib}

\usepackage{parskip}

\usepackage{graphicx}
\usepackage{amsmath,amssymb,amsthm} 
\usepackage{fullpage}
\usepackage[T1]{fontenc} 
\usepackage{epsf} 
\usepackage{graphics} 
\usepackage{color,etoolbox}
\usepackage{bbm}
\usepackage{mysymbols}
\usepackage{comment} 
\usepackage{pifont}
\usepackage{subfigure}
\usepackage{multirow}
\usepackage{multicol}
\usepackage{wrapfig}
\usepackage{color, colortbl}
\usepackage{float}

% For algorit
\usepackage{algorithm}
\usepackage[noend]{algpseudocode}

% my commands
\newcommand{\loss}{\overline{\calL}}
\newcommand{\clip}{\mathrm{clip}}

\newcommand{\z}{x}
%indexing
\newcommand{\circledOne}{\text{\ding{172}}}
\newcommand{\circledTwo}{\text{\ding{173}}}
\newcommand{\circledThree}{\text{\ding{174}}}
\newcommand{\circledFour}{\text{\ding{175}}}
\newcommand{\circledFive}{\text{\ding{176}}}
\newcommand{\circledSix}{\text{\ding{177}}}

\begin{document}

% If your paper is accepted and the title of your paper is very long,
% the style will print as headings an error message. Use the following
% command to supply a shorter title of your paper so that it can be
% used as headings.
%
%\runningtitle{I use this title instead because the last one was very long}

% If your paper is accepted and the number of authors is large, the
% style will print as headings an error message. Use the following
% command to supply a shorter version of the authors names so that
% they can be used as headings (for example, use only the surnames)
%
%\runningauthor{Surname 1, Surname 2, Surname 3, ...., Surname n}

\twocolumn[

\aistatstitle{Heavy-tailed Streaming Statistical Estimation}

\aistatsauthor{ Che-Ping Tsai$^\dagger$ \And Adarsh Prasad$^\dagger$ \And  Sivaraman Balakrishnan$^\ddagger$ \And  Pradeep Ravikumar$^\dagger$ }

\aistatsaddress{ Machine Learning Department$^\dagger$ \\ Department of Statistics and Data Science$^\ddagger$ \\
Carnegie Mellon University } ]

\begin{abstract}
We consider the task of heavy-tailed statistical estimation given streaming $p$-dimensional samples. This could also be viewed as stochastic optimization under heavy-tailed distributions, with an additional $O(p)$ space complexity constraint. We design a clipped stochastic gradient descent algorithm and provide an improved analysis, under a more nuanced condition on the noise of the stochastic gradients, which we show is critical when analyzing stochastic optimization problems arising from general statistical estimation problems. Our results guarantee convergence not just in expectation but with exponential concentration, and moreover does so using $O(1)$ batch size. We provide consequences of our results for mean estimation and linear regression. Finally, we provide empirical corroboration of our results and algorithms via synthetic experiments for mean estimation and linear regression.
\end{abstract}

\section{INTRODUCTION}

Statistical estimators are typically random, since they depend on a random training set; their statistical guarantees are typically stated in terms of the expected loss between estimated and true parameters~\citep{kingma2014adam,duchi2011adaptive,zhang2019adaptive,hazan2014beyond}. A bound on expected loss however might not be sufficient in higher stakes settings, such as autonomous driving, and risk-laden health care, among others, since the deviation of the estimator from its expected behavior could be large. In such settings, we might instead prefer a bound on the loss that holds with high probability. Such high-probability bounds are however often stated only under strong assumptions (e.g. sub-Gaussianity or boundedness) on the tail of underlying distributions~\citep{harvey2019simple,harvey2019tight,rakhlin2011making,ghadimi2013optimal}; conditions which often do not hold in real-world settings. There has also been a burgeoning line of recent work that relaxes these assumptions and allows for heavy-tailed underlying distributions~\citep{catoni2012challenging,hsu2016loss,lugosi2019mean}, but the resulting algorithms are often not only complex, but are also specifically batch learning algorithms that require storing the entire dataset, which limits their scalability. For instance, many popular polynomial time algorithms on heavy-tailed mean estimation~\citep{diakonikolas2017being,cheng2020high,cherapanamjeri2019fast,lei2020fast,dong2019quantum,depersin2019robust} and heavy-tailed linear regression algorithms~\citep{hsu2016loss,suggala2019adaptive,pensia2020robust} need to store the dataset to take polylogarithmic passes over data. 

On the other hand,
most successful practical modern learning algorithms are iterative, light-weight and access data in a ``streaming'' fashion. As a consequence, we focus on designing and analyzing iterative statistical estimators which only use constant storage in each step. To summarize, motivated by practical considerations, we have three desiderata: (1) allowing for heavy-tailed underlying distributions (weak modeling assumptions), (2) high probability bounds on the loss between estimated and true parameters instead of just its expectation (strong statistical guarantees), and (3) estimators that access data in a streaming fashion while only using constant storage (scalable, simple algorithms).

A useful alternative viewpoint of the statistical estimation problem above is that of stochastic optimization: where we have access to the optimization objective function (which in the statistical estimation case is simply the population risk of the estimator) only via samples of the objective function or its higher order derivatives (typically just the gradient). Here again, most of the literature on stochastic optimization typically provides bounds in expectation~\citep{hazan2014beyond,zhang2019adaptive,gower2019sgd}, or places a strong assumptions on the tail behavior of the distributions of the derivatives of the stochastic objective, such as the distributions being bounded~\citep{harvey2019tight,rakhlin2011making} or sub-Gaussian~\citep{harvey2019simple,li2019convergence}. Figure \ref{fig:intro_heavy_tail} shows that even for the simple stochastic optimization task of mean estimation, the deviation of stochastic gradient descent (SGD) is much worse for heavy-tailed distributions than sub-Gaussian ones. Therefore, bounds on expected behavior, or strong assumptions on the tails of the stochastic noise distribution are no longer sufficient. 

\begin{figure}
\label{fig:intro_heavy_tail}
\begin{center}
  \includegraphics[width=0.33\textwidth]{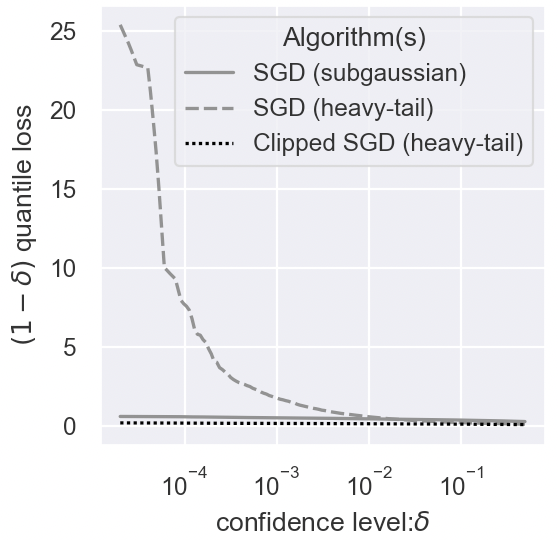}
  \end{center}
  \caption{Tail performance of SGD and clipped-SGD of mean estimation. The underlying distributions are zero-mean sub-Gaussian or heavy-tailed distributions. This figure shows $(1-\delta)$ quantile
  $\ell_2$-loss between estimated and true mean against different confidence levels $\delta$. See more details in Section \ref{sec:details_for_intro_exp}.}
\end{figure}
%These estimators are often less robust and have been shown to have poor performance if these "thin-tailed" assumptions are no longer held \cite{catoni2012challenging,lugosi2019mean}.
%Existing estimators and theories in machine learning are largely designed for "thin-tailed" data, such as those coming from sub-Gaussian distributions. For instance, most of the literature of stochastic optimization either only provided bounds in expectation \cite{hazan2014beyond,zhang2019adaptive,gower2019sgd} or placed a strong assumption on the tail of an underlying distribution, such as bounded distributions \cite{harvey2019tight,rakhlin2011making} and sub-Gaussian distributions \cite{harvey2019simple,li2019convergence}. These estimators are often less robust and have been shown to have  poor performance if these "thin-tailed" assumptions are no longer held \cite{catoni2012challenging,lugosi2019mean}. Moreover, from a practical standpoint, this is a less than desirable state of affairs: it has been shown that heavy-tailed distributions are ubiquitous in a variety of fields including neural networks for image classification \cite{simsekli2019tail}, attention models for natural language processing \cite{zhang2019adaptive}, and reinforcement learning \cite{garg2021proximal}. 

While there has been a line of work on heavy-tailed stochastic optimization, these require non-trivial storage complexity or batch sizes, making them unsuitable for streaming settings or large-scale problems~\citep{fritsch2015robust,mcmahan2013ad}. Specifically these existing works require at least $O(1/\epsilon)$ batch size to obtain a $\epsilon$-approximate solution under heavy-tailed noise~\citep{prasad2018robust,davis2021low,gorbunov2020stochastic,nazin2019algorithms} (See Section \ref{sec:detailed_related_work} in the Appendix for further discussion). In other words, to achieve a typical $O(1/N)$ convergence rate (on the squared error), where $N$ is the number of samples, they would need a batch-size of nearly the entire dataset.

Therefore, we investigate the following question:
%
%Although past works \cite{prasad2018robust,davis2021low,gorbunov2020stochastic,nazin2019algorithms} have proposed algorithms with high-confidence guarantees in stochastic optimization under heavy-tailed noise, their batch sizes are either non-trivial or even exponentially growing with number of iterations (See details in Table \ref{tab:sample_complexity_comparison}). Their algorithms are unrealistic and questionable to be used in streaming setting or large-scale problems \citep{fritsch2015robust,mcmahan2013ad}, for which the stochastic optimization algorithms are originally designed. Therefore, we investigate the following question in stochastic optimization: 
\begin{center}
\emph{Can we develop a stochastic optimization method that satisfy our three desiderata?}
%: comes with bounds that hold with high-probability, under heavy-tailed stochastic noise distributions, and in a streaming setting with constant storage complexity? }
\end{center}
Our answer is that a simple algorithm suffices: \emph{stochastic gradient descent with clipping} (\texttt{clipped-SGD}). 
In particular, we first prove a high probability bound for \texttt{clipped-SGD} under heavy-tailed noise, with a decaying $O(1/t)$ step-size sequence for strongly convex objectives. By using a decaying step size, we improve the analysis of~\citep{gorbunov2020stochastic} and develop the first robust stochastic optimization algorithm \emph{in a fully streaming setting} - i.e. with $O(1)$ batch size. We then consider corollaries for statistical estimation where the optimization objective is the population risk of the estimator, and derive the first streaming robust mean estimation and linear regression algorithm that satisfy all three desiderata above.

We summarize our contributions as follows:

\begin{itemize}

    \item We prove the first high-probability bounds for \texttt{clipped-SGD} with a $O(1/t)$ step size and a constant batch size for strongly convex and smooth objectives \emph{without sub-Gaussian assumption on stochastic gradients}. To the best of our knowledge, this is the first stochastic optimization algorithm that uses constant batch size in this setting. See Section \ref{sec:related_work} and Table \ref{tab:sample_complexity_comparison} for more details and comparisons. A critical ingredient is a nuanced condition on the stochastic gradient noise.
    
    \item We show that our proposed stochastic optimization algorithm can be used for a broad class of statistical estimation problems. As corollaries of this framework, we present a new class of robust heavy-tailed estimators for streaming mean estimation and linear regression.
    
    \item Lastly, we conduct synthetic experiments to corroborate our theoretical and methodological developments. We show that \texttt{clipped-SGD} not only outperforms SGD and a number of baselines in average performance but also has a well-controlled tail performance.
 
\end{itemize}

\subsection{Related Work}
\label{sec:related_work}

\paragraph{Batch Heavy-tailed mean estimation.} In the batch-setting,
~\cite{hopkins2020mean} proposed the first polynomial-time algorithm that matches the error guarantees achieved by the empirical mean on Gaussian data. After this work, efficient algorithms with improved asymptotic runtimes were proposed: 
% SDP :hopkins2020mean, cherapanamjeri2019fast
% spectral techniques:diakonikolas2019robust,hopkins2020robust, cheng2020high, lei2020fast
% Median-of-means:depersin2019robust (not poly time ), minsker2015geometric(sub-optimal)
\citet{hopkins2020mean, cherapanamjeri2019fast} proposed optimal estimators based on semi-definite programming (SDP). \citet{diakonikolas2019robust,hopkins2020robust, cheng2020high, lei2020fast, dong2019quantum} constructed more practical algorithms via spectral techniques. Estimators~\citep{minsker2015geometric,depersin2019robust} relied on the median-of-means framework.
However, these approaches are not designed for the streaming setting and requires taking polylogarithmic passes over data. We discuss the difficulties in applying these approaches in the streaming setting in Section \ref{sec:cor_mean_estimation}. 
%Some exceptions are ~\citet{minsker2015geometric,depersin2019robust}, which have $O(\log(1/\delta))$ storage complexity to obtain a solution with a $\delta$ failure probability. However, 
%these works rely on the median-of-means framework and are not easy to extend to the streaming setting (See Section \ref{sec:cor_mean_estimation} for more information). 
%For more details on these recent algorithms, see the recent survey~\citep{diakonikolas2019recent}. 

\paragraph{Batch Heavy-tailed regression.}
For the setting where the regression noise $w$ is heavy-tailed with bounded variance, Huber's estimator is known to have exponential deviation bounds~\citep{fan2017estimation,sun2020adaptive} in high dimensional setting. For the case where both the covariates and the noise are both heavy-tailed, several recent works have proposed computationally efficient estimators that achieve exponential deviation bounds based on the median-of-means framework~\citep{lugosi2019risk, hsu2016loss,minsker2015geometric}, thresholding techniques~\citep{suggala2019adaptive}, and covariate filtering \citep{pensia2020robust}.
However, as we noted before, computing all of these estimators require storing the entire dataset.
%Also, other recent work  proposed estimators based on Catoni's mean estimator \cite{brownlees2015empirical} and median-of-mean tournament \cite{lugosi2019risk} with strong theoretical guarantees but their estimators are not easily computable. 

\paragraph{Heavy-tailed stochastic optimization.}
A line of work in stochastic convex optimization have proposed bounds that achieve sub-Gaussian concentration around their mean (a common step towards providing sharp high-probability bounds), while only assuming that the variance of stochastic gradients is bounded (i.e. allowing for heavy-tailed stochastic gradients). \citet{davis2021low} proposed \texttt{proxBoost} that is based on robust distance estimation and proximal operators. \citet{prasad2018robust} utilized the geometric median-of-means to robustly estimate gradients in each mini-batch. \citet{gorbunov2020stochastic} and \citet{nazin2019algorithms} proposed \texttt{clipped-SSTM} and \texttt{RSMD} respectively based on truncation of stochastic gradients for stochastic mirror/gradient descent. \citet{zhang2019adaptive} analyzed the convergence of \texttt{clipped-SGD} \emph{in expectation} but focus on a different noise regime where the distribution of stochastic gradients has bounded $1+\alpha$ moments for some $0 < \alpha \leq 1$. However, all the above works ~\citep{davis2021low,prasad2018robust,gorbunov2020stochastic,nazin2019algorithms} have an unfavorable $O(n)$ dependency on the batch size to get the typical $O(1/n)$ convergence rate (on the squared error). We note that our bound is comparable to the above approaches while using a constant batch size. See Appendix \ref{sec:detailed_related_work} for more details.  
\section{BACKGROUND AND PROBLEM FORMULATION }

In this paper, we consider the following statistical estimation/stochastic optimization setting: we assume that there are some class of functions $\{f_\theta \}_{\theta \in \Theta}$ parameterized by $\theta$, where $\Theta$ is a convex subset of $\real^p$; some random vector $\z$ with distribution $P$; and a loss function $\loss$ which takes $\z, \theta$ and outputs the loss of $f_\theta$ at point $\z$. In this setting, we want to recover the true parameter $\theta^*$ defined as the minimizer of the population risk function $\calR(\theta)$:
\begin{equation}
\label{eqn:risk_minimization}
\theta^*
= \argmin_{\theta \in \Theta } \calR(\theta) 
= \argmin_{\theta \in \Theta} \Exp_{\z \sim P} [\loss(\theta, \z)].
\end{equation}
We assume that $\loss$ is differentiable and convex, and further impose two regularity conditions on the population risk: there exist $\tau_\ell$ and $\tau_u$ such that 
\begin{align}
\small
\label{eqn:convexity_parameter}
\frac{\tau_\ell}{2} \norm{\theta_1 - \theta_2}{2}^2 
& \leq \calR(\theta_1) - \calR(\theta_2) - \inprod{\nabla \calR(\theta_2)}{\theta_1 - \theta_2}  \nonumber \\
& \leq \frac{\tau_u}{2} \norm{\theta_1 - \theta_2}{2}^2 
\end{align}
with $\tau_\ell, \tau_u > 0$ and it holds for all $\theta_1, \theta_2 \in \Theta$.  The parameters $\tau_\ell, \tau_u$ are called the strong-convexity and smoothness parameters of the function $\calR(\theta)$.

To solve the minimization problem defined in Eq. \eqref{eqn:risk_minimization}, we assume that we can access the stochastic gradient of the population risk, $\nabla \loss(\theta,\z)$, at any point $\theta \in \Theta$ given a sample $\z$. We note that this is a unbiased gradient estimator, i.e.
$$
\Exp_{\z \sim P }[\nabla \loss(\theta,\z)] = \nabla \calR(\theta).
$$
Our goal in this work is to develop robust statistical methods under heavy-tailed distributions. The specific characterization of heavy-tailed distributions we consider in this paper is the common notion of distributions where only very low order moments may be finite, e.g. student-t distribution or Pareto distribution. In this work, we assume the stochastic gradient distribution only has bounded second moment. Formally, for any $\theta \in \Theta$, we assume that there exists $\alpha(P, \loss)$ and $\beta(P, \loss)$ such that 
\begin{align}
\Exp_{\z \sim P} [\norm{ & \nabla \loss(\theta,\z)  - \nabla \calR(\theta)}{2}^2] \nonumber  \\
& \leq \alpha(P, \loss)\norm{\theta-\theta^*}{2}^2 + \beta(P, \loss).
\label{eqn:asp_variance_sg}
\end{align}
In other words, the variance of the $\ell_2$-norm of the gradient distribution depends on a uniform constant $ \beta(P, \loss)$ and a position-dependent variable, $\alpha(P,\loss)$, which allows the variance of gradient noise to be large when $\theta$ is far from the true parameter $\theta^*$. We note that this is a more general assumption compared to prior works which assumed that the variance is uniformly bounded by $\sigma^2$. It can be seen that our condition is more nuanced and can be weaker: even if our condition holds, we would allow for a uniform bound on variance to be large: $\alpha(P,\loss) \sup_{\theta \in \Theta} \|\theta - \theta^*\|^2 + \beta(P, \loss)$. Whereas, a uniform bound on the variance could always be cast as $\alpha(P,\loss) = 0$ and $\beta(P,\loss) = \sigma^2$. We will show that this more nuanced assumption is essential to obtain tight bounds for linear regression problems. 

We next provide some running examples to instantiate the above:

\begin{enumerate}
    \item \textbf{Mean estimation:} Given observations $\z_1, \cdots, \z_n \sim P$ where the distribution $P$ with mean $\mu$ and a bounded covariance matrix $\Sigma$. The minimizer of the following square loss is the mean $\mu$ of distribution $P$:
    \begin{equation}
    \label{eqn:def_mean_estimation}
    \small
    \loss(\theta, \z ) = \frac{1}{2}\norm{\z - \theta}{2}^2
    \ \text{ and }  \ 
    \mu = \argmin_{\theta \in \real^p} \Exp_{\z \sim P}  [ \loss(\theta, \z )].
    \end{equation}
    In this case, $\tau_\ell = \tau_u = 1$, $\alpha(P,\loss) = 0$ and $\beta(P,\loss) = \trace{\Sigma}$ satisfy the assumption in Eq.\eqref{eqn:asp_variance_sg}.
    
    \item \textbf{Linear regression:} Given covariate-response pairs $(x,y)$, where $x, y$ are sampled from $P$ and have a linear relationship, i.e.  $y = \inprod{x}{\theta^*} + w$, where $\theta^*$ is the true parameter we want to estimate and $w$ is drawn from a zero-mean distribution. Suppose that under distribution $P$ the covariate $x \in \real^p$ have mean $0$ and non-singular covariance matrix $\Sigma$.
    In this setting, we consider the squared loss:
    \begin{align}
    \small
    & \loss(\theta, (x,y)) = \frac{1}{2}( y - \inprod{x}{\theta})^2, \nonumber \\
    & \text{ and }  \ \ 
    \calR(\theta) = \frac{1}{2}(\theta -\theta^*)^{\top} \Sigma (\theta - \theta^*).
    \label{eqn:def_linear_regression}
    \end{align}
    The true parameter $\theta^*$ is the minimizer of $\calR(\theta)$. We also note that $\tau_{\ell} = \lambda_{\min}(\Sigma)$ and $\tau_u = \lambda_{\max} ( \Sigma)$ satisfies the assumption in Eq.\eqref{eqn:convexity_parameter} with  $\alpha(P, \loss) = O(p\norm{\Sigma}{2}^2)$, and $\beta(P,\loss) = p\sigma^2 \norm{\Sigma}{2}$ satisfy the assumption in Eq.\eqref{eqn:asp_variance_sg}. Note that if we had to uniformly bound the variance of the gradients as in previous stochastic optimization work, that bound would need to scale as:
    $O(p\norm{\Sigma}{2}^2R^2 + p\sigma^2 \norm{\Sigma}{2})$, where $R = \sup_{\theta \in \Theta}\|\theta - \theta^*\|$, which will yield much looser bounds.
\end{enumerate}

\section{MAIN RESULTS}
\label{sec:main_results}

In this section, we introduce our clipped  stochastic gradient descent algorithm. We begin by formally defining clipped stochastic gradients. For a clipping parameter $\lambda \geq 0$:
\begin{equation}
\label{eqn:def_clip_function}
\clip(\nabla \loss(\theta, \z), \lambda) = \min \left(1, \frac{\lambda}{\norm{\nabla \loss(\theta, \z)}{2}} \right)\nabla \loss(\theta, \z),
\end{equation}
where $\theta \in \Theta$, $\z \sim P$ and $\nabla \loss(\theta, \z)$ is the stochastic gradient. The overall algorithm is summarized in Algorithm \ref{algo:clipped_sgd}, where we use $\calP_\Theta$ to denote the (Euclidean) projection onto the domain $\Theta$.

%\begin{wrapfigure}{R}{0.55\textwidth}
%\begin{minipage}{0.55\textwidth}
\begin{algorithm}[H]
\caption{Clipped stochastic gradient descent (\texttt{clipped-SGD}) algorithm.}
%\small
\hspace*{\algorithmicindent} \textbf{Input:} loss function $\loss$, initial point $\theta^1$, step size $\eta_t$, clipping level $\lambda$, samples $\z_1,\cdots,\z_{N} \sim P$.
\begin{algorithmic}[1]
    \For{$t=1,2,...,N$}
    \State $\theta^{t+1} \leftarrow \calP_{\Theta} \left(\theta^{t} -  \eta_t \clip(\nabla \loss(\theta^{t},\z_t), \lambda) \right)$.
    \EndFor
\end{algorithmic}
\hspace*{\algorithmicindent} \textbf{Output:} $\theta^{N+1}$.
\label{algo:clipped_sgd}
\end{algorithm}
%\end{minipage}
%\end{wrapfigure}
Next, we state our main convergence result for \texttt{clipped-SGD} in Theorem \ref{thm:clipped_sgd}.
\begin{theorem}
\label{thm:clipped_sgd}
(Streaming heavy-tailed stochastic optimization)
Suppose that the population risk satisfies the regularity conditions in Eq. \eqref{eqn:convexity_parameter} and stochastic gradient noise satisfies the condition in Eq. \eqref{eqn:asp_variance_sg}. Let $\delta \in (0,2e^{-1})$ and 
\begin{equation}
\small
\gamma \defeq 144\max \left\{\frac{\tau_u}{\tau_\ell}, \frac{96\alpha(P,\loss)}{\tau_\ell^2} \right\} \log(2/\delta) + 1.
\end{equation}
Given $N$ samples $\z_1,\cdots, \z_N$, the Algorithm \ref{algo:clipped_sgd} initialized at $\theta^1$ with
$ \eta_t \defeq \frac{1}{\tau_\ell(t+\gamma)} $ and 
\begin{equation}
\small
\label{eqn:def_hyperparemeter}
\lambda \defeq  C_1\sqrt{\frac{\tau_\ell^2 \gamma(\gamma-1)\norm{\theta^1 - \theta^*}{2}^2}{\log(2/\delta)^2}
+ \frac{(N+\gamma)\beta(P,\loss)}{\log(2/\delta)}},
\end{equation} 
where $C_1 \geq 1$ is a scaling constant can be chosen by users, returns $\theta^{N+1}$ such that with probability at least $1-\delta$, we have
\begin{equation}
\small
\label{eqn:theorem_l2_bound}
\norm{\theta^{N+1} -\theta^*}{2}
\leq 100 C_1\left(\frac{\gamma \norm{\theta^1 - \theta^*}{2}}{N+\gamma}
+ \frac{1}{\tau_\ell}\sqrt{\frac{\beta(P,\loss)\log(2/\delta)}{N+\gamma}} \right).
\end{equation}
\end{theorem}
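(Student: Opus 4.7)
The plan is to analyze the clipped-SGD recursion through the standard projected-SGD expansion combined with a clipping-bias / martingale-noise decomposition, and to close the high-probability bound by induction on a deterministic envelope for $\norm{\theta^t-\theta^*}{2}^2$. Writing $r_t := \norm{\theta^t-\theta^*}{2}^2$, $g_t := \nabla\loss(\theta^t,\z_t)$, and $\hat g_t := \clip(g_t,\lambda)$, I would decompose
\[
\hat g_t = \nabla\calR(\theta^t) + \xi_t^u + \xi_t^b,
\]
where $\xi_t^u := \hat g_t - \Exp[\hat g_t\mid\mathcal{F}_{t-1}]$ is a martingale difference with respect to $\mathcal{F}_{t-1} := \sigma(\z_1,\ldots,\z_{t-1})$ and $\xi_t^b := \Exp[\hat g_t\mid\mathcal{F}_{t-1}] - \nabla\calR(\theta^t)$ is the clipping bias. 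Nonexpansiveness of $\calP_\Theta$ together with strong convexity from Eq.~\eqref{eqn:convexity_parameter} (which yields $\inprod{\nabla\calR(\theta^t)}{\theta^t-\theta^*}\ge \tau_\ell r_t$) then gives
\[
r_{t+1} \le (1-2\eta_t\tau_\ell)\,r_t - 2\eta_t\inprod{\xi_t^u+\xi_t^b}{\theta^t-\theta^*} + \eta_t^2\norm{\hat g_t}{2}^2.
\]
With $\eta_t = 1/(\tau_\ell(t+\gamma))$ the deterministic contraction factor is $(t+\gamma-2)/(t+\gamma)$, which telescopes into the advertised $\gamma/(N+\gamma)$ initial-error term and a $1/(N+\gamma)$ rate for the noise once the stochastic terms are controlled.

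Next I would bound each error source. For the bias, $\norm{\xi_t^b}{2}\le \Exp[\norm{g_t}{2}\mathbbm{1}\{\norm{g_t}{2}>\lambda\}\mid\mathcal{F}_{t-1}]\le \Exp[\norm{g_t}{2}^2\mid\mathcal{F}_{t-1}]/\lambda$, and combining Eq.~\eqref{eqn:asp_variance_sg} with smoothness ($\norm{\nabla\calR(\theta^t)}{2}^2\le \tau_u^2 r_t$) gives $\norm{\xi_t^b}{2}\lesssim ((\tau_u^2+\alpha(P,\loss))\,r_t + \beta(P,\loss))/\lambda$. For the noise, clipping gives the pointwise bound $\norm{\xi_t^u}{2}\le 2\lambda$ and the conditional variance bound $\Exp[\norm{\xi_t^u}{2}^2\mid\mathcal{F}_{t-1}]\lesssim (\tau_u^2+\alpha(P,\loss))\,r_t + \beta(P,\loss)$. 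Consequently the scalar martingale $M_T := \sum_{t\le T}\eta_t\inprod{\xi_t^u}{\theta^t-\theta^*}$ has increments bounded by $2\eta_t\lambda\sqrt{r_t}$ and predictable quadratic variation dominated by $\eta_t^2\,r_t\bigl((\tau_u^2+\alpha(P,\loss))r_t + \beta(P,\loss)\bigr)$, so Freedman's inequality applies. The remaining $\eta_t^2\norm{\hat g_t}{2}^2$ term splits into its conditional mean (absorbed into the deterministic recursion) and a deviation that is handled by a second Freedman-type bound exploiting $\norm{\hat g_t}{2}\le\lambda$.

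The central obstacle is that every variance and increment bound above is driven by the random sequence $r_t$ itself, so Freedman cannot be invoked with deterministic constants. I would resolve this by induction on a deterministic envelope
\[
\varphi_t := C'\left(\frac{\gamma^2\,r_1}{(t+\gamma)^2} + \frac{\beta(P,\loss)\log(2/\delta)}{\tau_\ell^2\,(t+\gamma)}\right),
\]
modeled on Eq.~\eqref{eqn:theorem_l2_bound}, together with the stopping time $\tau := \min\{t\le N : r_t > \varphi_t\}$. On $\{\tau>T\}$ all variance and increment bounds hold with $r_s$ replaced by the deterministic $\varphi_s$, so a single application of Freedman over $t\le \tau\wedge N$ produces a high-probability event on which the envelope is preserved and hence $\tau>N$. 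The parameters in the statement are calibrated precisely for this closure: the size of $\gamma$ (a large multiple of $\max(\tau_u/\tau_\ell,\,\alpha(P,\loss)/\tau_\ell^2)\log(2/\delta)$) absorbs the $\alpha(P,\loss)\,r_t$ cross-term from the variance into the strong-convexity contraction, while the choice of $\lambda$ in Eq.~\eqref{eqn:def_hyperparemeter} equalizes the clipping-bias contribution and the noise fluctuation so that both fit inside $\varphi_t$. I expect the hardest step to be executing the stopping-time argument cleanly enough that only a single $\log(2/\delta)$ factor appears in the final bound rather than $\log(N/\delta)$ from a naive step-wise union bound, and tracking constants through the Freedman/bias/contraction trade-off tightly enough to actually close the induction with the explicit $\varphi_t$ above.
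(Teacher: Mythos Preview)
Your overall architecture matches the paper's: bias/martingale decomposition of the clipped gradient, unrolling the projected-SGD recursion with the step size $\eta_t=1/(\tau_\ell(t+\gamma))$, Freedman's inequality for the martingale part, and an induction on a deterministic envelope for $r_t=\norm{\theta^t-\theta^*}{2}^2$ (the paper implements the last step via truncated surrogates $\zeta_i=(\theta^i-\theta^*)\mathbbm{1}\{r_i\le\varphi_i\}$, which is the same device as your stopping time). There is, however, a structural gap in the envelope you propose, and a smaller gap in the one-step recursion.

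\textbf{The envelope has the wrong shape.} Your $\varphi_t$ has a noise component scaling like $\beta(P,\loss)\log(2/\delta)/(\tau_\ell^2(t+\gamma))$. The paper's induction hypothesis is instead $r_t\le CA/((t+\gamma-1)(t+\gamma-2))$ with $A\asymp (N+\gamma)\beta(P,\loss)\log(2/\delta)/\tau_\ell^2$: the $(N+\gamma)$ is baked into $A$, so the noise part of the envelope decays like $N/(t+\gamma)^2$, not $1/(t+\gamma)$. These agree at $t\asymp N$ but differ by a factor $N/(t+\gamma)$ for small $t$, and that slack is exactly what makes the Freedman step close. After telescoping, the martingale you must control is $S_T=\sum_{i\le T}(i+\gamma-1)\inprod{\epsilon_i^v}{\theta^i-\theta^*}$, and on the envelope event the increment at step $i$ is bounded by $2\lambda(i+\gamma-1)\sqrt{\varphi_i}$. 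With the paper's envelope this is $\asymp\lambda\sqrt{A}$, \emph{uniform in $i$}, which is why a single Freedman application with a fixed $(a,b,v)$ controls all partial sums. With your envelope the increment is $\asymp\lambda\sqrt{(i+\gamma)\beta\log(2/\delta)}/\tau_\ell$, so the maximal increment over $i\le N$ is of order $\lambda\sqrt{N\beta\log(2/\delta)}/\tau_\ell\asymp N\beta/\tau_\ell$ (since $\lambda\asymp\sqrt{N\beta/\log(2/\delta)}$). The $b\log(2/\delta)$ term in Freedman is then $\asymp N\beta\log(2/\delta)/\tau_\ell$, whereas closing the induction at step $T$ requires $S_T\lesssim \tau_\ell(T+\gamma)^2\varphi_T\asymp (T+\gamma)\beta\log(2/\delta)/\tau_\ell$; this fails for $T\ll N$. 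In short, because $\lambda$ carries a $\sqrt{N}$, the envelope must be loose (of order $N/t^2$) at intermediate $t$ to keep the Freedman increment bound uniform; the ``natural'' $1/t$ envelope cannot be propagated.

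\textbf{The one-step recursion needs co-coercivity, not just strong convexity.} You use only $\inprod{\nabla\calR(\theta^t)}{\theta^t-\theta^*}\ge\tau_\ell r_t$ and then plan to absorb the conditional mean of $\eta_t^2\norm{\hat g_t}{2}^2$ into the contraction. But that conditional mean contains $\norm{\nabla\calR(\theta^t)}{2}^2\le\tau_u^2 r_t$, and $\eta_t^2\tau_u^2 r_t\le c\,\eta_t\tau_\ell r_t$ would force $t+\gamma\gtrsim(\tau_u/\tau_\ell)^2$, which the stated $\gamma\asymp(\tau_u/\tau_\ell)\log(2/\delta)$ does not guarantee. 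The paper avoids this by invoking the co-coercivity inequality (Lemma~3.11 of Bubeck), $\inprod{\nabla\calR(\theta^t)}{\theta^t-\theta^*}\ge \frac{\tau_\ell\tau_u}{\tau_\ell+\tau_u}r_t+\frac{1}{\tau_\ell+\tau_u}\norm{\nabla\calR(\theta^t)}{2}^2$, which cancels the $\eta_t^2\norm{\nabla\calR(\theta^t)}{2}^2$ term under the milder condition $\gamma\ge\tau_u/\tau_\ell$. This is easy to fix, but as written your recursion would only close with a $\gamma$ quadratic in the condition number.
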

We explain our theoretical contribution and provide a proof sketch in Appendix~\ref{sec:sketch_of_proof}.
The complete proof can be found in  Appendix~\ref{sec:proof_clipped_sgd}.

\paragraph{Remarks:}

a) This theorem says that with a properly chosen clipping level $\lambda$, \texttt{clipped-SGD} has an asymptotic convergence rate of $O(1/\sqrt{N})$ and enjoys \textit{sub-Gaussian style concentration} around the true minimizer (alternatively, its high probability bound scales logarithmically in the confidence parameter $\delta$). The first term in the error bound is related to the \emph{initialization error} and the second term is governed by the \emph{stochastic gradient noise}. These two terms have different convergence rates: at early iterations when the initialization error is large, the first term dominates but quickly decreases at the rate of $O(\gamma/N)$. At later iterations the second term dominates and decreases at the usual statistical rate of convergence of $O(1/\sqrt{N})$. 

b) Note that $\eta_t = O(\frac{1}{\tau_\ell t})$ is a common choice for optimizing $\tau_\ell$-strongly convex functions~\citep{harvey2019tight,rakhlin2011making}. The only difference is that we add a delay parameter $\gamma$ to "slow down" the learning rate $\eta_t$ and to stablize the training process. The delay parameter $\gamma$ depends on the position-dependent variance term $\alpha(P,\loss)$ and the condition number $\tau_u/\tau_\ell$. From a theoretical viewpoint, the delay parameter ensures that the true gradient is within the clipping region with high-probability, i.e. $\norm{\nabla \calR(\theta^t)}{2} \leq \lambda$ for $t=1,....,N$ with high probability and this in turn allows us to control the variance and the bias incurred by clipped gradients. Moreover, it controls the  position-dependent variance term $\alpha(P,\loss)\norm{\theta^t -\theta^*}{2}^2$, especially during the initial iterations when the error (and the variance of stochastic gradients) is large. 

%This can be corroborated by empirical observations that using a small $\gamma$ (large step size) leads to divergence at the early stage in linear regression.

c) We choose the clipping level to be proportional to $\sqrt{N}$ to balance the variance and bias of clipped gradients. Roughly speaking, the bias is inversely proportional to the clipping level (Lemma~\ref{lm:noise_term_control}). As the error $\norm{\theta^N-\theta^*}{2}$ converges at the rate $O(1/\sqrt{N})$, and this in turn suggests that we should choose the clipping level to be $O(\sqrt{N})$.

d) Note that previous 
stochastic optimization algorithms use  $O(n)$ batch sizes for strongly convex objective \citep{davis2021low,gorbunov2018accelerated,nazin2019algorithms,prasad2018robust}.
To address this issue, the critical ingredients are the use of \emph{$O(1/t)$ decayed learning rate} and \emph{a delayed parameter $\gamma$} to prevent it from diverging. Also, we explicitly control the variance of the gradient noise by clipping the gradients, and are able to provide a more careful analysis.
Whereas in previous algorithms, they use a constant step size throughout the training process. 
Consequently, when getting close to the minimizer, they must use \emph{an exponential growing batch size} to reduce the gradient noise and to prevent oscillations.

We also provide an error bound and sample complexity where, as in prior work, we assume the variance of stochastic gradients are \emph{uniformly bounded} by $\sigma^2$, i.e. $\alpha(P,\loss) = 0$ and $\beta(P,\loss) = \sigma^2$ (as before, we assume that the population loss $\calR(\cdot)$  is strongly-convex and smooth). We have the following corollary:

\begin{corollary}
\label{cor:sample_complexity}
Under the same assumptions and with the same hyper-parameters in Theorem \ref{thm:clipped_sgd} and letting $C_1 = 1$, with the probability at least $1-\delta$, we have the following error bound:
\begin{align}
& \calR(\theta^{N+1}) - \calR(\theta^*)
\leq \nonumber \\
& \ \ O\left( \frac{\tau_u^3}{\tau_\ell^3} \cdot \frac{r_0 \log(1/\delta)^2}{ N^2} + \frac{\tau_u}{\tau_\ell^2} \cdot \frac{\sigma^2 \log(1/\delta)}{N}
\right),
\label{eqn:cor_objective_value}
\end{align}
where $r_0 = \calR(\theta^1) - \calR(\theta^*)$ is the initialization error. In other words, to achieve $\calR(\theta^{N+1}) - \calR(\theta^*) \leq \epsilon$ with probability at least $1-\delta$, we need 
$
O\left( \max \left( \sqrt{\frac{\tau_u^3}{\tau_\ell^3} \cdot \frac{r_0}{\epsilon}}
, \frac{\tau_u \sigma^2}{\tau_\ell^2  \epsilon} \right)\log \left(\frac{1}{\delta} \right) \right)
$ samples.
\end{corollary}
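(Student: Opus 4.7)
The plan is to obtain Corollary~\ref{cor:sample_complexity} as a direct consequence of Theorem~\ref{thm:clipped_sgd}, by specializing the noise condition, converting the $\ell_2$-error bound to a population-risk bound via smoothness, and converting the initialization distance to an initialization excess-risk $r_0$ via strong convexity. The argument is essentially bookkeeping; the only substantive step is tracking how $\gamma$ (which itself depends on $\tau_u/\tau_\ell$ and $\log(1/\delta)$) propagates through the squaring.

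First, I would substitute $\alpha(P,\loss) = 0$ and $\beta(P,\loss) = \sigma^2$ into the definition of $\gamma$ in Theorem~\ref{thm:clipped_sgd}, which collapses the max and gives $\gamma = 144\,(\tau_u/\tau_\ell)\log(2/\delta) + 1 = O\bigl((\tau_u/\tau_\ell)\log(1/\delta)\bigr)$. With $C_1 = 1$, the high-probability guarantee \eqref{eqn:theorem_l2_bound} becomes
\begin{equation*}
\norm{\theta^{N+1}-\theta^*}{2} \le 100\left(\frac{\gamma\,\norm{\theta^1-\theta^*}{2}}{N+\gamma} + \frac{1}{\tau_\ell}\sqrt{\frac{\sigma^2\log(2/\delta)}{N+\gamma}}\right).
\end{equation*}

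Next, I would invoke smoothness from \eqref{eqn:convexity_parameter} to write $\calR(\theta^{N+1}) - \calR(\theta^*) \le (\tau_u/2)\norm{\theta^{N+1}-\theta^*}{2}^2$, and strong convexity (with $\nabla\calR(\theta^*) = 0$) to write $\norm{\theta^1-\theta^*}{2}^2 \le 2 r_0 / \tau_\ell$. Using $(a+b)^2 \le 2a^2 + 2b^2$ on the squared $\ell_2$ bound and substituting these two inequalities yields
\begin{equation*}
\calR(\theta^{N+1}) - \calR(\theta^*) \le O\!\left(\frac{\tau_u \gamma^2 r_0}{\tau_\ell (N+\gamma)^2} + \frac{\tau_u \sigma^2 \log(1/\delta)}{\tau_\ell^2 (N+\gamma)}\right).
\end{equation*}
Plugging in $\gamma^2 = O\bigl((\tau_u/\tau_\ell)^2 \log(1/\delta)^2\bigr)$ into the first summand produces the $\tau_u^3/\tau_\ell^3$ factor, and bounding $N+\gamma \ge N$ recovers exactly \eqref{eqn:cor_objective_value}.

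Finally, for the sample-complexity statement I would invert each of the two terms in \eqref{eqn:cor_objective_value} so that each is at most $\epsilon/2$. The first term gives $N \gtrsim \sqrt{(\tau_u^3/\tau_\ell^3)(r_0/\epsilon)}\,\log(1/\delta)$, and the second gives $N \gtrsim (\tau_u \sigma^2)/(\tau_\ell^2 \epsilon)\,\log(1/\delta)$; taking the maximum of the two lower bounds yields the claim. The main (and only) place one needs to be careful is in the squaring step where the delay parameter $\gamma$ appears both in the numerator (squared) of the initialization term and in the denominator, so one must verify that dropping $\gamma$ from $N+\gamma$ in the denominator is a legitimate upper bound, which it is since $\gamma \ge 1$.
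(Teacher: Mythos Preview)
Your proposal is correct and follows essentially the same route as the paper's own proof: specialize $\alpha=0,\ \beta=\sigma^2$ so that $\gamma=O((\tau_u/\tau_\ell)\log(1/\delta))$, square the bound from Theorem~\ref{thm:clipped_sgd}, convert $\norm{\theta^{N+1}-\theta^*}{2}^2$ to excess risk via $\tau_u$-smoothness and $\norm{\theta^1-\theta^*}{2}^2$ to $r_0$ via $\tau_\ell$-strong convexity, then invert each term for the sample complexity. The paper's proof is the same sequence of substitutions, differing only in cosmetic ordering.
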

With our general results in place we now turn our attention to deriving some important consequences for mean estimation and linear regression.

\section{CONSEQUENCES FOR HEAVY-TAILED PARAMETER ESTIMATION}
%\section{Consequences for Heavy-tailed Parameter Estimation}
\label{sec:consequence_para_estimation}

In this section, we investigate the consequences of Theorem \ref{thm:clipped_sgd} for statistical estimation in the presence of heavy-tailed noise. We plug in the respective loss functions $\loss$, the terms $\alpha(P,\loss)$ and $\beta(P,\loss)$ capturing the underlying stochastic gradient distribution, in Theorem \ref{thm:clipped_sgd} to obtain high-probability bounds for the respective statistical estimators.

\subsection{Heavy-tailed Mean Estimation}
\label{sec:cor_mean_estimation}

We assume that the distribution $P$ has bounded covariance matrix $\Sigma$. Then \texttt{clipped-SGD} for mean estimation has the following guarantee.
\begin{corollary}
\label{cor:heavy_tailed_mean_estimation}
(Streaming Heavy-tailed Mean Estimation) Given samples $\z_1,\cdots, \z_N \in \real^p$ from a distribution $P$ and confidence level $\delta \in (0, 2e^{-1})$, the Algorithm \ref{algo:clipped_sgd} instantiated with a loss function $\loss(\theta,\z) = \frac{1}{2}\norm{\z-\theta}{2}^2$, an initial point $\theta^1 \in \real^p$, $\gamma = 144\log(2/\delta) + 1$, a learning rate $\eta_t = \frac{1}{t+\gamma}$, and a clipping level
\begin{equation*}
\small
\lambda =  C_1\sqrt{\frac{ \gamma(\gamma-1)\norm{\theta^1 - \theta^*}{2}^2}{\log(2/\delta)^2}
+ \frac{(N+\gamma)\trace{\Sigma}}{\log(2/\delta)} },
\end{equation*}
where $C_1 \geq 1$ is a scaling constant can be chosen by users, returns $\theta^{N+1}$ such that with probability at least $1-\delta$, we have
\begin{equation}
\small
\label{eqn:mean_estimation_bound}
\norm{\theta^{N+1} -\theta^*}{2}
\leq 100 \left(\frac{\gamma \norm{\theta^1 - \theta^*}{2}}{N+\gamma}
+ \sqrt{\frac{\trace{\Sigma}\log(2/\delta)}{N+\gamma}} \right).
\end{equation}
\end{corollary}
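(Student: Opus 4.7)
The plan is to treat this corollary as a direct specialization of Theorem~\ref{thm:clipped_sgd} to the squared-loss mean-estimation problem, so the work reduces to identifying the relevant problem constants and verifying the hypotheses of the theorem rather than any fresh analytical argument.

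First, I would verify the regularity conditions of Eq.~\eqref{eqn:convexity_parameter} for the population risk $\calR(\theta) = \tfrac12 \Exp_{\z \sim P}[\norm{\z-\theta}{2}^2]$. Since $\nabla^2 \calR(\theta) = I_p$, the population risk is exactly $1$-strongly convex and $1$-smooth, so we can take $\tau_\ell = \tau_u = 1$. The unique minimizer is $\theta^* = \Exp_{\z \sim P}[\z]$, matching the desired target.

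Next, I would bound the stochastic-gradient variance to identify $\alpha(P,\loss)$ and $\beta(P,\loss)$ in Eq.~\eqref{eqn:asp_variance_sg}. For $\loss(\theta,\z) = \tfrac12 \norm{\z-\theta}{2}^2$, we have $\nabla \loss(\theta,\z) = \theta - \z$ and $\nabla \calR(\theta) = \theta - \theta^*$, so
\begin{equation*}
\Exp_{\z \sim P}\bigl[\norm{\nabla \loss(\theta,\z) - \nabla \calR(\theta)}{2}^2\bigr] = \Exp_{\z \sim P}\bigl[\norm{\z - \theta^*}{2}^2\bigr] = \trace{\Sigma},
\end{equation*}
which is independent of $\theta$. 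Hence we may take $\alpha(P,\loss) = 0$ and $\beta(P,\loss) = \trace{\Sigma}$, validating the assumption on the stochastic gradient.

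Plugging these constants into Theorem~\ref{thm:clipped_sgd} completes the argument. With $\tau_u/\tau_\ell = 1$ and $\alpha(P,\loss)/\tau_\ell^2 = 0$, the delay parameter reduces to $\gamma = 144\log(2/\delta) + 1$; the step size $\eta_t = 1/(\tau_\ell(t+\gamma))$ simplifies to $1/(t+\gamma)$; the clipping level formula in Eq.~\eqref{eqn:def_hyperparemeter} specializes to the expression given in the corollary since $\tau_\ell^2 = 1$ and $\beta(P,\loss) = \trace{\Sigma}$; and the conclusion Eq.~\eqref{eqn:theorem_l2_bound} becomes exactly Eq.~\eqref{eqn:mean_estimation_bound} after setting $\tau_\ell = 1$ and $\beta(P,\loss) = \trace{\Sigma}$ (with $C_1 = 1$, which is why the prefactor is $100$ rather than $100 C_1$).

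There is no real obstacle: the proof is a bookkeeping exercise in matching hyperparameters. The only subtle point worth highlighting is that the position-independent variance $\trace{\Sigma}$ lets us set $\alpha(P,\loss) = 0$, which makes the corollary's $\gamma$ depend only on $\log(2/\delta)$ and not on the condition number or on any covariance norm; this is precisely where mean estimation is easier than linear regression and is what the nuanced noise condition of Eq.~\eqref{eqn:asp_variance_sg} was designed to exploit.
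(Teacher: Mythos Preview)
Your proposal is correct and follows essentially the same approach as the paper's own proof: identify $\tau_\ell = \tau_u = 1$, compute the stochastic-gradient variance to obtain $\alpha(P,\loss)=0$ and $\beta(P,\loss)=\trace{\Sigma}$, and then instantiate Theorem~\ref{thm:clipped_sgd}. Your write-up is in fact somewhat more explicit than the paper's (you spell out how each hyperparameter specializes and flag the $C_1=1$ detail behind the constant $100$), but the argument is the same.
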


\paragraph{Remarks:}
a) The proposed mean estimator matches the error bound of the well-known geometric-median-of-means estimator~\citep{minsker2015geometric}, achieving $\norm{\theta^N- \theta^*}{2} \lesssim \sqrt{\frac{\trace{\Sigma}\log(1/\delta)}{N}}$. This guarantee is still sub-optimal compared to the optimal \emph{sub-Gaussian rate}~\citep{lugosi2019mean}. Existing polynomial time algorithms having optimal performance are for the batch setting and require either storing the entire dataset~\citep{hopkins2020mean,cherapanamjeri2019fast,diakonikolas2019robust,hopkins2020robust, cheng2020high, lei2020fast, dong2019quantum} or have $O(p\log(1/\delta))$ storage complexity~\citep{depersin2019robust}.
On the other hand, we argue that trading off some statistical accuracy for a large savings in memory and computation is favorable in practice.

Moreover, we claim that these algorithms are hard to be implemented in the streaming setting: \citet{hopkins2020mean, cherapanamjeri2019fast} use the semi-definite programming method, which is not yet practical. Algorithms \citet{diakonikolas2019robust, hopkins2020robust, cheng2020high, lei2020fast, dong2019quantum} rely on analyzing the spectrum of the covariance matrix. These approaches
require polylogarithmic passes over data to remove potential outliers in $d$ orthogonal directions, making it unsuitable in our setting since  computing the covariance matrix already requires taking one pass over data.

b) One may argue that median-based approaches, e.g.  coordinate-wise/geometric median-of-means or much simpler coordinate-wise/geometric medians, can be implemented in a streaming fashion while retaining the same rate as in the batch setting. 
Specifically, coordinate-wise/geometric median-of-means first divide the data into $b$ buckets of roughly equal size, compute the mean in each bucket, and then takes the coordinate-wise/geometric median of these bucketed means. 
We argue that they are not favorable for the following reasons.

First of all, simply using coordinate-wise/geometric medians of these $N$ samples is inconsistent. These medians are consistent estimators for the medians of underlying distributions. To use them in the mean estimation, it incurs a large bias when the underlying distribution is asymmetric. For instance, the distance between the mean and the median for a one-dimensional Pareto distribution is a constant. 

Second, it's not obvious how to turn these algorithms into the steaming setting. Current analyses for geometric median are asymptotic \citep{cardot2013efficient} or only applied after a large number of iterations \citep{cardot2017online} because estimating streaming geometric median incurs a large bias in the early iterations. Moreover, in practice, it might require one to wait for a bucket of samples to calculate bucketed means, which is not an 'any-time' algorithm as our algorithm.

Coordinate-wise median-of-means has similar issues. Also, even in the batch setting, the confidence parameter delta in their guarantee gets scaled by dimension, which can not be applied to very high-dimensional spaces \citep{minsker2015geometric}.

\subsection{Heavy-tailed Linear Regression}
\label{sec:cor_linear_regression}

We consider the linear regression model described in Eq.\eqref{eqn:def_linear_regression}. Assume that the covariates $x \in \real^p$ have bounded $4^{th}$ moments and a non-singular covariance  matrix $\Sigma$ with bounded operator norm, and the noise $w$ has bounded $2^{nd}$ moments. We denote the minimum and maximum eigenvalue of $\Sigma$ by $\tau_\ell$ and $\tau_u$. 
More formally,
we say a random variable $x \in \real^p$ has a bounded $4^{th}$ moment if there exists a constant $C_{4}$ such that for every unit vector $v \in \calS^{p-1}$, we have 
\begin{equation}
\label{eqn:asp_bounded_moment}
\Exp[\inprod{x-\Exp[x]}{v}^{4}]
\leq C_{4} \left(\Exp[\inprod{x-\Exp[x]}{v}] \right)^2.
\end{equation}
\begin{corollary}
\label{cor:heavy_tailed_linear_regression}
(Streaming Heavy-tailed Regression) Given samples $(x_1,y_1),\cdots,(x_N,y_N) \in \real^p \times \real$ and confidence level $\delta \in (0, 2e^{-1})$, the Algorithm \ref{algo:clipped_sgd} instantiated with loss function in Eq. \eqref{eqn:def_linear_regression}, initial point $\theta^1 \in \real^p$, 
$$
\gamma = 144\max \left\{\frac{\tau_u}{\tau_\ell}, \frac{192(C_4+1) p \tau_u^2 }{\tau_\ell^2} \right\}\log(2/\delta) + 1, $$
learning rate $\eta_t = \frac{1}{\tau_\ell(t+\gamma)}$, and clipping level
\begin{equation*}
\small
\lambda =  C_1\sqrt{\frac{ \tau_\ell^2 \gamma(\gamma-1)\norm{\theta^1 - \theta^*}{2}^2}{\log(2/\delta)^2}
+ \frac{(N+\gamma)\sigma^2p \tau_u}{\log(2/\delta)} },
\end{equation*}
where $C_1$ is a scaling constant can be chosen by users and $C_4$ is the constant in Eq.\eqref{eqn:asp_bounded_moment}, returns $\theta^{N+1}$ such that with probability at least $1-\delta$, we have
\begin{equation}
\small
\label{eqn:linear_regression_bound}
\norm{\theta^{N+1} -\theta^*}{2}
\leq 100 C_1 \left(\frac{\gamma \norm{\theta^1 - \theta^*}{2}}{N+\gamma}
+ \frac{\sigma}{\tau_\ell}\sqrt{\frac{p\tau_u\log(2/\delta)}{N+\gamma}} \right).
\end{equation}
\end{corollary}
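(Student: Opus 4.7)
The plan is to apply Theorem \ref{thm:clipped_sgd} directly at the linear regression loss, so the entire job reduces to verifying the two structural assumptions of that theorem with the correct constants: the strong-convexity/smoothness bound \eqref{eqn:convexity_parameter} on the population risk, and the position-dependent variance bound \eqref{eqn:asp_variance_sg} on the stochastic gradients. Once $\tau_\ell$, $\tau_u$, $\alpha(P,\loss)$, and $\beta(P,\loss)$ are identified, the displayed formulas for $\gamma$, $\lambda$, and the final error bound follow by direct substitution.

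The first hypothesis is immediate: by \eqref{eqn:def_linear_regression}, $\calR(\theta) = \tfrac{1}{2}(\theta-\theta^*)^\top\Sigma(\theta-\theta^*)$ has constant Hessian $\Sigma$, so \eqref{eqn:convexity_parameter} holds with $\tau_\ell = \lambda_{\min}(\Sigma)$ and $\tau_u = \lambda_{\max}(\Sigma)$. For the variance bound, I would write, for $v \defeq \theta - \theta^*$ and $y = \inprod{x}{\theta^*}+w$,
$$
\nabla\loss(\theta,(x,y)) - \nabla\calR(\theta) = \bigl(\inprod{x}{v}\,x - \Sigma v\bigr) - w\,x.
$$
Since $w$ is zero-mean and independent of $x$, the cross term vanishes in expectation, so
$$
\Exp\|\nabla\loss - \nabla\calR\|_2^2 = \Exp\|\inprod{x}{v}x - \Sigma v\|_2^2 + \sigma^2\,\Exp\|x\|_2^2.
$$
The second summand is straightforward: $\sigma^2\Exp\|x\|_2^2 = \sigma^2\trace{\Sigma} \le p\sigma^2\tau_u$, giving the constant component $\beta(P,\loss)\le p\sigma^2\tau_u$. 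For the first summand, I would bound $\Exp\|\inprod{x}{v}x - \Sigma v\|_2^2 \le \Exp[\inprod{x}{v}^2\|x\|_2^2]$ and apply Cauchy--Schwarz to split into $\Exp[\inprod{x}{v}^4]$ and $\Exp\|x\|_2^4$. The $4^{\text{th}}$-moment assumption \eqref{eqn:asp_bounded_moment}, applied to the unit vector $v/\|v\|_2$, yields $\Exp[\inprod{x}{v}^4]\le C_4\tau_u^2\|v\|_2^4$; decomposing $\|x\|_2^4 = (\sum_i x_i^2)^2$ and applying Cauchy--Schwarz to the coordinate $4^{\text{th}}$ moments (again via \eqref{eqn:asp_bounded_moment}) gives $\Exp\|x\|_2^4 \le C_4(\trace{\Sigma})^2 \le C_4 p^2\tau_u^2$. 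Combining these produces $\Exp\|\inprod{x}{v}x-\Sigma v\|_2^2 \lesssim (C_4+1)\,p\,\tau_u^2\|v\|_2^2$, so $\alpha(P,\loss)\le 2(C_4+1)p\tau_u^2$. Plugging these $\alpha$ and $\beta$ into Theorem \ref{thm:clipped_sgd} reproduces the stated $\gamma$, the clipping level $\lambda$, and the error bound \eqref{eqn:linear_regression_bound} (noting $\frac{1}{\tau_\ell}\sqrt{\beta/\,\cdot}=\frac{\sigma}{\tau_\ell}\sqrt{p\tau_u/\,\cdot}$).

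The main, though modest, technical obstacle is the variance computation for the multiplicative term $\inprod{x}{v}x$: the key point is that Cauchy--Schwarz plus \eqref{eqn:asp_bounded_moment} must extract a factor of $\|v\|_2^2 = \|\theta-\theta^*\|_2^2$ rather than a position-independent constant. This is precisely why the paper insists on the more nuanced condition \eqref{eqn:asp_variance_sg}: a uniform variance bound would cost an extra $R^2 = \sup_{\theta\in\Theta}\|\theta-\theta^*\|_2^2$ factor (as noted in Section~2), and would yield a strictly weaker guarantee than \eqref{eqn:linear_regression_bound}. Once the moment bookkeeping is done, the rest is a direct specialization of Theorem \ref{thm:clipped_sgd}.
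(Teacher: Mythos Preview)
Your proposal is correct and follows the same high-level route as the paper: identify $\tau_\ell,\tau_u,\alpha(P,\loss),\beta(P,\loss)$ for the squared loss and plug into Theorem~\ref{thm:clipped_sgd}. The only difference is in how the variance bound is obtained. The paper does not carry out the moment computation itself; it invokes Lemma~7 of \citet{prasad2018robust} to get an operator-norm bound $\|\mathrm{Cov}(\nabla\loss)\|_2 \le 2(C_4+1)\tau_u^2\|\theta-\theta^*\|_2^2 + \sigma^2\tau_u$ and then uses $\trace{\cdot}\le p\|\cdot\|_2$ to convert to $\alpha=2p(C_4+1)\tau_u^2$, $\beta=p\sigma^2\tau_u$. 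Your direct Cauchy--Schwarz argument on $\Exp[\langle x,v\rangle^2\|x\|_2^2]$ is self-contained and in fact yields the slightly better constant $\alpha\le C_4\,p\,\tau_u^2$ before you round up to match the stated $\gamma$; the paper's route through the operator norm is cleaner to state but loses a constant factor in the trace step. Either way the substitution into Theorem~\ref{thm:clipped_sgd} is identical.
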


\begin{figure*}[h]
\centering
      \subfigure
      %[\label{fig:mean_estimation_convergence}\small{Expected convergence curve.}]
      {\includegraphics[width=0.29\textwidth]{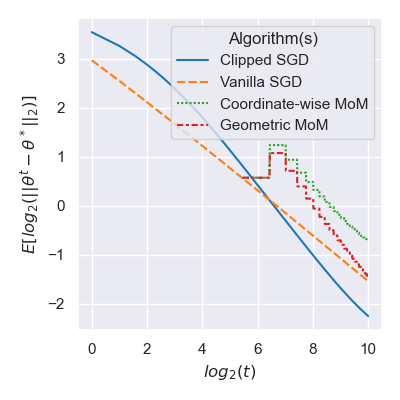}}\hfill
      \subfigure
      %[\label{fig:mean_estimation_quantile} \small{Last iteration quantile loss.}]
      {\includegraphics[width=0.29\textwidth]{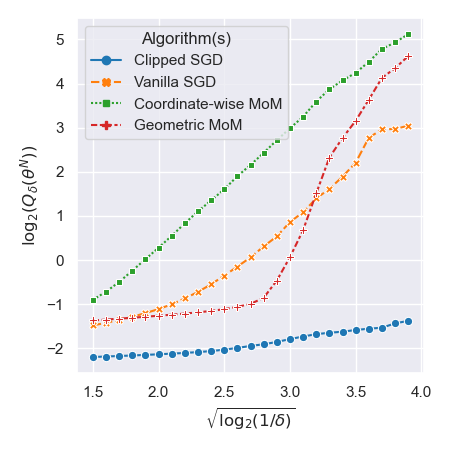}}\hfill
      \subfigure
      %[\label{fig:mean_estimation_lambda_vs_quantiles} \small{$\log_2(Q_\delta(\theta^N))$ v.s. $\lambda$.}]
      {\includegraphics[width=0.29\textwidth]{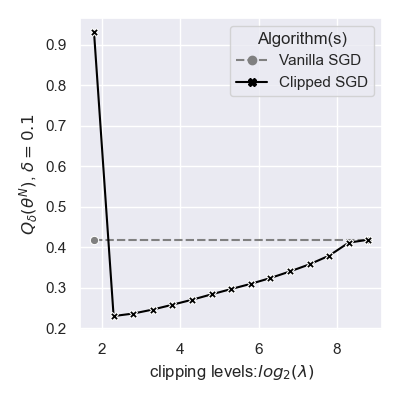}}%\hfill
      %\subfigure
      %[\label{fig:mean_estimation_initial_points} \small{Quantile loss v.s. different initial points}]
      %{\includegraphics[width=0.24\textwidth]{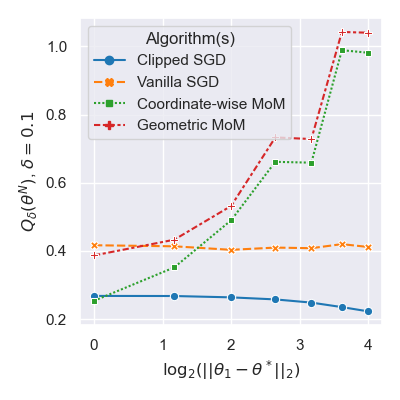}}
\caption{Results for robust mean estimation for $N=1024$ and $p=256$. Smaller $Q_{\delta}(\widehat{\theta})$ is better.}
\label{fig:mean_estimation}
\end{figure*}

%\section{Experiments}
\section{EXPERIMENTS}
\label{sec:experiment}
To corroborate our theoretical findings, we conduct experiments on mean estimation and linear regression to study the performance of \texttt{clipped-SGD}. Another experiment on logistic regression is presented in the Section \ref{sec:logistic_regression} in the Appendix.

\paragraph{Methods.}
We compare \texttt{clipped-SGD} with vanilla SGD, which takes stochastic gradient to update without a clipping function. 
For linear regression, we also compare \texttt{clipped-SGD} with Lasso regression~\citep{nguyen2012robust}, Ridge regression and Huber regression ~\citep{sun2020adaptive,huber1973robust}.
All methods use the same step size  $\frac{1}{t+\gamma}$ at step $t$.

To simulate heavy-tailed samples, we draw from a scaled standardized Pareto distribution with tail-parameter $\beta$ for which the $k^{th}$ moment only exists for $k < \beta$. The smaller the $\beta$, the more heavy-tailed the distribution. Due to space constraints, we defer other results with different setups to the Appendix.

\paragraph{Choice of Hyper-parameter.}
Note that in Theorem \ref{thm:clipped_sgd}, the clipping level $\lambda$ depends on the initialization error, i.e. $\norm{\theta^1 - \theta^*}{2}$, which is not known in advance. Moreover, we found that the suggested value of $\gamma$ has a too large of a constant factor and substantially decreases the convergence rate especially for small $N$. However, standard hyper-parameter selection techniques
such as cross-validation and hold-out validation require storing the entire validation set, which are not suitable for streaming settings.

Consequently, we use a sequential validation method ~\citep{hyndman2018forecasting}, where we do "training" and "evaluation"
on the last $q$ percent of the data. Formally, given candidate solutions $\{ \hat{\theta}_1,\cdots, \hat{\theta}_m \}$ trained from samples $\z_1, \cdots, \z_N$, let $\theta^t_i$ be the estimated parameter for candidate $i$ at iteration $t$. Then we choose the candidate that minimizes the empirical mean of the risk of the last $q$ percents of samples, i.e. 
\begin{equation}
\small
\label{eqn:hyperparameter_selection}
j^* = \argmin_{1 \leq j \leq m} \frac{1}{qN} \sum_{t=(1-q)N+1}^{N} \loss(\widehat{\theta}_j^t, \z_t)
\end{equation}
Specifically, at the last $q$ percents of iterations, when a sample $\z_t$ comes, 
we first calculate the risk induced by $\theta^t_j$ and $\z_t$ and then use this sample to update the parameter $\theta^t_j$. 
Therefore, instead of storing the entire validation set, we only need $O(mp)$ space to store the candidate parameters $\{ \hat{\theta}_1,\cdots, \hat{\theta}_m \}$ and the validation losses of the candidates. 

In our experiment, we choose $q=0.2$ to tune the delay parameter $\gamma$, the clipping level $\lambda$, regularization factors for Lasso, Ridge and Huber regression, and the step size for streaming coordinate-wise/geometric median-of-means.

\begin{figure*}[h]
\centering
      \subfigure[\label{fig:linear_regression_convergence} \small{Expected convergence curve. }]{\includegraphics[width=0.27\textwidth]{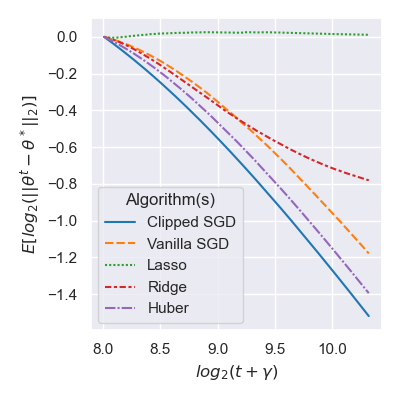} } 
      \hfill
      \subfigure[\label{fig:linear_regression_quantile} \small{Last iteration quantile loss. }]{\includegraphics[width=0.27\textwidth]{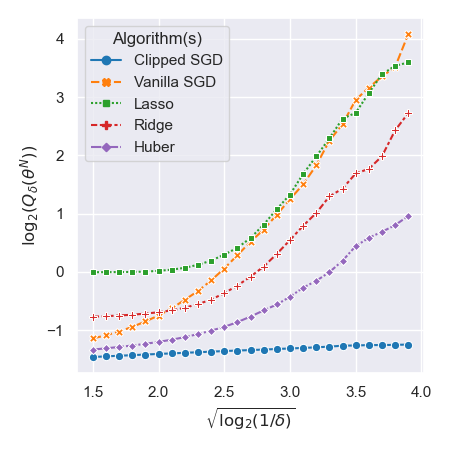}}
      \hfill
      \subfigure[\label{fig:linear_regression_lambda_vs_quantiles}\small{$\log_2(Q_\delta(\theta^N))$ v.s. $\lambda$.}]{\includegraphics[width=0.27\textwidth]{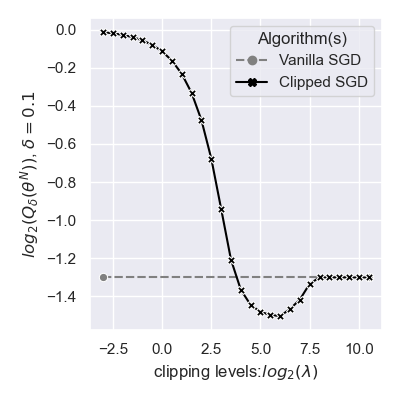}}

      \subfigure[\label{fig:linear_regression_gamma_vs_quantiles}\small{Expected convergence curve for different $\gamma$. }]{\includegraphics[width=0.27\textwidth]{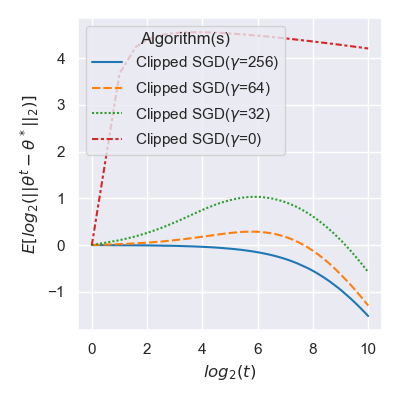} }\hfill
      \subfigure[\label{fig:linear_regression_sigma_vs_quantiles}\small{$\log_2(Q_\delta(\theta^N))$ v.s. $\sigma^2$ for $\delta = 0.01$. }]{\includegraphics[width=0.27\textwidth]{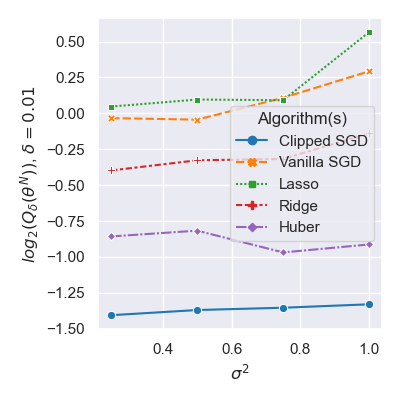}} \hfill
      \subfigure[\label{fig:linear_regression_p_vs_quantiles}\small{$\log_2(Q_\delta(\theta^N))$ v.s. $p$ for $\delta = 0.01$ and $N = 1024$.}]{\includegraphics[width=0.27\textwidth]{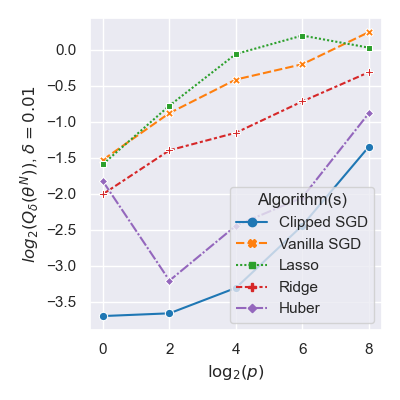}}
\caption{Results for robust linear regression for $N=1024, p = 256$, $\gamma=256$ and $\sigma^2 = 0.75$.}
\label{fig:linear_regression}
\end{figure*}

\paragraph{Metric.}  
For any estimator $\widehat{\theta}$, we use the $\ell_2$ loss $\ell(\widehat{\theta}) = \norm{\widehat{\theta} - \theta^*}{2}$ as our primary metric. To measure the tail performance of estimators (not just their expected loss), we also use $Q_{\delta}(\widehat{\theta}) = \inf(\alpha : \Pr(\ell(\widehat{\theta}) > \alpha) \leq \delta)$, which is the bound on the loss that we wish to hold with probability $1-\delta$. This could also be viewed as the $100(1 - \delta)$ percentile of the loss (e.g. if $\delta = 0.05$, then this would be the 95th percentile of the loss). 

\subsection{Synthetic Experiments: Mean estimation}
\label{sec:exp_mean_estimation}

\paragraph{Setup.}
We obtain samples $\{\z_i\}_{i=1}^{N} \subseteq \real^p$ from a scaled standardized Pareto distribution with tail-parameter $\beta = 2.1$. We  initialize the mean parameter estimate as $\theta^1 = [1,\cdots, 1] \in \real^p$,
$\gamma = 0$, and fix the step size to $\eta_t = 1/t$ for \texttt{clipped-SGD} and Vanilla SGD. We note that in this setting, it can be seen that $\widehat{\theta}^t_{SGD} = \sum_{i=1}^t z_i/t$ is the empirical mean over $t$ samples by some simple algebra. 

We also compare our approach with streaming coordinate-wise/geometric median-of-means(MoM), where we use the number of buckets $b = \ceil{\log(1/\delta)}$ with $\delta = 0.05$  as in the batch setting~\citep{lugosi2019mean} and a step size of $\eta_t=\frac{c}{t_b}$ \citep{cardot2017online}, where $c$ is a constant selected by the validation method and $t_b$ is a counter for bucketed means. Specifically, we first wait for $\floor{\frac{n}{b}}$ points and calculate their running mean as a initial point. Then we calculate bucketed means for the remaining points in the same way and use them to update the coordinate-wise/geometric median with an averaged stochastic gradient algorithm. 
Each metric is reported over 50000 trials. See more implementation details in Appendix \ref{sec:exp_details}.

\paragraph{Results.}
Figure \ref{fig:mean_estimation} shows the performance of all algorithms. In the first panel, we can see \texttt{clipped-SGD} has \emph{expected} error that converges as $O(1/N)$ as our theory indicates. Also, the 99.9 percent quantile loss of Vanilla SGD is over 10 times worse than the expected error as in the second panel while the tail performance of \texttt{clipped-SGD} is similar to its expected performance. Moreover, streaming coordinate-wise/geometric median-of-means have a much worse expected performance and their tail performance is not well-controlled. 

In the third panel, we can see that \texttt{clipped-SGD} performs better across different $\lambda$. When the clipping level $\lambda$ is too small, it takes too small a step size so that the final error is high. While if we use a very large clipping level, the performance of \texttt{clipped-SGD} is similar to Vanilla SGD.

%In the fourth panel, we set initial point $\theta^1 = [x,x,\cdots,x]$, where $x \in [\frac{1}{16}, 1]$. For each initial point, we reselect hyperparameters for each algorithm via our sequential validation approach. From the figure,MoM-based approaches perform much worse as the initial point becomes further away from the true mean while Clipped/Vanilla SGD have similar performance across all initial points.

\subsection{Synthetic Experiments: Linear Regression}
\label{sec:exp_linear_regression}

\paragraph{Setup}
We generate covariate  $x \in \real^p$ from an scaled standardized Pareto distribution with tail-parameter $\beta = 4.1$. The true regression parameter is set to be $\theta^* = [\frac{1}{\sqrt{p}},\cdots,\frac{1}{\sqrt{p}}] \in \real^p$ and the initial parameter is set to $\theta^1 = [0,0,\cdots,0]$. The response is generated by $y = \inprod{x}{\theta^*} + w$, where $w$ is sampled from scaled rescaled Pareto distribution with a zero mean, a variance $\sigma^2$ and a tail-parameter $\beta = 2.1$. We select $\tau_\ell = \lambda_{min}(\Sigma) = 1$. Each metric is reported over 50000 trials. 

\paragraph{Results}
We note that in this experiment, our hyperparameter selection technique yields $\gamma=256$. Figure \ref{fig:linear_regression_convergence}, \ref{fig:linear_regression_quantile} show that \texttt{clipped-SGD} performs the best among all baselines in terms of average error and quantile errors across different probability levels $\delta$. Also, $\sqrt{\log_2(1/\delta)}$ has linear relation to $Q_\delta(\widehat{\theta})$ as Corollary \ref{cor:heavy_tailed_linear_regression} indicates. In Figure \ref{fig:linear_regression_lambda_vs_quantiles}, we plot quantile loss against different clipping levels. It shows a similar trend to the mean estimation.

Next, in Figure \ref{fig:linear_regression_gamma_vs_quantiles}, we plot the averaged convergence curve for different delay parameters $\gamma$. This shows that it is necessary to use a large enough $\gamma$ to prevent it from diverging. This phenomenon can also be observed for different baseline models. Figure \ref{fig:linear_regression_sigma_vs_quantiles}, \ref{fig:linear_regression_p_vs_quantiles} shows that \texttt{clipped-SGD} performs the best across different noise level $\sigma^2$ and different dimension $p$.

\section{CONCLUSION AND FUTURE DIRECTION}
\label{sec:conclusion}

In this paper, we provide a streaming algorithm for statistical estimation under heavy-tailed distribution. In particular, we close the gap in theory of clipped stochastic gradient descent with heavy-tailed noise. We show that $\texttt{clipped-SGD}$ can not only be used in parameter estimation tasks, such as mean estimation and linear regression, but also a more general stochastic optimization problem with heavy-tailed noise. There are several avenues for future work, including a better understanding of $\texttt{clipped-SGD}$ under different distributions, such as having higher bounded moments or symmetric distributions, where the clipping technique incurs less bias. Finally, it would also be of interest to extend our results to different robustness setting such as Huber's $\epsilon$-contamination model~\citep{huber1973robust}, where there are a constant portion of arbitrary outliers in observed samples.

\subsubsection*{Acknowledgements}
We acknowledge the support of ARL, NSF via OAC-1934584, and DARPA via HR00112020006.

\bibliographystyle{unsrtnat} 
\bibliography{ref.bib}

\begin{thebibliography}{54}
\providecommand{\natexlab}[1]{#1}
\providecommand{\url}[1]{\texttt{#1}}
\expandafter\ifx\csname urlstyle\endcsname\relax
  \providecommand{\doi}[1]{doi: #1}\else
  \providecommand{\doi}{doi: \begingroup \urlstyle{rm}\Url}\fi

\bibitem[Abadi et~al.(2016)Abadi, Chu, Goodfellow, McMahan, Mironov, Talwar,
  and Zhang]{abadi2016deep}
Martin Abadi, Andy Chu, Ian Goodfellow, H~Brendan McMahan, Ilya Mironov, Kunal
  Talwar, and Li~Zhang.
\newblock Deep learning with differential privacy.
\newblock In \emph{Proceedings of the 2016 ACM SIGSAC conference on computer
  and communications security}, pages 308--318, 2016.

\bibitem[Bubeck(2014)]{bubeck2014convex}
S{\'e}bastien Bubeck.
\newblock Convex optimization: Algorithms and complexity.
\newblock \emph{arXiv preprint arXiv:1405.4980}, 2014.

\bibitem[Cardot et~al.(2013)Cardot, C{\'e}nac, and Zitt]{cardot2013efficient}
Herv{\'e} Cardot, Peggy C{\'e}nac, and Pierre-Andr{\'e} Zitt.
\newblock Efficient and fast estimation of the geometric median in hilbert
  spaces with an averaged stochastic gradient algorithm.
\newblock \emph{Bernoulli}, 19\penalty0 (1):\penalty0 18--43, 2013.

\bibitem[Cardot et~al.(2017)Cardot, C{\'e}nac, Godichon-Baggioni,
  et~al.]{cardot2017online}
Herv{\'e} Cardot, Peggy C{\'e}nac, Antoine Godichon-Baggioni, et~al.
\newblock Online estimation of the geometric median in hilbert spaces:
  Nonasymptotic confidence balls.
\newblock \emph{The Annals of Statistics}, 45\penalty0 (2):\penalty0 591--614,
  2017.

\bibitem[Catoni(2012)]{catoni2012challenging}
Olivier Catoni.
\newblock Challenging the empirical mean and empirical variance: a deviation
  study.
\newblock In \emph{Annales de l'IHP Probabilit{\'e}s et statistiques},
  volume~48, pages 1148--1185, 2012.

\bibitem[Chen et~al.(2020)Chen, Wu, and Hong]{chen2020understanding}
Xiangyi Chen, Steven~Z Wu, and Mingyi Hong.
\newblock Understanding gradient clipping in private sgd: A geometric
  perspective.
\newblock \emph{Advances in Neural Information Processing Systems}, 33, 2020.

\bibitem[Cheng et~al.(2020)Cheng, Diakonikolas, Ge, and
  Soltanolkotabi]{cheng2020high}
Yu~Cheng, Ilias Diakonikolas, Rong Ge, and Mahdi Soltanolkotabi.
\newblock High-dimensional robust mean estimation via gradient descent.
\newblock In \emph{International Conference on Machine Learning}, pages
  1768--1778. PMLR, 2020.

\bibitem[Cherapanamjeri et~al.(2019)Cherapanamjeri, Flammarion, and
  Bartlett]{cherapanamjeri2019fast}
Yeshwanth Cherapanamjeri, Nicolas Flammarion, and Peter~L Bartlett.
\newblock Fast mean estimation with sub-gaussian rates.
\newblock In \emph{Conference on Learning Theory}, pages 786--806. PMLR, 2019.

\bibitem[Davis et~al.(2021)Davis, Drusvyatskiy, Xiao, and Zhang]{davis2021low}
Damek Davis, Dmitriy Drusvyatskiy, Lin Xiao, and Junyu Zhang.
\newblock From low probability to high confidence in stochastic convex
  optimization.
\newblock \emph{Journal of Machine Learning Research}, 22\penalty0
  (49):\penalty0 1--38, 2021.

\bibitem[Depersin and Lecu{\'e}(2019)]{depersin2019robust}
Jules Depersin and Guillaume Lecu{\'e}.
\newblock Robust subgaussian estimation of a mean vector in nearly linear time.
\newblock \emph{Annals of Statistics}, 2019.

\bibitem[Diakonikolas et~al.(2017)Diakonikolas, Kamath, Kane, Li, Moitra, and
  Stewart]{diakonikolas2017being}
Ilias Diakonikolas, Gautam Kamath, Daniel~M Kane, Jerry Li, Ankur Moitra, and
  Alistair Stewart.
\newblock Being robust (in high dimensions) can be practical.
\newblock In \emph{International Conference on Machine Learning}, pages
  999--1008. PMLR, 2017.

\bibitem[Diakonikolas et~al.(2019)Diakonikolas, Kamath, Kane, Li, Moitra, and
  Stewart]{diakonikolas2019robust}
Ilias Diakonikolas, Gautam Kamath, Daniel Kane, Jerry Li, Ankur Moitra, and
  Alistair Stewart.
\newblock Robust estimators in high-dimensions without the computational
  intractability.
\newblock \emph{SIAM Journal on Computing}, 48\penalty0 (2):\penalty0 742--864,
  2019.

\bibitem[Dong et~al.(2019)Dong, Hopkins, and Li]{dong2019quantum}
Yihe Dong, Samuel Hopkins, and Jerry Li.
\newblock Quantum entropy scoring for fast robust mean estimation and improved
  outlier detection.
\newblock \emph{Advances in Neural Information Processing Systems}, 32, 2019.

\bibitem[Duchi et~al.(2011)Duchi, Hazan, and Singer]{duchi2011adaptive}
John Duchi, Elad Hazan, and Yoram Singer.
\newblock Adaptive subgradient methods for online learning and stochastic
  optimization.
\newblock \emph{Journal of machine learning research}, 12\penalty0 (7), 2011.

\bibitem[Dvurechensky and Gasnikov(2016)]{dvurechensky2016stochastic}
Pavel Dvurechensky and Alexander Gasnikov.
\newblock Stochastic intermediate gradient method for convex problems with
  stochastic inexact oracle.
\newblock \emph{Journal of Optimization Theory and Applications}, 171\penalty0
  (1):\penalty0 121--145, 2016.

\bibitem[Fan et~al.(2017)Fan, Li, and Wang]{fan2017estimation}
Jianqing Fan, Quefeng Li, and Yuyan Wang.
\newblock Estimation of high dimensional mean regression in the absence of
  symmetry and light tail assumptions.
\newblock \emph{Journal of the Royal Statistical Society. Series B, Statistical
  methodology}, 79\penalty0 (1):\penalty0 247, 2017.

\bibitem[Feldman and Shavitt(2007)]{feldman2007optimal}
Dima Feldman and Yuval Shavitt.
\newblock An optimal median calculation algorithm for estimating internet link
  delays from active measurements.
\newblock In \emph{2007 Workshop on End-to-End Monitoring Techniques and
  Services}, pages 1--7. IEEE, 2007.

\bibitem[Freedman(1975)]{freedman1975tail}
David~A Freedman.
\newblock On tail probabilities for martingales.
\newblock \emph{the Annals of Probability}, pages 100--118, 1975.

\bibitem[Fritsch et~al.(2015)Fritsch, Da~Mota, Loth, Varoquaux, Banaschewski,
  Barker, Bokde, Br{\"u}hl, Butzek, Conrod, et~al.]{fritsch2015robust}
Virgile Fritsch, Benoit Da~Mota, Eva Loth, Ga{\"e}l Varoquaux, Tobias
  Banaschewski, Gareth~J Barker, Arun~LW Bokde, R{\"u}diger Br{\"u}hl, Brigitte
  Butzek, Patricia Conrod, et~al.
\newblock Robust regression for large-scale neuroimaging studies.
\newblock \emph{Neuroimage}, 111:\penalty0 431--441, 2015.

\bibitem[Garg et~al.(2021)Garg, Zhanson, Parisotto, Prasad, Kolter, Lipton,
  Balakrishnan, Salakhutdinov, and Ravikumar]{garg2021proximal}
Saurabh Garg, Joshua Zhanson, Emilio Parisotto, Adarsh Prasad, Zico Kolter,
  Zachary Lipton, Sivaraman Balakrishnan, Ruslan Salakhutdinov, and Pradeep
  Ravikumar.
\newblock On proximal policy optimization’s heavy-tailed gradients.
\newblock In \emph{International Conference on Machine Learning}, pages
  3610--3619. PMLR, 2021.

\bibitem[Ghadimi and Lan(2013)]{ghadimi2013optimal}
Saeed Ghadimi and Guanghui Lan.
\newblock Optimal stochastic approximation algorithms for strongly convex
  stochastic composite optimization, ii: shrinking procedures and optimal
  algorithms.
\newblock \emph{SIAM Journal on Optimization}, 23\penalty0 (4):\penalty0
  2061--2089, 2013.

\bibitem[Gorbunov et~al.(2018)Gorbunov, Dvurechensky, and
  Gasnikov]{gorbunov2018accelerated}
Eduard Gorbunov, Pavel Dvurechensky, and Alexander Gasnikov.
\newblock An accelerated method for derivative-free smooth stochastic convex
  optimization.
\newblock \emph{arXiv preprint arXiv:1802.09022}, 2018.

\bibitem[Gorbunov et~al.(2019)Gorbunov, Dvinskikh, and
  Gasnikov]{gorbunov2019optimal}
Eduard Gorbunov, Darina Dvinskikh, and Alexander Gasnikov.
\newblock Optimal decentralized distributed algorithms for stochastic convex
  optimization.
\newblock \emph{arXiv preprint arXiv:1911.07363}, 2019.

\bibitem[Gorbunov et~al.(2020)Gorbunov, Danilova, and
  Gasnikov]{gorbunov2020stochastic}
Eduard Gorbunov, Marina Danilova, and Alexander Gasnikov.
\newblock Stochastic optimization with heavy-tailed noise via accelerated
  gradient clipping.
\newblock \emph{arXiv preprint arXiv:2005.10785}, 2020.

\bibitem[Gower et~al.(2019)Gower, Loizou, Qian, Sailanbayev, Shulgin, and
  Richt{\'a}rik]{gower2019sgd}
Robert~Mansel Gower, Nicolas Loizou, Xun Qian, Alibek Sailanbayev, Egor
  Shulgin, and Peter Richt{\'a}rik.
\newblock Sgd: General analysis and improved rates.
\newblock In \emph{International Conference on Machine Learning}, pages
  5200--5209. PMLR, 2019.

\bibitem[Gulrajani et~al.(2017)Gulrajani, Ahmed, Arjovsky, Dumoulin, and
  Courville]{gulrajani2017improved}
Ishaan Gulrajani, Faruk Ahmed, Martin Arjovsky, Vincent Dumoulin, and Aaron
  Courville.
\newblock Improved training of wasserstein gans.
\newblock \emph{arXiv preprint arXiv:1704.00028}, 2017.

\bibitem[Harvey et~al.(2019{\natexlab{a}})Harvey, Liaw, Plan, and
  Randhawa]{harvey2019tight}
Nicholas~JA Harvey, Christopher Liaw, Yaniv Plan, and Sikander Randhawa.
\newblock Tight analyses for non-smooth stochastic gradient descent.
\newblock In \emph{Conference on Learning Theory}, pages 1579--1613. PMLR,
  2019{\natexlab{a}}.

\bibitem[Harvey et~al.(2019{\natexlab{b}})Harvey, Liaw, and
  Randhawa]{harvey2019simple}
Nicholas~JA Harvey, Christopher Liaw, and Sikander Randhawa.
\newblock Simple and optimal high-probability bounds for strongly-convex
  stochastic gradient descent.
\newblock \emph{arXiv preprint arXiv:1909.00843}, 2019{\natexlab{b}}.

\bibitem[Hazan and Kale(2014)]{hazan2014beyond}
Elad Hazan and Satyen Kale.
\newblock Beyond the regret minimization barrier: optimal algorithms for
  stochastic strongly-convex optimization.
\newblock \emph{The Journal of Machine Learning Research}, 15\penalty0
  (1):\penalty0 2489--2512, 2014.

\bibitem[Hopkins et~al.(2020)Hopkins, Li, and Zhang]{hopkins2020robust}
Sam Hopkins, Jerry Li, and Fred Zhang.
\newblock Robust and heavy-tailed mean estimation made simple, via regret
  minimization.
\newblock \emph{Advances in Neural Information Processing Systems},
  33:\penalty0 11902--11912, 2020.

\bibitem[Hopkins(2020)]{hopkins2020mean}
Samuel~B Hopkins.
\newblock Mean estimation with sub-gaussian rates in polynomial time.
\newblock \emph{Annals of Statistics}, 48\penalty0 (2):\penalty0 1193--1213,
  2020.

\bibitem[Hsu and Sabato(2016)]{hsu2016loss}
Daniel Hsu and Sivan Sabato.
\newblock Loss minimization and parameter estimation with heavy tails.
\newblock \emph{The Journal of Machine Learning Research}, 17\penalty0
  (1):\penalty0 543--582, 2016.

\bibitem[Huber et~al.(1973)]{huber1973robust}
Peter~J Huber et~al.
\newblock Robust regression: asymptotics, conjectures and monte carlo.
\newblock \emph{Annals of statistics}, 1\penalty0 (5):\penalty0 799--821, 1973.

\bibitem[Hyndman and Athanasopoulos(2018)]{hyndman2018forecasting}
Rob~J Hyndman and George Athanasopoulos.
\newblock \emph{Forecasting: principles and practice}.
\newblock OTexts, 2018.

\bibitem[Kingma and Ba(2014)]{kingma2014adam}
Diederik~P Kingma and Jimmy Ba.
\newblock Adam: A method for stochastic optimization.
\newblock \emph{arXiv preprint arXiv:1412.6980}, 2014.

\bibitem[Lei et~al.(2020)Lei, Luh, Venkat, and Zhang]{lei2020fast}
Zhixian Lei, Kyle Luh, Prayaag Venkat, and Fred Zhang.
\newblock A fast spectral algorithm for mean estimation with sub-gaussian
  rates.
\newblock In \emph{Conference on Learning Theory}, pages 2598--2612. PMLR,
  2020.

\bibitem[Li et~al.(2020)Li, Soltanolkotabi, and Oymak]{li2020gradient}
Mingchen Li, Mahdi Soltanolkotabi, and Samet Oymak.
\newblock Gradient descent with early stopping is provably robust to label
  noise for overparameterized neural networks.
\newblock In \emph{International Conference on Artificial Intelligence and
  Statistics}, pages 4313--4324. PMLR, 2020.

\bibitem[Li and Orabona(2019)]{li2019convergence}
Xiaoyu Li and Francesco Orabona.
\newblock On the convergence of stochastic gradient descent with adaptive
  stepsizes.
\newblock In \emph{The 22nd International Conference on Artificial Intelligence
  and Statistics}, pages 983--992. PMLR, 2019.

\bibitem[Lugosi and Mendelson(2019{\natexlab{a}})]{lugosi2019mean}
G{\'a}bor Lugosi and Shahar Mendelson.
\newblock Mean estimation and regression under heavy-tailed distributions: A
  survey.
\newblock \emph{Foundations of Computational Mathematics}, 19\penalty0
  (5):\penalty0 1145--1190, 2019{\natexlab{a}}.

\bibitem[Lugosi and Mendelson(2019{\natexlab{b}})]{lugosi2019risk}
Gabor Lugosi and Shahar Mendelson.
\newblock Risk minimization by median-of-means tournaments.
\newblock \emph{Journal of the European Mathematical Society}, 22\penalty0
  (3):\penalty0 925--965, 2019{\natexlab{b}}.

\bibitem[McMahan et~al.(2013)McMahan, Holt, Sculley, Young, Ebner, Grady, Nie,
  Phillips, Davydov, Golovin, et~al.]{mcmahan2013ad}
H~Brendan McMahan, Gary Holt, David Sculley, Michael Young, Dietmar Ebner,
  Julian Grady, Lan Nie, Todd Phillips, Eugene Davydov, Daniel Golovin, et~al.
\newblock Ad click prediction: a view from the trenches.
\newblock In \emph{Proceedings of the 19th ACM SIGKDD international conference
  on Knowledge discovery and data mining}, pages 1222--1230, 2013.

\bibitem[Minsker et~al.(2015)]{minsker2015geometric}
Stanislav Minsker et~al.
\newblock Geometric median and robust estimation in banach spaces.
\newblock \emph{Bernoulli}, 21\penalty0 (4):\penalty0 2308--2335, 2015.

\bibitem[Nazin et~al.(2019)Nazin, Nemirovsky, Tsybakov, and
  Juditsky]{nazin2019algorithms}
Alexander~V Nazin, Arkadi~S Nemirovsky, Alexandre~B Tsybakov, and Anatoli~B
  Juditsky.
\newblock Algorithms of robust stochastic optimization based on mirror descent
  method.
\newblock \emph{Automation and Remote Control}, 80\penalty0 (9):\penalty0
  1607--1627, 2019.

\bibitem[Nguyen and Tran(2012)]{nguyen2012robust}
Nam~H Nguyen and Trac~D Tran.
\newblock Robust lasso with missing and grossly corrupted observations.
\newblock \emph{IEEE transactions on information theory}, 59\penalty0
  (4):\penalty0 2036--2058, 2012.

\bibitem[Pascanu et~al.(2012)Pascanu, Mikolov, and
  Bengio]{pascanu2012understanding}
Razvan Pascanu, Tomas Mikolov, and Yoshua Bengio.
\newblock Understanding the exploding gradient problem.
\newblock \emph{CoRR, abs/1211.5063}, 2\penalty0 (417):\penalty0 1, 2012.

\bibitem[Pensia et~al.(2020)Pensia, Jog, and Loh]{pensia2020robust}
Ankit Pensia, Varun Jog, and Po-Ling Loh.
\newblock Robust regression with covariate filtering: Heavy tails and
  adversarial contamination.
\newblock \emph{arXiv preprint arXiv:2009.12976}, 2020.

\bibitem[Prasad et~al.(2020)Prasad, Suggala, Balakrishnan, and
  Ravikumar]{prasad2018robust}
Adarsh Prasad, Arun~Sai Suggala, Sivaraman Balakrishnan, and Pradeep Ravikumar.
\newblock Robust estimation via robust gradient estimation.
\newblock \emph{Journal of the Royal Statistical Society Series B (JRSSB)},
  2020.

\bibitem[Rakhlin et~al.(2011)Rakhlin, Shamir, and Sridharan]{rakhlin2011making}
Alexander Rakhlin, Ohad Shamir, and Karthik Sridharan.
\newblock Making gradient descent optimal for strongly convex stochastic
  optimization.
\newblock \emph{arXiv preprint arXiv:1109.5647}, 2011.

\bibitem[Suggala et~al.(2019)Suggala, Bhatia, Ravikumar, and
  Jain]{suggala2019adaptive}
Arun~Sai Suggala, Kush Bhatia, Pradeep Ravikumar, and Prateek Jain.
\newblock Adaptive hard thresholding for near-optimal consistent robust
  regression.
\newblock In \emph{Conference on Learning Theory}, pages 2892--2897. PMLR,
  2019.

\bibitem[Sun et~al.(2020)Sun, Zhou, and Fan]{sun2020adaptive}
Qiang Sun, Wen-Xin Zhou, and Jianqing Fan.
\newblock Adaptive huber regression.
\newblock \emph{Journal of the American Statistical Association}, 115\penalty0
  (529):\penalty0 254--265, 2020.

\bibitem[Victor et~al.(1999)]{victor1999general}
H~Victor et~al.
\newblock A general class of exponential inequalities for martingales and
  ratios.
\newblock \emph{Annals of probability}, 27\penalty0 (1):\penalty0 537--564,
  1999.

\bibitem[You et~al.(2017)You, Gitman, and Ginsburg]{you2017scaling}
Yang You, Igor Gitman, and Boris Ginsburg.
\newblock Scaling sgd batch size to 32k for imagenet training.
\newblock \emph{arXiv preprint arXiv:1708.03888}, 6:\penalty0 12, 2017.

\bibitem[Zhang et~al.(2019)Zhang, He, Sra, and Jadbabaie]{zhang2019gradient}
Jingzhao Zhang, Tianxing He, Suvrit Sra, and Ali Jadbabaie.
\newblock Why gradient clipping accelerates training: A theoretical
  justification for adaptivity.
\newblock \emph{arXiv preprint arXiv:1905.11881}, 2019.

\bibitem[Zhang et~al.(2020)Zhang, Karimireddy, Veit, Kim, Reddi, Kumar, and
  Sra]{zhang2019adaptive}
Jingzhao Zhang, Sai~Praneeth Karimireddy, Andreas Veit, Seungyeon Kim, Sashank
  Reddi, Sanjiv Kumar, and Suvrit Sra.
\newblock Why are adaptive methods good for attention models?
\newblock \emph{Advances in Neural Information Processing Systems},
  33:\penalty0 15383--15393, 2020.

\end{thebibliography}

%%%%%%%%%%%%%%%%%%%%%%%%%%%%%%%%%%%
%%%%%% SUPPLEMENT (OPTIONAL) %%%%%%
%%%%%%%%%%%%%%%%%%%%%%%%%%%%%%%%%%%

\clearpage
\appendix

\thispagestyle{empty}

% For one-column format, uncomment the following:
\onecolumn \makesupplementtitle
% For two-column format, uncomment the following:
%\twocolumn[ \makesupplementtitle ]

\appendix
\section{Organization}
The Appendices contain additional technical content and are organized as follows. In Appendix \ref{sec:detailed_related_work}, we provide additional details for related work, which contain a detailed comparison of previous work on stochastic optimization. In Appendix \ref{sec:exp_details}, we detail hyperparameters used for experiments in Section \ref{sec:experiment}. In Appendix \ref{sec:extra_exp}, we present supplementary experimental results for different setups for heavy-tailed mean estimation and linear regression. Additionally, we show a synthetic experiment on logistic regression.  Finally, in Appendix  \ref{sec:proof_clipped_sgd} and \ref{sec:proof_corollaries}, we give the proofs for Theorem \ref{thm:clipped_sgd} and corollaries respectively.

\section{Related work : Additional details}
\label{sec:detailed_related_work}

\paragraph{Heavy-tailed stochastic optimization.}
In this paragraph, we present a detailed comparison of existing results of stochastic optimization. In Table \ref{tab:sample_complexity_comparison}, we compare existing high probability bounds of stochastic optimization for strongly convex and smooth objectives. 

Since our work focus on large-scale 
setting (where we need to access data in a streaming fashion), we assume the number of samples $N$ is large so that the required $\epsilon$ is small. In such setting, $O(\frac{1}{\epsilon})$ is the dominating term and the error is driven by the stochastic noise term $\sigma^2$. If ignoring the difference in logarithmic factors and assuming $\tau_u/\tau_\ell$ is small, all methods for heavy-tailed noise in Table \ref{tab:sample_complexity_comparison} achieve $O(\frac{\sigma^2 }{\tau_\ell \epsilon})\log(\frac{1}{\delta})$ and are comparable to algorithms derived under the sub-Gaussian noise assumption. 

However, we can see that all of the existing methods require $O(\frac{1}{\epsilon})$ batch size except ours. Their batch sizes are not constants because they use a constant step size throughout their training process. Although they can achieve linear convergence rates for initialization error $r_0$, they should use \emph{an exponential growing batch size} to reduce variance induced by gradient noise. Additionally, in large scale setting where the noise term is the dominating term, the linear convergence of initial error is not important. 
On the contrary, we choose a $O(\frac{1}{T})$ step size, which is widely used in stochastic optimization for strongly convex objective~\citep{rakhlin2011making, harvey2019simple}. Our proposed \texttt{clipped-SGD} can therefore enjoy $O(\frac{1}{T})$ convergence rate while using a constant batch size.

Finally, our analysis improves the dependency on the confidence level term $\log(\frac{1}{\delta})$: it does not have extra logarithmic terms and does not depend on $\epsilon$.
Although our bound has a worse dependency on the condition number $\tau_u/\tau_\ell$, we argue that our bound has an extra $\tau_u/\tau_\ell$ term because our bound is derived under the square error, i.e. $\norm{\theta^t - \theta^*}{2}^2$ instead of the difference between objective values $\calR(\theta^t) - \calR(\theta^*)$. As a result, we believe the dependency on $\tau_u/\tau_\ell$ of our bounds can be improved by slightly revising Lemma \ref{lm:strongly_convex_expansion}.

\paragraph{Gradient clipping.}
Gradient clipping is a well-known optimization technique for both convex/non-convex optimization problems ~\citep{gulrajani2017improved,you2017scaling,pascanu2012understanding}. It has been shown to accelerate neural network training ~\citep{zhang2019gradient}, stablize the policy gradient algorithms ~\citep{garg2021proximal} and design different private optimization algorithms ~\citep{chen2020understanding,abadi2016deep}. Gradient clipping has also been shown to be robust to label noise ~\citep{li2020gradient}.

\newpage

\begin{table}[h]
    \centering
    \resizebox{\columnwidth}{!}{\begin{tabular}{|c|c|c|}
    \toprule
    Method & Sample complexity & Batch size \\
    \midrule
    \multicolumn{3}{|c|}{Sub-Gaussian noise} \\
    \toprule
    \begin{tabular}{@{}c@{}}
    \texttt{SIGMA2} \\         ~\citep{dvurechensky2016stochastic} 
    \end{tabular}
    & $O\left( \max \left( \frac{\tau_u}{\tau_\ell } \log(\frac{ r_0}{\epsilon}), \frac{\sigma^2}{\tau_\ell \epsilon}\log \left(\frac{1}{\delta}  \log(\frac{r_0}{\epsilon}) \right) \right) \right)$
    & $O \left( \frac{1}{\epsilon} \cdot \frac{\sigma^2}{\tau_\ell } \log \left(\frac{1}{\delta} \log(\frac{r_0}{\epsilon}) \right)\right)$\\
    \midrule
    \begin{tabular}{@{}c@{}}
    \texttt{MS-AC-SA} \\ ~\citep{ghadimi2013optimal} 
    \end{tabular}
     & $O\left( \max \left(   \sqrt{\frac{\tau_u}{\tau_\ell}} \log(\frac{ \tau_u r_0}{\tau_\ell \epsilon }), \frac{\sigma^2}{\tau_\ell \epsilon}\log \left(\frac{1}{\delta}  \log(\frac{\tau_u r_0}{\tau_\ell \epsilon}) \right) \right) \right)$
    & $O \left(\frac{1}{\epsilon} \cdot \frac{\sigma^2}{\tau_\ell \epsilon} \log \left(\frac{1}{\delta} \log(\frac{\tau_u r_0}{\tau_\ell \epsilon}) \right)\right)$\\
    \toprule

    \multicolumn{3}{|c|}{Heavy-tailed noise} \\
    \toprule

    \begin{tabular}{@{}c@{}}
    \texttt{restarted-} \\
    \texttt{RSMD}~\citep{nazin2019algorithms} 
    \end{tabular}
    & \begin{tabular}{@{}c@{}}
    $O\left( \max \left( \frac{\tau_u}{\tau_\ell} \log \left( \frac{\tau_\ell R^2}{\epsilon} \right) , \frac{\sigma^2}{\tau_\ell \epsilon}   \right) \cdot  C \right)$, \\
    where $ C =  \log \left( \frac{1}{\delta} \log(\frac{\tau_\ell R^2}{\epsilon})\right)  $
    \end{tabular}
    & $O \left( \frac{1}{\epsilon} \cdot \frac{\sigma^2}{\tau_\ell }  \log \left( \frac{ \log(\frac{\tau_\ell R^2}{\epsilon})}{\delta}\right) \right)$ \\
    \midrule
    \texttt{proxBoost} ~\citep{davis2021low} 
    & \begin{tabular}{@{}c@{}}
    $O\left( \max \left( \sqrt{\frac{\tau_u}{\tau_\ell }} \log \left( \frac{ \tau_u^2 r_0^2 \log(\frac{\tau_u}{\tau_\ell})}{\tau_\ell\epsilon} \right) , \frac{\sigma^2 \log \left(\frac{\tau_u}{\tau_\ell} \right)}{\tau_\ell \epsilon}   \right)  C' \right)$, \\
    where $ C' =  \log(\frac{\tau_u}{\tau_\ell})  \log \left( \frac{1}{\delta} \cdot \log(\frac{\tau_u}{\tau_\ell}) \right)  $
    \end{tabular}
    & $ O \left(  \frac{1}{\epsilon} \cdot \frac{\sigma^2 \log \left(\frac{\tau_u}{\tau_\ell} \right)}{\tau_\ell }  C' \right)$\\
    \midrule
    \texttt{RGD} ~\citep{prasad2018robust} 
    & \begin{tabular}{@{}c@{}}
    $O\left( \frac{\phi \tau_u \sigma^2}{\epsilon} \log\left(\frac{\phi}{\delta} \right)
    \right)$ 
    with $\phi = \log(\frac{\tau_u r_0}{\tau_\ell \epsilon}) / \log \left(\frac{\tau_u - \tau_\ell}{\tau_u + \tau_\ell} \right)  $
    \end{tabular}
    &  $O\left(  \frac{1}{\epsilon} \cdot \tau_u \sigma^2 \log\left(\frac{\phi}{\delta} \right)
    \right)$ \\
    \midrule
    \begin{tabular}{@{}c@{}}
    \texttt{clipped-SGD} \\
    (constant step size) \\
    ~\citep{gorbunov2020stochastic} 
    \end{tabular} 
    & \begin{tabular}{@{}c@{}}
    $O\left( \max \left( \frac{\tau_u}{\tau_\ell } , \frac{\tau_u\sigma^2}{\tau_\ell^2 \epsilon} \right) \cdot C \right)$, \\
    where $ C =  \log \left(\frac{r_0}{\epsilon} \right)  \log\left( \frac{\tau_u}{\tau_\ell \delta} \cdot \ \log(\frac{r_0}{\epsilon})  \right) $
    \end{tabular}
    & \begin{tabular}{@{}c@{}} 
    $O\Big(  \frac{1}{\epsilon} \cdot \frac{\sigma^2 \tau_u}{\tau_\ell^2  }\log(\frac{r_0}{\epsilon})  \cdot$ \\
    $\log( \frac{\tau_u}{\delta\tau_\ell} \log(\frac{r_0}{\epsilon})   \Big)$ 
    \end{tabular}\\
    \midrule
    \begin{tabular}{@{}c@{}}
    \texttt{R-clipped-} 
    \texttt{SGD} \\~\citep{gorbunov2020stochastic} 
    \end{tabular} 
    & \begin{tabular}{@{}c@{}}
    $O\left( \max \left( \frac{\tau_u}{\tau_\ell } \log(\frac{r_0}{\epsilon}), \frac{\sigma^2}{\tau_\ell \epsilon} \right) \cdot C \right)$, \\
    where $ C =\log \left(\frac{\tau_u}{\tau_\ell \delta} \right) +\log \left(\log(\frac{r_0}{\epsilon}) \right)  $
    \end{tabular}
    & $O\left(  \frac{1}{\epsilon} \cdot  \frac{\sigma^2}{\tau_\ell} \log(\frac{\tau_u}{\delta \tau_\ell}\log(\frac{r_0}{\epsilon})) \right)$  \\
    \midrule
    \begin{tabular}{@{}c@{}}
    \texttt{R-clipped-} 
    \texttt{SSTM} \\~\citep{gorbunov2020stochastic}
    \end{tabular}
    & \begin{tabular}{@{}c@{}}
    $O\left( \max \left( \sqrt{\frac{\tau_u}{\tau_\ell }} \log(\frac{r_0}{\epsilon}), \frac{\sigma^2}{\tau_\ell \epsilon} \right) \cdot C \right)$, \\
    where  $ C =\log \left(\frac{\tau_u}{\tau_\ell \delta} \right) + \log \left(\log(\frac{r_0}{\epsilon}) \right)  $
    \end{tabular}
    &  $O\left(   \frac{1}{\epsilon} \cdot  \frac{\sigma^2}{\tau_\ell } \log(\frac{\tau_u}{\delta \tau_\ell}\log(\frac{r_0}{\epsilon}) \right)$  \\
    \midrule
    \begin{tabular}{@{}c@{}}\texttt{clipped-SGD} \\
    ($O(1/T)$ step size) \\
    \lbrack This work \rbrack
    \end{tabular}
    & $
    O\left( \max \left( \sqrt{\frac{\tau_u^3}{\tau_\ell^3} \cdot \frac{r_0}{\epsilon}}
    , \frac{\tau_u \sigma^2}{\tau_\ell^2  \epsilon} \right)\log \left(\frac{1}{\delta} \right) \right)$
    & $O(1)$ \\
    \bottomrule
    \end{tabular}}
    \caption{Comparison of existing high probability upper bound for stochastic optimization for any $\tau_\ell$-strongly convex and $\tau_u$-smooth objective function $\calR(\cdot)$ with sub-Gaussian/heavy-tailed noise. The second column provides number of samples needed to achieved an $\epsilon$-approximated solution $\hat{\theta}$ such that $\calR(\hat{\theta}) -\calR(\theta^*) < \epsilon$ with probability at least $1- \delta$. In this table, we assume a gradient distribution has a uniformly bounded variance $\sigma^2$, i.e. $\alpha(P,\loss) = 0$ and $\beta(P,\loss) = \sigma^2$ in Eq.\eqref{eqn:asp_variance_sg}. 
    We use $r_0 = \calR(\theta^0) - \calR(\theta^*)$. For \texttt{RSMD}, $R$ is the diameter of the domain where the optimization problem is defined. The third column indicates the batch size, which is the number of samples used in a single step.}
    \label{tab:sample_complexity_comparison}
\end{table}

\newpage 

\section{Experimental Details}
\label{sec:exp_details}

\subsection{Experimental Details of Figure \ref{fig:intro_heavy_tail} }
\label{sec:details_for_intro_exp}
In Figure \ref{fig:intro_heavy_tail}, we consider the mean estimation task with a loss function $\calR(\theta) = \Exp_{\z} \left[ \norm{\z - \theta}{2}^2 \right]$, where $\z$ is either from a sub-Gaussian distribution or a Pareto distribution with tail parameter $2.1$. Both distributions are $10$-dimensional and have zero mean and an identity covariance matrix. We use $N=100$ samples and run $100,000$ trials to estimate each confidence level $\delta$. We choose the clipping level to be $\lambda = 1.5$.

\subsection{Experimental Details of Mean Estimation}
In this section, we describe the algorithms of streaming coordinate-wise/geometric median-of-means and present details of the synthetic experiment of mean estimation in Section \ref{sec:exp_mean_estimation} and Appendix \ref{sec:extra_mean_estimation}. 

\subsubsection{Streaming coordinate-wise/geometric median-of-means algorithms}

Given points $z_1,\cdots, z_n \in \real^d$, coordinate-wise/geometric medians of these $n$ points are defined as the minimizers of the following convex objective. 
\begin{equation}
\label{eqn:coordinate_wise_mom_obj}
\text{coordinate-wise median:} \ \
m_c \defeq \argmin_{u \in \real^d}
\sum_{i=1}^n \norm{z_i-u}{1}.
\end{equation}
\begin{equation}
\label{eqn:geometric_mom_obj}
\text{geometric median:} \ \ 
m_g \defeq \argmin_{u \in \real^d}
\sum_{i=1}^n \norm{z_i-u}{2}.
\end{equation}
These objectives are convex and therefore can be minimized via stochastic gradient descent \citep{cardot2013efficient,cardot2017online,feldman2007optimal}. In the experiment, we use the step sizes of $\frac{c}{t_b}$, where $c$ is a constant selected by using our sequential validation method and $t_b$ is the number of steps. In Algorithm \ref{algo:streaming_coordinate-wise_MoM} and \ref{algo:streaming_geometric_MoM} ,we describe the streaming coordinate-wise/geometric median-of-means algorithm.

\begin{algorithm}[H]
\caption{Streaming coordinate-wise median-of-means algorithm.}
\hspace*{\algorithmicindent} \textbf{Input:} step size $\eta_{t_b}$, number of buckets $b$, samples $\z_1, \cdots, \z_{N} \sim P$.
\begin{algorithmic}[1]
    \State Streaming calculate the bucketed mean $\theta^1 = \text{Average}(\z_{1}, \cdots, \z_{b})$.
    \For{$t_b=1,2,...,\floor{\frac{N}{b}}$}
    \State Streaming calculate the bucketed mean $\bar{z}_{t_b} = \text{Average}(\z_{bt_b+1}, \cdots, \z_{b(t_b + 1)})$.
    \State $\theta^{t_b+1} \leftarrow \theta^{t_b} - \eta_{t_b} \text{sign}(\theta^{t_b} - \bar{z}_{t_b}) $.
    \EndFor
\end{algorithmic}
\hspace*{\algorithmicindent} \textbf{Output:} $\theta^{\floor{\frac{N}{b}} + 1}$.
\label{algo:streaming_coordinate-wise_MoM}
\end{algorithm}
\begin{algorithm}[H]
\caption{Streaming geometric median-of-means  algorithm.}
\hspace*{\algorithmicindent} \textbf{Input:} initial point $\theta^1$, step size $\eta_{t_b}$, number of buckets $b$, samples $\z_1, \cdots, \z_{N} \sim P$.
\begin{algorithmic}[1]
    \State Streaming calculate the bucketed mean $\theta^1 = \text{Average}(\z_{1}, \cdots, \z_{b})$.
    \For{$t_b=1,2,...,\floor{\frac{N}{b}}$}
    \State Streaming calculate the bucketed mean $\bar{z}_{t_b} = \text{Average}(\z_{bt_b+1}, \cdots, \z_{b(t_b + 1)})$.
    \State $\theta^{t_b+1} \leftarrow \theta^{t_b} - \eta_{t_b} \frac{\theta^{t_b} - \bar{z}_{t_b}}{\norm{\theta^{t_b} - \bar{z}_{t_b}}{2}} $.
    \EndFor
\end{algorithmic}
\hspace*{\algorithmicindent} \textbf{Output:} $\theta^{\floor{\frac{N}{b}} + 1}$.
\label{algo:streaming_geometric_MoM}
\end{algorithm}

\subsubsection{Hyperparameter Selection}
For each setup $(N,p)$, we use the hyper-parameter selection technique described in Section \ref{sec:experiment} and choose the hyperparameters from the candidate sets in Table \ref{tab:candidate_set_mean_est}.
\\
\\
\\
\begin{table*}[h]
    \centering
    \begin{tabular}{|c|c|}
    \toprule
    Hyper-parameters & Candidate sets \\
    \midrule
    clipping level $\lambda$ & $\{ c\sqrt{Np} \  | \  c \in \{0.01, 0.06, \cdots,1.01 \} \} $ \\
    \midrule
    \begin{tabular}{@{}c@{}} Number of buckets for streaming\\ geometric/coordinate-wise median-of-means \end{tabular}
    & $\ceil{8 \log(1/\delta)}$ with $\delta = 0.05$ ~\citep{lugosi2019mean}\\
    \midrule
    \begin{tabular}{@{}c@{}} Step sizes for streaming\\ geometric/coordinate-wise median-of-means \end{tabular}
    & $\{\frac{c}{t_b} | c \in \{10^{-1},10^{-\frac{3}{4}},10^{-\frac{1}{2}},10^{-\frac{1}{4}},1,10^{\frac{1}{4}},10^{\frac{1}{2}},10^{\frac{3}{4}} \} \}$  \\
    \bottomrule
    \end{tabular}
    \caption{Candidate set for different hyper-parameters for mean estimation.}
    \label{tab:candidate_set_mean_est}
\end{table*}

\subsection{Experimental Details of Linear Regression}
In this section, we
present details of the synthetic experiment of linear regression in Section \ref{sec:exp_linear_regression} and \ref{sec:extra_linear_regression}. For each setup $(N,p)$, we use the hyper-parameter selection technique described in Section \ref{sec:experiment} and choose the hyperparameters from the candidate sets in Table \ref{tab:candidate_set_linear_regression}.

\begin{table*}[h]
    \centering
    \begin{tabular}{|c|c|}
    \toprule
    Hyper-parameters & Candidate sets \\
    \midrule
    delay parameter $\gamma$ & $\{0.1p, p, 10p \}$ \\
    \midrule
    clipping level $\lambda$ & $\{ c\sqrt{Np} \  | \  c \in \{0.01, 0.06, \cdots,1.01 \} \} $ \\
    \midrule
    regularization parameter for Lasso  & $\{ c\sqrt{Np} \  | \  c \in \{0.001, 0.006, \cdots,0.101 \} \} $ \\
     \midrule
    regularization parameter for Ridge  & $\{ c\sqrt{Np} \  | \  c \in \{0.001, 0.006, \cdots,0.101 \} \} $ \\
     \midrule
    regularization parameter for Huber & $\{ c\sqrt{Np} \  | \  c \in \{0.001, 0.006, \cdots,0.101 \} \} $ \\
    \bottomrule
    \end{tabular}
    \caption{Candidate sets for different hyper-parameters for linear regression.}
    \label{tab:candidate_set_linear_regression}
\end{table*}

\newpage

\section{Extra experiments}
\label{sec:extra_exp}

\subsection{Extra Experiments on Mean Estimation}
\label{sec:extra_mean_estimation}
Figure \ref{fig:extra_mean_estimation} shows extra experimental results for $p=20$ and $N=100,500,1000$. The experimental setting is the same as in Section \ref{sec:exp_mean_estimation}. We can see \texttt{clipped-SGD} consistently outperforms all other baselines in expected performance and tail performance.

\begin{figure}[!ht]
\centering
      \subfigure[\label{fig:mean_est_n100_p20_convergence} \small{Expected convergence curve for $N=100$ and $p=20$. }]{\includegraphics[width=0.31\textwidth]{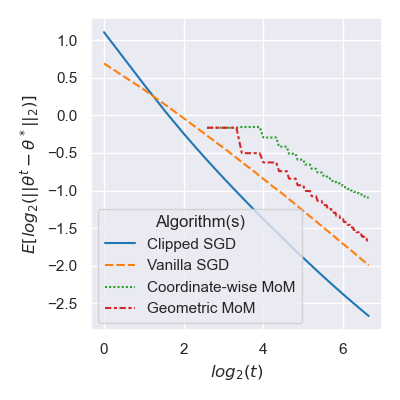} } \hfill
      \subfigure[\label{fig:mean_est_n100_p20_quantile} \small{Last iteration quantile loss for $N=100$ and $p=20$. }]{\includegraphics[width=0.31\textwidth]{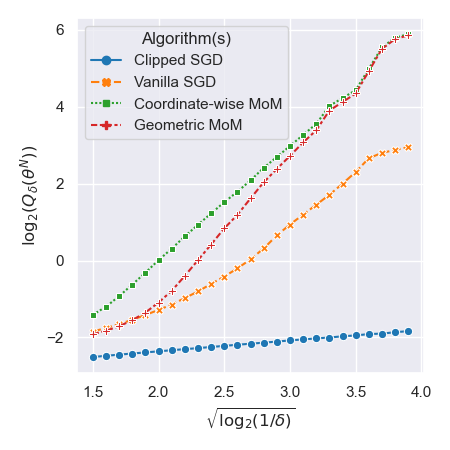}}
      \hfill
      \subfigure[\label{fig:mean_est_n500_p20_convergence} \small{Expected convergence curve for $N=500$ and $p=20$. }]{\includegraphics[width=0.31\textwidth]{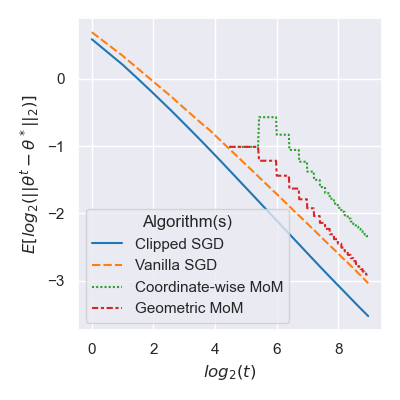} } 
     
      \subfigure[\label{fig:mean_est_n500_p20_quantile} \small{Last iteration quantile loss for $N=500$ and $p=20$. }]{\includegraphics[width=0.31\textwidth]{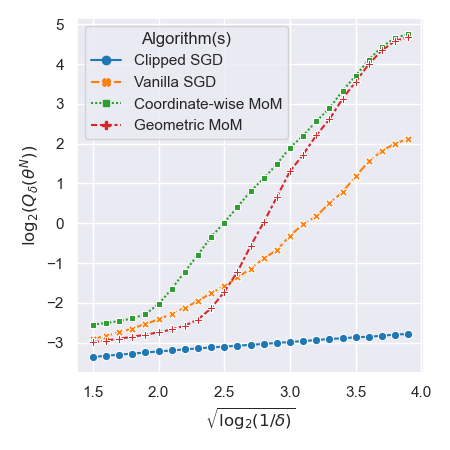}}
     hfill
       \subfigure[\label{fig:mean_est_n1000_p20_convergence} \small{Expected convergence curve for $N=1000$ and $p=20$. }]{\includegraphics[width=0.31\textwidth]{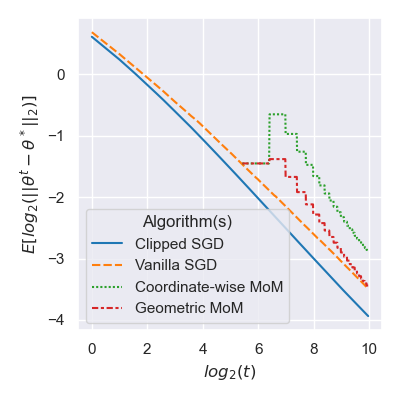} } \hfill
      \subfigure[\label{fig:mean_est_n1000_p20_quantile} \small{Last iteration quantile loss for $N=1000$ and $p=20$. }]{\includegraphics[width=0.31\textwidth]{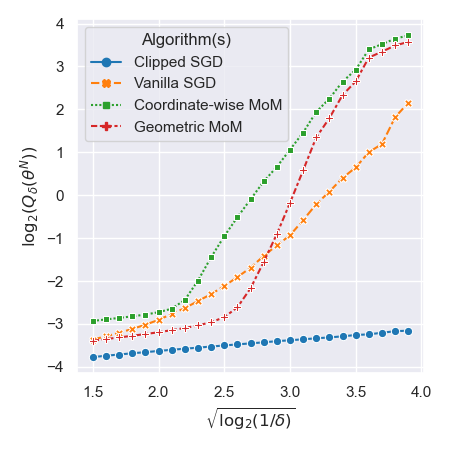}}

\caption{Results for robust mean estimation for different settings.}
\label{fig:extra_mean_estimation}
\end{figure}

\newpage

\subsection{Extra Experiments on Linear Regression}
\label{sec:extra_linear_regression}

Figure \ref{fig:extra_linear_regression} shows extra experimental results for $p=20$ and $N=100,500,1000$. The experimental setting is the same as in Section \ref{sec:exp_linear_regression}. We can see \texttt{clipped-SGD} consistently outperforms other baselines.

\begin{figure}[!ht]
\centering
      \subfigure[\label{fig:regression_n100_p20_convergence} \small{Expected convergence curve for $N=100$ and $p=20$. }]{\includegraphics[width=0.31\textwidth]{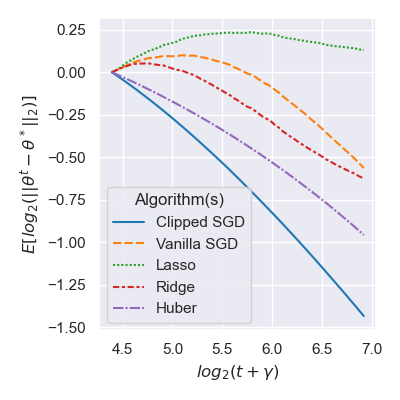} } \hfill
      \subfigure[\label{fig:regression_n100_p20_quantile} \small{Last iteration quantile loss for $N=100$ and $p=20$. }]{\includegraphics[width=0.31\textwidth]{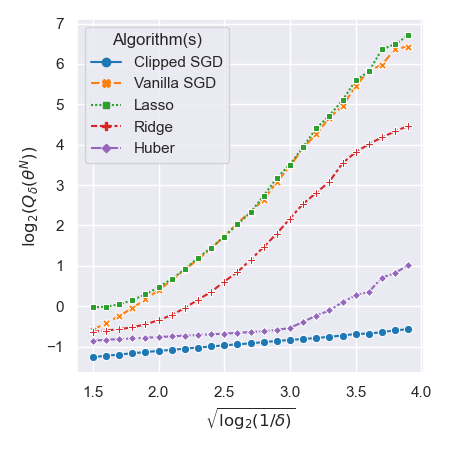}}
      \hfill
      \subfigure[\label{fig:regression_n500_p20_convergence} \small{Expected convergence curve for $N=500$ and $p=20$. }]{\includegraphics[width=0.31\textwidth]{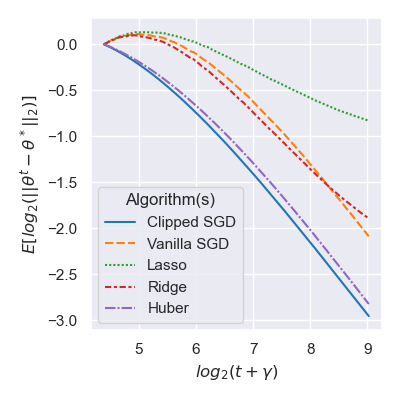} } 
     
      \subfigure[\label{fig:regression_n500_p20_quantile} \small{Last iteration quantile loss for $N=500$ and $p=20$. }]{\includegraphics[width=0.31\textwidth]{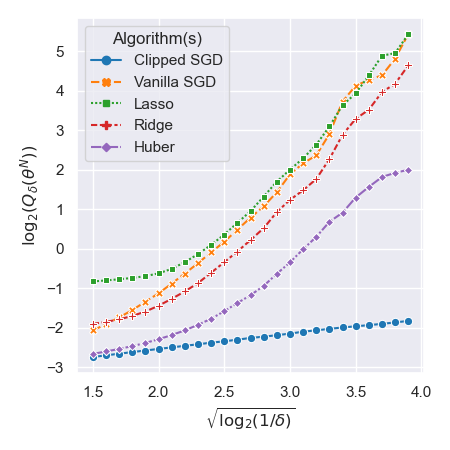}}
      \vspace{-0.12cm}\hfill
       \subfigure[\label{fig:regression_n1000_p20_convergence} \small{Expected convergence curve for $N=1000$ and $p=20$. }]{\includegraphics[width=0.31\textwidth]{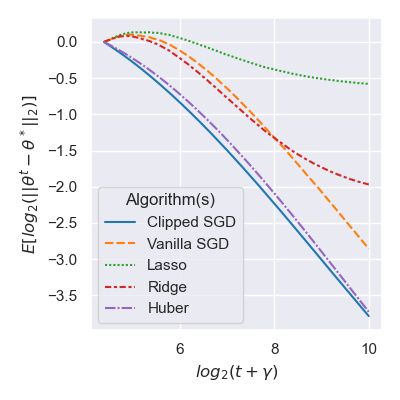} } \vspace{-0.12cm}\hfill
      \subfigure[\label{fig:regression_n1000_p20_quantile} \small{Last iteration quantile loss for $N=1000$ and $p=20$. }]{\includegraphics[width=0.31\textwidth]{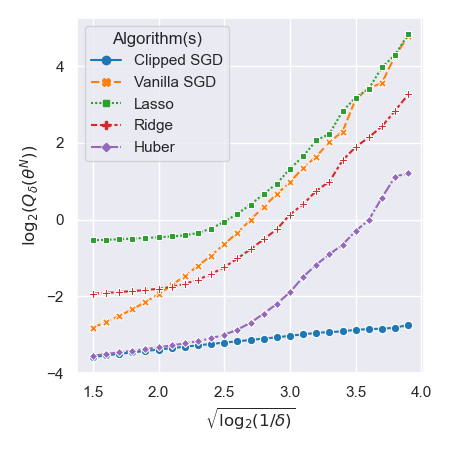}}
      \vspace{-0.12cm}\hfill
\caption{Results for robust linear regression for $p=20$, $\gamma = 20$, $\sigma^2 = 0.75$ and $N=100,500,1000$.}
\label{fig:extra_linear_regression}
\end{figure}

Moreover, we compare our algorithm when the true parameter $\theta^*$ is sparse, which is favorable for regularized linear regression (esp. Lasso). Specifically, we set the true parameter 
$\theta^*= [\frac{1}{\sqrt{r}},\cdots,\frac{1}{\sqrt{r}},0,\cdots, 0] \in \real^p$, where $r$ is a sparsity parameter indicating the last $p-r$ dimensions of $\theta^*$  are zero. We generate covariate  $x \in \real^p$ from an scaled standardized Pareto distribution with tail-parameter $\beta = 4.1$. The initial parameter is set to $\theta^1 = [\frac{-1}{\sqrt{p}},\frac{-1}{\sqrt{p}},\cdots,\frac{-1}{\sqrt{p}}]$. The response is generated by $y = \inprod{x}{\theta^*} + w$, where $w$ is sampled from scaled rescaled Pareto distribution with mean $0$, variance $\sigma^2$ and tail-parameter $\beta = 2.1$. 

Figure \ref{fig:extra_linear_regression_sparse} shows the results of sparse linear regression when $p=20$,$r=3$, $N=500,1000$. In sparse setting, Lasso performs better than in the dense setting but is still worse than our \texttt{clipped-SGD}.

\begin{figure}[!ht]
\centering
      \subfigure[\label{fig:sparse_regression_n500_p20_convergence} \small{Expected convergence curve for $N=500$ and $p=20$. }]{\includegraphics[width=0.23\textwidth]{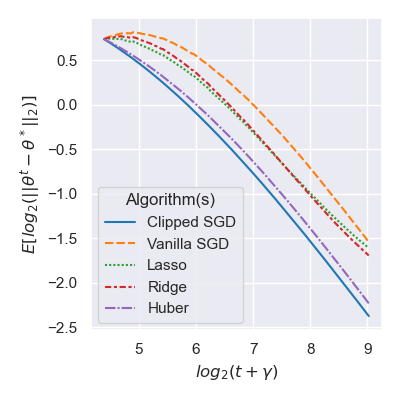} } \hfill
      \subfigure[\label{fig:sparse_regression_n500_p20_quantile} \small{Last iteration quantile loss for $N=500$ and $p=20$. }]{\includegraphics[width=0.23\textwidth]{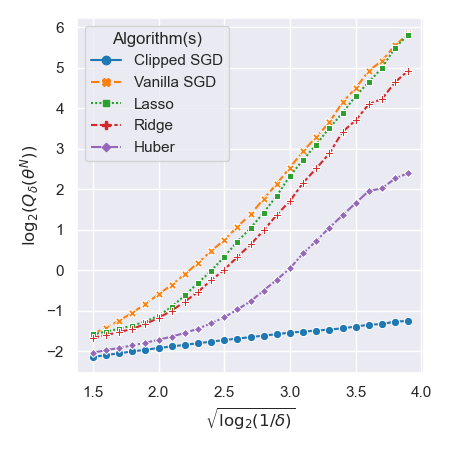}}\hfill
      \subfigure[\label{fig:sparse_regression_n1000_p20_convergence} \small{Expected convergence curve for $N=1000$ and $p=20$. }]{\includegraphics[width=0.23\textwidth]{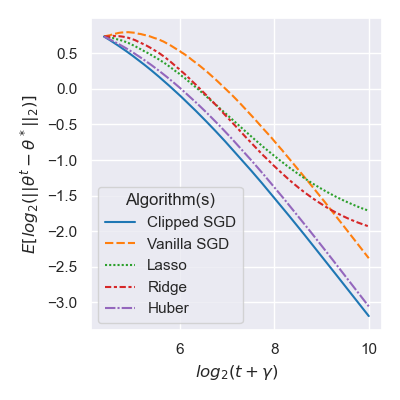} } \hfill
      \subfigure[\label{fig:sparse_regression_n1000_p20_quantile} \small{Last iteration quantile loss for $N=500$ and $p=20$. }]{\includegraphics[width=0.23\textwidth]{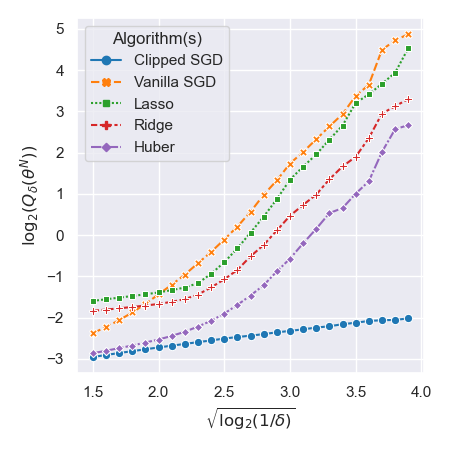}}\hfill
\caption{Results for robust sparse linear regression for $p=20$, $r=3$, $\sigma^2 = 0.75$, $\gamma=20$ and $N=500,1000$.}
\label{fig:extra_linear_regression_sparse}
\end{figure}

\newpage

\subsection{Synthetic Experiments: Logistic regression}
\label{sec:logistic_regression}
In this section, we present an extra experimental results on logistic regression.

\paragraph{Logistic regression model.} 
In this model, we observe covariate-response pairs $(x_i,y_i) \in \real^{p} \times \{0, 1\}$ for $1 \leq i \leq N$, where each pair $(x,y)$ is sampled from the true distribution $P$. The conditional distribution of the response $y$ given the covariate $x$ is 
\begin{equation}
\label{eqn:logistic_regression_cond_dist}
\Pr(y|x) = 
\begin{cases}
\frac{1}{1+ \exp(-\inprod{x}{\theta^*})} \text{, if } y=1. \\
\frac{1}{1+ \exp(\inprod{x}{\theta^*})} \text{, if } y=0. \\
\end{cases}
\end{equation}
We focus on the random design setting where the covariates $x \in \real^p$ have mean 0 and covariance matrix $\Sigma$. The loss function we used is negative log-likelihood function:
\begin{equation}
\label{eqn:neg_log_likelihood}
\loss(\theta, (x,y))
= -y \inprod{x}{\theta} + 
\log(1 + \exp(\inprod{x}{\theta})).
\end{equation}

The true parameter is the minimizer of the resulting population risk. The gradient of the loss function is given by 
\begin{equation}
\label{eqn:gradient_logistic_regression}
\nabla \loss(\theta, (x,y)) = \left(y - \frac{1}{1 + \exp(-\inprod{x}{\theta} )}\right)x.
\end{equation}
The hessian matrix of the population risk is
\begin{equation}
\label{eqn:hessian_logistic_regression}
\nabla^2 \calR(\theta) = \Exp \left[ \frac{\exp(\inprod{x}{\theta})}{(1 + \exp(\inprod{x}{\theta}))^2} xx^{\top}\right].
\end{equation}
We note that $\lambda_{min}(\nabla^2 \calR(\theta))$ approaches to $0$ as $\theta$ diverges and the objective function is no longer strongly convex. Therefore, in this case, we restrict the domain of $\theta$ to be a bounded convex set $\Theta$.

\paragraph{Setup.}
We generate covariate  $x \in \real^p$ from an scaled standardized Pareto distribution with tail-parameter $\beta = 4.1$. The true regression parameter is set to be $\theta^* = [\frac{1}{\sqrt{p}},\cdots,\frac{1}{\sqrt{p}}] \in \real^p$ and the initial parameter is set to $\theta^1 = 0.75 \theta^*$. The response is generated by Eq.\eqref{eqn:logistic_regression_cond_dist}. We select $\tau_\ell = \lambda_{min}(\Sigma) = 1$. To ensure $\lambda_{min}(\nabla^2 \calR(\theta))$ is lower bounded, we restrict the domain $\Theta$ to a unit ball centered at $\theta^*$, i.e. $\Theta = \{ v \ | \  \norm{v-\theta^*}{2} \leq 1 \}$. We set $\tau_\ell = 0.1$. Each metric are reported over 5000 trials. 

\paragraph{Results.}
Figure \ref{fig:extra_logistic_regression} shows the results of heavy-tailed logistic regression. In Figure \ref{fig:logistic_regression_gamma_vs_convergence}, we plot expected convergence curves for different $\gamma$ for SGD algorithm. We can see that $\gamma = 2000$ yields the best performance. Therefore, we fix $\gamma=2000$ and compare SGD algorithm with \texttt{clipped-SGD}. Figure \ref{fig:logistic_regression_convergence_vs_clipping_level}, shows expected convergence curves for different clipping levels $\lambda$. We can see that the red curve ($\lambda=0.5$) clearly outperforms Vaniila SGD. The tail performance of $\lambda=0.5$ is also the best as in Figure \ref{fig:logistic_regression_quantile}.

We note that the tail performance of Vanilla SGD is well-controlled for logistic regression, as can be seen in Figure \ref{fig:logistic_regression_quantile}.
The reason may be that the distribution of stochastic gradient is not as heavy as the distribution of covariate $x$. If we see the formula of stochastic gradients in Eq.\eqref{eqn:gradient_logistic_regression}, when $\norm{x}{2}$ is large, the response $y$ has high probability to be exponentially close to $1/(1+\exp(-\inprod{x}{\theta}))$. Therefore, the term inside the bracket is exponentially small with high probability. The stochastic gradient may not be as heavy-tailed as the covariate $x$. However, our results show that using clipped gradient is helpful in logistic regression.
 
\begin{figure}[!ht]
\centering
      \subfigure[\label{fig:logistic_regression_gamma_vs_convergence} \small{Expected convergence curves for different $\gamma$. }]{\includegraphics[width=0.31\textwidth]{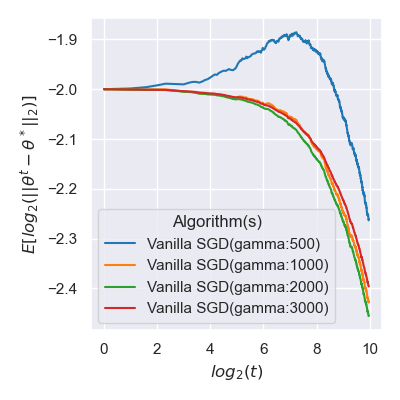} } \vspace{-0.12cm}\hfill
      \subfigure[\label{fig:logistic_regression_convergence_vs_clipping_level} \small{Expected convergence curves for different $\lambda$ ($\gamma = 2000$). }]{\includegraphics[width=0.31\textwidth]{figs/appendix/logistic_regression/convergence_vs_clipping level.png}}
      \vspace{-0.12cm}\hfill
      \subfigure[\label{fig:logistic_regression_quantile} \small{Quantile loss for different clipping level $\lambda$. }]{\includegraphics[width=0.31\textwidth]{figs/appendix/logistic_regression/quantile_vs_clipping level.png} } \vspace{-0.12cm}
\caption{Results for heavy-tailed logistic regression for $p=10$ and $N=1000$.}
\label{fig:extra_logistic_regression}
\end{figure}

\newpage

\section{Proof Sketch of Theorem \ref{thm:clipped_sgd}}
\label{sec:sketch_of_proof}

In this section, we provide an overview of the arguments that constitute the proof of Theorem \ref{thm:clipped_sgd} and explain our theoretical contribution. The full details of the proof can be found in Section \ref{sec:proof_clipped_sgd} in Appendix. Our proof is improved upon previous analysis of clipped-SGD with constant learning rate and $O(n)$ batch size ~\citep{gorbunov2020stochastic} and high probability bounds for SGD with $O(1/t)$ step size in the sub-Gaussian setting~\citep{harvey2019simple}. Our analysis consists of three steps: (i) Expansion of the update rule. (ii) Selection of the clipping level. (iii) Concentration of bounded martingale sequences.

\paragraph{Notations:} Recall that we use step size $\eta_t = \frac{1}{\tau_\ell(t+\gamma)}$. We will write $\epsilon_t = \nabla \calR(\theta^{t}) - \clip(\nabla \loss(\theta^{t},\z_t), \lambda) $ is the noise indicating the difference between the stochastic gradient and the true gradient at step $t$. Let $\calF_t = \sigma(\z_1,\cdots,\z_t)$ be the $\sigma$-algebra generated by the first $t$ steps of \texttt{clipped-SGD}. We note that clipping introduce bias so that $\Exp_{\z_t}[\epsilon_t | \calF_{t-1}]$ is no longer zero, so we decompose the noise term $\epsilon_t = \epsilon_{t}^b + \epsilon_{t}^v$ into a bias term $\epsilon_{t}^b $ and a zero-mean variance term  $\epsilon_{t}^v $, i.e. 
$$\epsilon_t = \epsilon_{t}^b + \epsilon_{t}^v, 
\text{ where } 
\epsilon_{t}^b  = \Exp_{\z_t}[\epsilon_t| \calF_{t-1}]
\text{ and }
\epsilon_{t}^v  = \epsilon_t - \Exp_{\z_t}[\epsilon_t| \calF_{t-1}]
$$

\paragraph{(i) Expansion of the update rule:}  We start with the following lemma that is pretty standard in the analysis of SGD for strongly convex functions. It can be obtained by unrolling the update rules $\theta^{t+1} = \calP_\Theta \left(\theta^t - \eta_t \clip(\nabla \loss(\theta^t, \z_t),\lambda) \right)$ and using properties of $\tau_\ell$-strongly-convex and $\tau_u$-smooth functions.
\begin{lemma}
\label{lm:strongly_convex_expansion_main_paper}
Under the conditions in Theorem \ref{thm:clipped_sgd}, for any $1 \leq i \leq N$, we have
\begin{align*}
\small
\norm{\theta^{i+1} - \theta^*}{2}^2 
\leq  
\underbrace{\frac{\gamma(\gamma-1)\norm{\theta^1 - \theta^*}{2}^2 }{(i+\gamma)(i+\gamma-1)}}_{\text{the initialization error}} 
+ \underbrace{\frac{\sum_{t=1}^{i}  (t+\gamma-1) \inprod{\epsilon_t^b +\epsilon_t^v  }{\theta^t - \theta^*} }{\tau_\ell(i+\gamma)(i+\gamma-1)}}_{\text{the first noise term }} 
+ \underbrace{\frac{2\sum_{t=1}^{i} \left( \norm{ \epsilon_t^v }{2}^2 + \norm{ \epsilon_t^b }{2}^2 \right)} {\tau_\ell^2(i+\gamma)(i+\gamma-1)}}_{\text{the second noise term
}}. 
\end{align*}
\end{lemma}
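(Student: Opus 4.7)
The plan is to derive a one-step recursion from the update rule and exploit the specific form $\eta_t = 1/(\tau_\ell(t+\gamma))$ so that, after reweighting, the recursion telescopes. This is the standard template for analyzing SGD on strongly convex objectives, adapted here so that the clipped-gradient noise is split into its bias and variance pieces and the quadratic $\norm{g_t}{2}^2$ term does not destroy the contraction.

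First, non-expansiveness of Euclidean projection onto the convex set $\Theta$ (which contains $\theta^*$) together with the update rule gives
\[
\norm{\theta^{t+1}-\theta^*}{2}^2 \leq \norm{\theta^t-\theta^*}{2}^2 - 2\eta_t\inprod{\nabla\calR(\theta^t)}{\theta^t-\theta^*} + 2\eta_t\inprod{\epsilon_t^b+\epsilon_t^v}{\theta^t-\theta^*} + \eta_t^2\norm{g_t}{2}^2,
\]
where $g_t := \clip(\nabla\loss(\theta^t,\z_t),\lambda) = \nabla\calR(\theta^t) - \epsilon_t^b - \epsilon_t^v$. Strong convexity (combined with $\nabla\calR(\theta^*)=0$) yields $\inprod{\nabla\calR(\theta^t)}{\theta^t-\theta^*}\geq \tau_\ell\norm{\theta^t-\theta^*}{2}^2$. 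For the last term I would use $\norm{g_t}{2}^2 \leq 2\norm{\nabla\calR(\theta^t)}{2}^2 + 4\bigl(\norm{\epsilon_t^v}{2}^2+\norm{\epsilon_t^b}{2}^2\bigr)$, and then apply co-coercivity of smooth convex functions, $\norm{\nabla\calR(\theta^t)}{2}^2 \leq \tau_u\inprod{\nabla\calR(\theta^t)}{\theta^t-\theta^*}$, to absorb the gradient-norm piece back into the contraction at the price of multiplying the contraction coefficient by $(1-\eta_t\tau_u)$.

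With $\eta_t = 1/(\tau_\ell(t+\gamma))$, the contraction coefficient becomes $(t+\gamma-2)/(t+\gamma)$. The key observation is that multiplying the whole recursion by $(t+\gamma)(t+\gamma-1)$ makes it telescope, since
\[
(t+\gamma)(t+\gamma-1)\cdot\frac{t+\gamma-2}{t+\gamma} \;=\; (t+\gamma-1)(t+\gamma-2),
\]
which is precisely the LHS weight at index $t-1$. Summing from $t=1$ to $i$ and dividing by $(i+\gamma)(i+\gamma-1)$ then produces the initial-error term $\gamma(\gamma-1)\norm{\theta^1-\theta^*}{2}^2/[(i+\gamma)(i+\gamma-1)]$ together with the linear-in-$\epsilon_t$ sum weighted by $(t+\gamma-1)/\tau_\ell$ and the quadratic sum weighted by $2/\tau_\ell^2$, matching the stated form after routine bookkeeping of constants.

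The main obstacle is handling the $\eta_t^2\norm{g_t}{2}^2$ term, whose $\norm{\nabla\calR(\theta^t)}{2}^2$ component scales as $\tau_u^2\norm{\theta^t-\theta^*}{2}^2$ and, without some form of regularization, would swamp the $-2\eta_t\tau_\ell\norm{\theta^t-\theta^*}{2}^2$ contraction at early iterations. This is exactly what the delay parameter is for: the theorem's requirement $\gamma \gtrsim \tau_u/\tau_\ell$ forces $\eta_t\tau_u \leq 1/2$ uniformly in $t\geq 1$, so the co-coercivity absorption above degrades the contraction coefficient only by a constant factor and the clean telescoping identity survives throughout the run.
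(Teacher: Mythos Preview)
Your proposal is correct and follows essentially the same template as the paper: expand the projected update, split the clipped gradient into $\nabla\calR(\theta^t)$ and $\epsilon_t$, absorb $\norm{\nabla\calR(\theta^t)}{2}^2$ back into the inner-product term, use strong convexity to get a contraction of the form $1-2\eta_t\tau_\ell$, and telescope using the product identity $\prod_{j=i+1}^t\frac{j+\gamma-2}{j+\gamma}=\frac{(i+\gamma)(i+\gamma-1)}{(t+\gamma)(t+\gamma-1)}$.

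The one substantive difference is the inequality used to control $\norm{\nabla\calR(\theta^t)}{2}^2$. You invoke plain co-coercivity $\norm{\nabla\calR(\theta^t)}{2}^2\le\tau_u\inprod{\nabla\calR(\theta^t)}{\theta^t-\theta^*}$, which after strong convexity leaves a residual $+2\eta_t^2\tau_u\tau_\ell$ in the contraction factor; the paper instead uses the combined strong-convexity/smoothness bound (Bubeck's Lemma~3.11), $\norm{\nabla\calR(\theta^t)}{2}^2\le(\tau_\ell+\tau_u)\inprod{\nabla\calR(\theta^t)}{\theta^t-\theta^*}-\tau_\ell\tau_u\norm{\theta^t-\theta^*}{2}^2$, so that the $\tau_u$-dependent pieces cancel and the residual is only $+2\eta_t^2\tau_\ell^2$. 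Your last paragraph already identifies the fix---the condition $\gamma\gtrsim\tau_u/\tau_\ell$ makes $\eta_t\tau_u$ uniformly small---so both routes yield the stated bound up to constants; the paper's choice just makes the telescoping identity $(t+\gamma-2)/(t+\gamma)$ hold without having to carry an extra $\tau_u/\tau_\ell$-dependent correction through the recursion.
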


\paragraph{(ii) Selection of the clipping level:} Now, to upper bound \textit{the noise terms}, 
we need to choose the clipping level $\lambda$ properly to balance the variance term $\epsilon_t^v$ and the bias term $\epsilon_t^b$. Specifically, we use the inequalities of \citet{gorbunov2020stochastic}, which provides us upper bounds for the magnitude and variance of these noise terms.
\begin{lemma}
(Lemma F.5, \citep{gorbunov2020stochastic} )
For any $t=1,2,..,N$, we have
\begin{equation}
\norm{\epsilon_t^v}{2}
\leq  2\lambda.
\end{equation}
Moreover, for all $t=1,2,..,N$, assume that the variance of stochastic gradients is bounded by $\sigma_t^2$, i.e. $\Exp_{\z_t}[\norm{\nabla \loss(\theta^t, \z_t) - \nabla \calR(\theta^t)}{2}^2 | \calF_{t-1}] \leq \sigma_t^2 $ and assume that the norm of the true gradient is less than $\lambda/2$, i.e.
$ \norm{\nabla \calR(\theta^t)}{2} \leq \lambda/2$. Then we have
\begin{equation}
\label{eqn:b_and_v_ub}
\norm{\epsilon_t^b}{2}
\leq \frac{4\sigma_t^2}{\lambda} 
\ \ \text{ and } \ \ 
\Exp_{\z_t}[\norm{\epsilon_t^v}{2}^2  | \calF_{t-1}] \leq 18 \sigma_t^2 \ \ 
\text{ for all } t=1,2,...,N.
\end{equation}
\label{lm:noise_term_control}
\end{lemma}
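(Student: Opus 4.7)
To streamline notation, write $g_t = \nabla \loss(\theta^t,\z_t)$ and $\bar g_t = \nabla \calR(\theta^t) = \Exp[g_t\mid\calF_{t-1}]$, and let $\hat g_t = \clip(g_t,\lambda)$ denote the clipped stochastic gradient. From the definitions, $\epsilon_t = \bar g_t - \hat g_t$, $\epsilon_t^b = \bar g_t - \Exp[\hat g_t\mid\calF_{t-1}]$, and $\epsilon_t^v = \Exp[\hat g_t\mid\calF_{t-1}] - \hat g_t$. The plan is to handle the three inequalities separately, each via a short pointwise decomposition of the clipping error.

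\textbf{(a) Unconditional bound $\norm{\epsilon_t^v}{2}\leq 2\lambda$.} By the very definition of $\clip(\cdot,\lambda)$ the vector $\hat g_t$ has Euclidean norm at most $\lambda$ almost surely, so its conditional mean $\Exp[\hat g_t\mid\calF_{t-1}]$ does as well by Jensen's inequality. Applying the triangle inequality to $\epsilon_t^v = \Exp[\hat g_t\mid\calF_{t-1}] - \hat g_t$ then yields $\norm{\epsilon_t^v}{2}\leq 2\lambda$ without any noise or gradient-norm hypothesis.

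\textbf{(b) Bias bound $\norm{\epsilon_t^b}{2}\leq 4\sigma_t^2/\lambda$.} The key identity is the pointwise formula
\begin{equation*}
g_t - \hat g_t = g_t\,\mathbf{1}\{\norm{g_t}{2}>\lambda\}\bigl(1-\lambda/\norm{g_t}{2}\bigr),\qquad \norm{g_t - \hat g_t}{2} = (\norm{g_t}{2}-\lambda)_+.
\end{equation*}
On the event $\{\norm{g_t}{2}>\lambda\}$, the assumption $\norm{\bar g_t}{2}\leq\lambda/2$ together with the reverse triangle inequality forces $\norm{g_t-\bar g_t}{2}>\lambda/2$ and $\norm{g_t}{2}-\lambda\leq \norm{g_t-\bar g_t}{2}-\lambda/2\leq \norm{g_t-\bar g_t}{2}$. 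Since $\Exp[g_t\mid\calF_{t-1}]=\bar g_t$, we obtain
\begin{equation*}
\norm{\epsilon_t^b}{2} = \bigl\|\Exp[g_t-\hat g_t\mid\calF_{t-1}]\bigr\|_2 \leq \Exp\bigl[(\norm{g_t}{2}-\lambda)_+\,\big|\,\calF_{t-1}\bigr] \leq \Exp\bigl[\norm{g_t-\bar g_t}{2}\,\mathbf{1}\{\norm{g_t-\bar g_t}{2}>\lambda/2\}\,\big|\,\calF_{t-1}\bigr].
\end{equation*}
Finally, the Markov-style estimate $\norm{g_t-\bar g_t}{2}\,\mathbf{1}\{\norm{g_t-\bar g_t}{2}>\lambda/2\}\leq (2/\lambda)\norm{g_t-\bar g_t}{2}^2$ converts the right side into $2\sigma_t^2/\lambda$, comfortably inside the stated constant $4\sigma_t^2/\lambda$.

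\textbf{(c) Variance bound $\Exp[\norm{\epsilon_t^v}{2}^2\mid\calF_{t-1}]\leq 18\sigma_t^2$.} Since conditional centering does not increase the second moment, $\Exp[\norm{\epsilon_t^v}{2}^2\mid\calF_{t-1}]\leq \Exp[\norm{\epsilon_t}{2}^2\mid\calF_{t-1}]$. Writing $\epsilon_t = (\bar g_t - g_t) + (g_t - \hat g_t)$ and applying the triangle inequality together with the pointwise identity from (b) gives $\norm{\epsilon_t}{2}\leq \norm{g_t-\bar g_t}{2} + (\norm{g_t}{2}-\lambda)_+$; on $\{\norm{g_t}{2}>\lambda\}$ the second summand is again at most $\norm{g_t-\bar g_t}{2}$, so unconditionally $\norm{\epsilon_t}{2}\leq 2\norm{g_t-\bar g_t}{2}$. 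Squaring and taking conditional expectation yields $\Exp[\norm{\epsilon_t}{2}^2\mid\calF_{t-1}]\leq 4\sigma_t^2$, well inside the stated $18\sigma_t^2$.

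The conceptually tricky step is (b): one must exploit the margin $\norm{\bar g_t}{2}\leq\lambda/2$ to conclude that \emph{whenever clipping triggers, the event is driven by stochastic noise} (i.e.\ $\norm{g_t-\bar g_t}{2}>\lambda/2$), so that a Markov bound against the bounded variance produces bias of order $\sigma_t^2/\lambda$ rather than something of order $\lambda$. Parts (a) and (c) are then essentially one-line triangle-inequality consequences of the same pointwise decomposition; the slack in the constants ($4$ vs.\ the sharper $2$, and $18$ vs.\ the sharper $4$) is an artifact of the phrasing in \citet{gorbunov2020stochastic} and has no effect on the downstream use in Theorem~\ref{thm:clipped_sgd}.
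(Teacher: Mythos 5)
Your proof is correct. The paper itself does not prove this lemma at all---it imports it verbatim as Lemma F.5 of \citet{gorbunov2020stochastic}---and your argument is essentially the standard one given there: the half-margin $\norm{\nabla \calR(\theta^t)}{2}\leq \lambda/2$ forces the clipping event $\{\norm{\nabla\loss(\theta^t,\z_t)}{2}>\lambda\}$ to be a large-noise event, after which a Markov-type comparison against the conditional variance gives the $\sigma_t^2/\lambda$ bias and the $O(\sigma_t^2)$ second moment (your constants $2$ and $4$ are in fact sharper than the stated $4$ and $18$, which causes no issue downstream).
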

This lemma gives us the dependencies between the variance, bias and clipping level: a larger clipping level leads to a smaller bias but the magnitude of the variance term $\norm{\epsilon_t^v}{2}$ is larger, while the variance of the variance term $\Exp_{\z_t}[\norm{\epsilon_t^v}{2}^2  | \calF_{t-1}]$ remains constant. These three inequalities are also essential for us to use concentration inequalities for martingales. However, we highlight the \emph{necessity} for these inequalities to hold: \emph{the true gradient lies in the clipping region up to a constant}, i.e. $\norm{\nabla \calR(\theta^t)}{2} \leq \lambda / 2$. This condition is necessary since without this, we could not have upper bounds of the bias and variance terms. 
Therefore, the clipping level should be chosen in a very accurate way. Below we informally describe how do we choose it.

We note that $\norm{\theta^t - \theta^*}{2}$ should converge with $1/\sqrt{t}$ rate for strongly convex functions with $O(1/t)$ step size \citep{rakhlin2011making,harvey2019simple}. To make sure  \textit{the first noise term} upper bound by $O(1/i)$ , one should expect each summand  $t\inprod{\epsilon_t^b}{\theta^t-\theta^*} = O(1)$ for $1 \leq t \leq i$, which implies $\norm{\epsilon_t^b}{2} = O(1/\sqrt{t})$.
This motivates us to choose the clipping level to be proportional to $\sqrt{t}$ by Eq.\eqref{eqn:b_and_v_ub}. 
Also, from the detailed proof in Section \ref{sec:proof_clipped_sgd}, we will show that the delay parameter $\gamma$ makes sure $\norm{\nabla \calR(\theta^t)}{2} \leq \lambda / 2$ holds with high probabilities and the position dependent noise $\alpha(P,\loss)$ is controlled.

\paragraph{(iii) Concentration of bounded martingale sequences:} 
A significant technical step of our analysis is the use of the following Freedman's inequality.

\begin{lemma}
\label{lm:freedman_main_paper}
(Freedman's inequality \citep{freedman1975tail}) Let $d_1, d_2, \cdots,d_T$ be a martingale difference sequence with a uniform bound $b$ on the steps $d_i$. Let $V_s$ denote the sum of conditional variances, i.e. 
$ V_s = \sum_{i=1}^s \var(d_i | d_1,\cdots,d_{i-1}). $
Then, for every $a,v > 0$,
$$
\Pr \left(\sum_{i=1}^s d_i \geq a \ \text{ and } \ V_s \leq v \ \text{ for some }\ s \leq T \right)
\leq \exp \left(\frac{-a^2}{2(v+ba)} \right).
$$
\end{lemma}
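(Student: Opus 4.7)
The plan is to follow the classical exponential-supermartingale route for Bennett/Freedman-type bounds. I will build a nonnegative supermartingale indexed by a free parameter $\lambda > 0$, apply an optional-stopping argument at the first time the partial sum crosses $a$, and finally optimize over $\lambda$ using an elementary numerical inequality.

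First, fix $\lambda > 0$ and recall the standard bound for bounded mean-zero increments: since $d_i \le b$ almost surely, $\Exp[d_i \mid d_1, \dots, d_{i-1}] = 0$, and the function $y \mapsto (e^y - 1 - y)/y^2$ is nondecreasing on $\mathbb{R}$, applying it pointwise at $y = \lambda d_i \le \lambda b$ and taking conditional expectation gives
\begin{equation*}
\Exp[e^{\lambda d_i} \mid d_1, \dots, d_{i-1}] \le 1 + \frac{e^{\lambda b} - 1 - \lambda b}{b^2}\,\var(d_i \mid d_1, \dots, d_{i-1}) \le \exp\!\left(\psi(\lambda)\,\var(d_i \mid d_1, \dots, d_{i-1})\right),
\end{equation*}
where $\psi(\lambda) := (e^{\lambda b} - 1 - \lambda b)/b^2$. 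With $S_s := \sum_{i=1}^s d_i$, set $M_s := \exp(\lambda S_s - \psi(\lambda) V_s)$ and $M_0 := 1$. Since $V_s - V_{s-1}$ is $\sigma(d_1,\dots,d_{s-1})$-measurable, multiplying through by $M_{s-1}$ yields $\Exp[M_s \mid d_1,\dots,d_{s-1}] \le M_{s-1}$, so $\{M_s\}_{s=0}^{T}$ is a nonnegative supermartingale with $\Exp[M_s] \le 1$ for every $s \le T$.

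Next, I would introduce the stopping time $\tau := \min\{ s \le T : S_s \ge a\}$, with the convention $\tau := T + 1$ if no such $s$ exists. Because $V_s$ is nondecreasing in $s$ (being a sum of nonnegative conditional variances), on the event $E$ the minimal $s$ for which $S_s \ge a$ already occurs while $V_s \le v$; concretely $\tau \le T$ and $V_\tau \le v$ on $E$. Hence $M_\tau \ge \exp(\lambda a - \psi(\lambda) v)\,\mathbf{1}_E$ pointwise. Applying optional stopping at the bounded stopping time $\tau \wedge T$ then yields
\begin{equation*}
\Pr(E)\,\exp(\lambda a - \psi(\lambda) v) \le \Exp[\mathbf{1}_E M_{\tau \wedge T}] \le \Exp[M_{\tau \wedge T}] \le 1,
\end{equation*}
so $\Pr(E) \le \exp(\psi(\lambda) v - \lambda a)$ for every $\lambda > 0$.

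Finally, I would optimize over $\lambda$. The choice $\lambda^\star := b^{-1}\log(1 + ab/v)$ minimizes the right-hand side and, after a short calculation, produces the Bennett form $\Pr(E) \le \exp\!\left(-(v/b^2)\, h(ab/v)\right)$ with $h(u) := (1+u)\log(1+u) - u$. The stated Freedman bound then follows from the elementary inequality $h(u) \ge u^2 / (2(1 + u))$, verified by checking that the difference vanishes with zero first derivative at $u = 0$ and has a nonnegative second derivative on $[0,\infty)$; plugging in $u = ab/v$ collapses exactly to $\exp(-a^2/(2(v+ba)))$. The main delicate step is the stopping argument: one must use the monotonicity of $V_s$ so that, even though $E$ is defined via an existential statement, the single stopping time $\tau$ captures both $S_\tau \ge a$ and $V_\tau \le v$ simultaneously on $E$. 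The rest is a textbook Chernoff bound at the level of supermartingales.
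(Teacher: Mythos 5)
The paper does not prove this lemma at all: it is imported verbatim as a known result, citing \citet{freedman1975tail} and (in the appendix) Theorem 1.2A of \citet{victor1999general}. So there is no in-paper argument to compare against; what you have written is a self-contained proof of the cited black box, and it is correct. Your three steps all check out. The moment bound $\Exp[e^{\lambda d_i}\mid \calF_{i-1}]\le \exp(\psi(\lambda)\var(d_i\mid\calF_{i-1}))$ with $\psi(\lambda)=(e^{\lambda b}-1-\lambda b)/b^2$ is the standard Bennett-type estimate and only needs the one-sided bound $d_i\le b$; since $V_s-V_{s-1}$ is $\calF_{s-1}$-measurable, $M_s=\exp(\lambda S_s-\psi(\lambda)V_s)$ is indeed a nonnegative supermartingale. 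You correctly identified the one genuinely delicate point, namely that the event is existential in $s$ while the stopping time only tracks the crossing of $S_s$; the monotonicity of $V_s$ is exactly what lets you conclude $V_\tau\le v$ on the event from $V_{s_0}\le v$ at the witnessing time $s_0\ge\tau$. The optimization $\lambda^\star=b^{-1}\log(1+ab/v)$ gives the Bennett exponent $-(v/b^2)h(ab/v)$, and the closing inequality $h(u)\ge u^2/(2(1+u))$ is verifiable exactly as you describe ($g(0)=g'(0)=0$ and $g''(u)=((1+u)^2-1)/(1+u)^3\ge0$), collapsing to $\exp(-a^2/(2(v+ba)))$. This matches the form the paper uses downstream (e.g.\ the choice $a=2b\log(2/\delta)+\sqrt{2v\log(2/\delta)}$ in Claim \ref{clm:freedman} is precisely inverting this bound), so your proof is a valid substitute for the citation.
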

Freedman's inequality says that if we know the boundness and variance of the martingale difference sequence, the summation of them has \emph{exponential concentration} around its expected value for \emph{all subsequence} $\sum_{i=1}^s d_i$.
 
Now we turn our attention to the variance term in \textit{the first noise term}, i.e. $\sum_{t=1}^{i}  (t+\gamma-1) \inprod{\epsilon_t^v  }{\theta^t - \theta^*}  $. It is the summation of a martingale difference sequence since $\Exp[\epsilon_t^v | \calF_{t-1}] = 0$.
Note that Lemma \ref{lm:noise_term_control} has given us upper bounds for boundness/variance for $\epsilon_t^v$.
However, the main technical difficulty is that each summand involves the error of past sequences, i.e. $\norm{\theta^t-\theta^*}{2}$. 

Our solution is the use of Freedman's inequality, which gives us a \emph{loose control} of all past error terms with high probabilities, i.e. $\norm{\theta^t - \theta^*}{2}^2 \leq O\left(N/t^2 \right)$ for $1 \leq t \leq N$. On the contrary, a recurrences technique used in the past works ~\citep{gorbunov2018accelerated,gorbunov2019optimal,gorbunov2020stochastic} uses an increasing clipping levels $\lambda_t$ and calls the Bernstein inequality ( , which only provides an upper bound for the \emph{entire sequence}, ) $N$ times in order to control $\norm{\theta^t - \theta^*}{2}$ for every $t$. As a result, 
it incurs an extra factor $\log(N)$ on their bound since it imposes a too strong control over past error terms.

Finally, we describe why \texttt{clipped-SGD} allows a $O(1)$ batch size at a high level. For previous works of stochastic optimization with strongly convex and smooth objective, they use a constant step size throughout their training process~\citep{gorbunov2018accelerated,prasad2018robust, davis2021low, nazin2019algorithms}. However, to ensure their approach make a constant progress at each iteration, they should use \emph{an exponential growing batch size} to reduce variance of gradients. Whereas in our approach, we explicitly control the variance by using \emph{a decayed learning rate and clipping the gradients}. Therefore, we are able to provide a careful analysis of the resulted bounded martingale sequences. 

\newpage

\section{Proof of Theorem \ref{thm:clipped_sgd}}
\label{sec:proof_clipped_sgd}

\begin{proof}

First of all, we let the clipped gradient at step $t$ be $g_t = \clip(\nabla \loss(\theta^{t},\z_t), \lambda)$ and 
let $\epsilon_t = \nabla \calR(\theta^{t}) - g_t $ be the difference between the stochastic gradient and the true gradient at step $t$. Also, we let $\calF_t= \sigma(\z_1,\cdots,\z_t)$ be the $\sigma$-algebra generated by the first $t$ steps of \texttt{clipped-SGD}. Our first step is unrolling the update rule:  $\theta^{t+1} = \calP_\Theta \left(\theta^t - \eta_t \clip(\nabla \loss(\theta^t, \z_t),\lambda) \right)$.

\begin{lemma}
\label{lm:strongly_convex_smooth_bound}
[Lemma 3.11, \citep{bubeck2014convex}] Let $f$ be $M$-smooth and $m$-strongly convex in $\real^p$, then for all $x,y \in \real^p$, we have
$$
\inprod{\nabla f(x) - \nabla f(y)}{x -y} 
\geq \frac{mM}{m+M} \norm{x-y}{2}^2 + \frac{1}{m+M} \norm{\nabla f(y) - \nabla f(x)}{2}^2.
$$
\end{lemma}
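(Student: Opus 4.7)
My plan is to reduce the inequality to the standard co-coercivity (Baillon--Haddad) property for smooth convex functions via the familiar quadratic-subtraction trick. Define $\phi(x) \defeq f(x) - \tfrac{m}{2}\norm{x}{2}^2$. The $m$-strong convexity of $f$ makes $\phi$ convex, while the $M$-smoothness of $f$, combined with having removed curvature of magnitude exactly $m$, makes $\phi$ be $(M-m)$-smooth (its Hessian, where it exists, lies in $[0,\,M-m]\,I$). Co-coercivity for convex $L$-smooth functions then yields
\[
\inprod{\nabla \phi(x) - \nabla \phi(y)}{x - y} \geq \tfrac{1}{M-m}\norm{\nabla \phi(x) - \nabla \phi(y)}{2}^2.
\]

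Next I would substitute $\nabla \phi(z) = \nabla f(z) - m z$ into both sides, expand the square on the right (which produces a cross term $-\tfrac{2m}{M-m}\inprod{\nabla f(x) - \nabla f(y)}{x-y}$ and a quadratic $\tfrac{m^2}{M-m}\norm{x-y}{2}^2$), and collect like terms. After grouping, the coefficient of the inner product on the left becomes $\tfrac{M+m}{M-m}$, while the $\norm{x-y}{2}^2$ coefficient on the right collapses to $\tfrac{mM}{M-m}$. Multiplying through by $\tfrac{M-m}{M+m}$ then delivers exactly the claimed inequality.

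The only genuine obstacle I anticipate is the degenerate case $M = m$, where $\phi$ is $0$-smooth and the co-coercivity constant $1/(M-m)$ formally diverges; this case must be handled separately by noting that a convex, $0$-smooth function has constant gradient, forcing $\nabla f(x) - \nabla f(y) = m(x - y)$, at which point the target inequality collapses to an identity that one verifies by direct substitution. Apart from that small caveat, the proof is purely algebraic bookkeeping and mirrors the derivation in Nesterov's \emph{Introductory Lectures on Convex Optimization} (Theorem 2.1.12). A self-contained proof of the co-coercivity step itself, if desired, can be obtained by applying the descent lemma to $\phi$ at the points $x$ and $y - \tfrac{1}{M-m}(\nabla \phi(y) - \nabla \phi(x))$ (and symmetrically), then summing the two resulting inequalities.
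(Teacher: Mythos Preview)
Your proposal is correct and follows the standard textbook argument. Note, however, that the paper does not supply its own proof of this lemma: it is quoted verbatim as Lemma~3.11 from \citet{bubeck2014convex} and invoked as a black box in the proof of Lemma~\ref{lm:strongly_convex_expansion}. Your reduction via $\phi(x)=f(x)-\tfrac{m}{2}\norm{x}{2}^2$ and co-coercivity is precisely the argument used in Bubeck (and in Nesterov, Theorem~2.1.12), so there is nothing to contrast---you have simply filled in the proof the paper chose to cite rather than reproduce.
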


\begin{lemma}
\label{lm:strongly_convex_expansion}
Under the conditions in theorem \ref{thm:clipped_sgd}, for any $ 1 \leq t \geq N$, we have
\begin{equation}
\norm{\theta^{t+1} - \theta^*}{2}^2 
\leq  \frac{\gamma(\gamma-1)}{(t+\gamma)(t+\gamma-1)} \norm{\theta^1 - \theta^*}{2}^2 
+ \frac{\sum_{i=1}^{t}  (i+\gamma-1) \inprod{\epsilon_i}{\theta^i - \theta^*} }{\tau_\ell(t+\gamma)(t+\gamma-1)} 
+  \frac{\sum_{i=1}^{t} \norm{ \epsilon_i }{2}^2}{2\tau_\ell^2(t+\gamma)(t+\gamma-1)}. 
\end{equation}
\end{lemma}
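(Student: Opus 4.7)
The plan is to distill a one-step recursion for $\norm{\theta^{t+1} - \theta^*}{2}^2$ whose contraction factor is $1 - \tfrac{2}{t+\gamma}$, pair it with the integrating factor $w_t := (t+\gamma)(t+\gamma-1)$ suggested by the statement, and then telescope.

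First, writing $g_t = \clip(\nabla \loss(\theta^t, \z_t), \lambda) = \nabla \calR(\theta^t) - \epsilon_t$ and invoking non-expansiveness of $\calP_\Theta$ (legal since $\theta^* \in \Theta$), the update $\theta^{t+1} = \calP_\Theta(\theta^t - \eta_t g_t)$ gives the standard projected-SGD expansion
\begin{equation*}
\norm{\theta^{t+1} - \theta^*}{2}^2 \leq \norm{\theta^t - \theta^*}{2}^2 - 2\eta_t \inprod{\nabla \calR(\theta^t)}{\theta^t - \theta^*} + 2\eta_t \inprod{\epsilon_t}{\theta^t - \theta^*} + \eta_t^2 \norm{g_t}{2}^2.
\end{equation*}
I would then apply $\tau_\ell$-strong convexity at $\theta^*$ (which gives $\inprod{\nabla \calR(\theta^t)}{\theta^t - \theta^*} \geq \tau_\ell \norm{\theta^t - \theta^*}{2}^2$ since $\nabla \calR(\theta^*) = 0$), and treat $\eta_t^2 \norm{g_t}{2}^2$ by the split $\norm{g_t}{2}^2 \leq 2\norm{\nabla \calR(\theta^t)}{2}^2 + 2\norm{\epsilon_t}{2}^2$, absorbing the residual $\norm{\nabla \calR(\theta^t)}{2}^2$ back into the strong-convexity contraction via the co-coercivity inequality $\norm{\nabla \calR(\theta^t)}{2}^2 \leq \tau_u \inprod{\nabla \calR(\theta^t)}{\theta^t - \theta^*}$ (a direct consequence of Lemma~\ref{lm:strongly_convex_smooth_bound}). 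Because $\gamma$ in Theorem~\ref{thm:clipped_sgd} is chosen to be at least $\tau_u/\tau_\ell$ (up to a constant), the quantity $\eta_t \tau_u$ is uniformly bounded below one, so this absorption preserves the contraction factor $1 - 2\eta_t \tau_\ell = 1 - \tfrac{2}{t+\gamma}$ and leaves only a noise residual of order $\eta_t^2 \norm{\epsilon_t}{2}^2$.

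Next I would multiply the resulting one-step recursion through by $w_t := (t+\gamma)(t+\gamma-1)$, which satisfies the key algebraic identity $w_t \cdot \bigl(1 - \tfrac{2}{t+\gamma}\bigr) = (t+\gamma-1)(t+\gamma-2) = w_{t-1}$. After multiplication the recursion takes the form
\begin{equation*}
w_t \norm{\theta^{t+1} - \theta^*}{2}^2 \leq w_{t-1}\norm{\theta^t - \theta^*}{2}^2 + \tfrac{c_1(t+\gamma-1)}{\tau_\ell}\inprod{\epsilon_t}{\theta^t - \theta^*} + \tfrac{c_2}{\tau_\ell^2}\norm{\epsilon_t}{2}^2
\end{equation*}
for universal constants $c_1, c_2$. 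Summing from $i=1$ to $t$ collapses the left--right weighted squared errors to $w_t \norm{\theta^{t+1} - \theta^*}{2}^2 - w_0 \norm{\theta^1 - \theta^*}{2}^2$ with $w_0 = \gamma(\gamma-1)$, producing exactly the initialization term on the right-hand side of the statement. Dividing through by $w_t = (t+\gamma)(t+\gamma-1)$ then yields the three-term bound.

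The main obstacle is in the one-step expansion: the contraction factor must be exactly $1 - 2\eta_t \tau_\ell$ (not weaker) for $w_t = (t+\gamma)(t+\gamma-1)$ to be the correct integrating factor, yet $\eta_t^2 \norm{g_t}{2}^2$ naturally introduces a $\norm{\nabla \calR(\theta^t)}{2}^2$ term that would weaken the contraction unless $\eta_t \tau_u$ is small. The delay parameter $\gamma$ in the step size is designed precisely for this: it keeps $\eta_t \tau_u$ uniformly below a constant so that the residual can be folded into the strong-convexity contraction via co-coercivity. Once the shape of the one-step recursion is correct, everything that follows is bookkeeping with the integer factors $(t+\gamma), (t+\gamma-1), (t+\gamma-2)$ and a telescoping sum.
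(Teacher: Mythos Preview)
Your proposal is correct and follows essentially the same route as the paper: expand the projected update, split $\norm{g_t}{2}^2 \le 2\norm{\nabla\calR(\theta^t)}{2}^2 + 2\norm{\epsilon_t}{2}^2$, absorb the $\norm{\nabla\calR(\theta^t)}{2}^2$ residual via the co-coercivity inequality of Lemma~\ref{lm:strongly_convex_smooth_bound} together with the condition $\gamma \ge \tau_u/\tau_\ell$, apply strong convexity to obtain the contraction $1-\tfrac{2}{t+\gamma}$, and then unwind. The only cosmetic difference is that you multiply by the integrating factor $w_t=(t+\gamma)(t+\gamma-1)$ and telescope, whereas the paper writes out the product $\prod_{j=i+1}^t \tfrac{j+\gamma-2}{j+\gamma}$ and simplifies it to $\tfrac{(i+\gamma)(i+\gamma-1)}{(t+\gamma)(t+\gamma-1)}$; these are the same computation.
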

\begin{proof}
By the strong convexity of $\calR(\theta)$ and the fact that $\theta^*$ minimizes $\calR(\theta)$ in $\Theta$, we have
\begin{equation}
\inprod{\nabla \calR(\theta^{t})}{\theta^t - \theta^*} \geq \calR(\theta^t) - \calR(\theta^*) + \frac{\tau_\ell}{2} \norm{\theta^t - \theta^*}{2}^2,
\end{equation}
and
\begin{equation}
\calR(\theta^t) - \calR(\theta^*)  \geq  \frac{\tau_\ell}{2} \norm{\theta^t - \theta^*}{2}^2.
\end{equation}
Putting these two inequality together, we have
\begin{equation}
\label{eqn:strong_convex}
\inprod{\nabla \calR(\theta^{t})}{\theta^t - \theta^*} \geq \tau_\ell \norm{\theta^t - \theta^*}{2}^2.
\end{equation}

Also, since $\Theta$ is a convex set, we have $\norm{\calP_{\Theta}(\theta) - \theta^*}{2} \leq \norm{\theta - \theta^*}{2}$ since $\theta^* \in \Theta$. By rewinding the update rule of clipped-SGD algorithm, we have the following:

\begin{align}
& \norm{\theta^{t+1} - \theta^*}{2}^2 \\
& = \norm{\calP_{\Theta} (\theta^t - \eta_t g_t) - \theta^*}{2}^2  \nonumber \\
& \leq \norm{\theta^t - \eta_t g_t - \theta^*}{2}^2  \nonumber \\
& = \norm{\theta^t - \theta^*}{2}^2 - 2 \eta_t \inprod{\theta^t - \theta^*}{g_t} + \eta_t^2 \norm{g_t}{2}^2 \nonumber \\
& = \norm{\theta^t - \theta^*}{2}^2 - 2\eta_t  \inprod{\nabla \calR(\theta^t)}{\theta^t - \theta^*} 
+ 2\eta_t  \inprod{\epsilon_t}{\theta^t - \theta^*} 
+ \eta_t^2\norm{\nabla \calR(\theta^t) + \epsilon_t }{2}^2
\nonumber \\
& \leq \norm{\theta^t - \theta^*}{2}^2 - 2\eta_t  \inprod{\nabla \calR(\theta^t)}{\theta^t - \theta^*} 
+ 2\eta_t  \inprod{\epsilon_t}{\theta^t - \theta^*} 
+ 2\eta_t^2\norm{\nabla \calR(\theta^t) }{2}^2
+ 2\eta_t^2\norm{ \epsilon_t }{2}^2,
\label{eqn:convex_expansion_mid}
\end{align}
where the last inequality is by Cauchy-Schwarz inequality. Then we apply Lemma \ref{lm:strongly_convex_smooth_bound} by initiating with $f = \calR(\cdot)$, $x = \theta^t$ and $y= \theta^*$. By the regularity assumption in Eq. \eqref{eqn:convexity_parameter}, we have 
$$
(\tau_\ell+\tau_u)\inprod{\nabla \calR(\theta^t)}{\theta^t - \theta^*} - \tau_\ell \tau_u\norm{\theta^t - \theta^*}{2}^2
\geq \norm{\nabla \calR(\theta^t)}{2}^2.
$$

Combining the above inequality and Eq. \eqref{eqn:convex_expansion_mid}, we have
\begin{align}
& \norm{\theta^{t+1} - \theta^*}{2}^2 \nonumber \\
& \leq (1 - 2 \eta_t^2\tau_\ell \tau_u )\norm{\theta^t - \theta^*}{2}^2 - (2\eta_t - 2\eta_t^2(\tau_\ell+\tau_u)) \inprod{\nabla \calR(\theta^t)}{\theta^t - \theta^*} + 2\eta_t  \inprod{\epsilon_t}{\theta^t - \theta^*} 
+ 2\eta_t^2\norm{ \epsilon_t }{2}^2
\nonumber \\
& \overset{(i)}{\leq} (1 - 2 \eta_t^2\tau_\ell \tau_u )\norm{\theta^t - \theta^*}{2}^2 - (2\eta_t - 2\eta_t^2(\tau_\ell+\tau_u))\tau_\ell\norm{\theta^t - \theta^*}{2}^2 + 2\eta_t  \inprod{\epsilon_t}{\theta^t - \theta^*} 
+ 2\eta_t^2\norm{ \epsilon_t }{2}^2
\nonumber \\
& = (1 - 2 \eta_t\tau_\ell - 2 \eta_t^2\tau_\ell^2 )\norm{\theta^t - \theta^*}{2}^2
 + 2\eta_t  \inprod{\epsilon_t}{\theta^t - \theta^*} 
+ 2\eta_t^2\norm{ \epsilon_t }{2}^2
\nonumber \\
& \leq (1 - 2 \eta_t\tau_\ell )\norm{\theta^t - \theta^*}{2}^2
 + 2\eta_t  \inprod{\epsilon_t}{\theta^t - \theta^*} 
+ 2\eta_t^2\norm{ \epsilon_t }{2}^2
\nonumber \\
& \leq \left(1-\frac{2}{t+\gamma} \right)\norm{\theta^t - \theta^*}{2}^2
 + \frac{2}{\tau_\ell(t+\gamma)}  \inprod{\epsilon_t}{\theta^t - \theta^*} 
 + \frac{2\norm{ \epsilon_t }{2}^2}{\tau_\ell^2(t+\gamma)^2},
\end{align}
where $(i)$ is due to $\ \forall \ t \geq 0, 2\eta_t \geq 2\eta_t^2(\tau_\ell + \tau_u)  \Leftrightarrow \gamma \geq \tau_u / \tau_\ell$ and Eq. \eqref{eqn:strong_convex}. The last inequality follows from the definition of $\eta_t$. Then we unwind this formula till $t=1$ and we get that for any $t > 1$,
\begin{align}
\label{eqn:unwind_function}
& \norm{\theta^{t+1} - \theta^*}{2}^2 \nonumber \\
& \leq \left(\prod_{j=1}^t \frac{j + \gamma -2}{j+\gamma} \right) \norm{\theta^1 - \theta^*}{2}^2 
+ \sum_{i=1}^{t} \frac{\inprod{\epsilon_i}{\theta^i - \theta^*} }{\tau_\ell(i+\gamma)} \left(\prod_{j=i+1}^t \frac{j + \gamma -2}{j+\gamma} \right) 
+  \sum_{i=1}^{t} \frac{\norm{ \epsilon_i }{2}^2}{2\tau_\ell^2(i+\gamma)^2} \left(\prod_{j=i+1}^t \frac{j + \gamma -2}{j+\gamma} \right) \nonumber \\
& \leq  \frac{\gamma(\gamma-1)}{(t+\gamma)(t+\gamma-1)} \norm{\theta^1 - \theta^*}{2}^2 
+ \frac{\sum_{i=1}^{t}  (i+\gamma-1) \inprod{\epsilon_i}{\theta^i - \theta^*} }{\tau_\ell(t+\gamma)(t+\gamma-1)} 
+  \frac{\sum_{i=1}^{t} \norm{ \epsilon_i }{2}^2}{2\tau_\ell^2(t+\gamma)(t+\gamma-1)} ,
\end{align}
where the last equation is due to 
$$
\prod_{j=i+1}^t \frac{j + \gamma -2}{j+\gamma}
= \frac{(i+\gamma)(i+\gamma-1)}{(t+\gamma)(t+\gamma-1)}.
$$
\end{proof}

Now come back to the proof of Theorem \ref{thm:clipped_sgd}. We note that clipping introduces bias, which influences the convergence of this method. Hence, we decompose the noise term $\epsilon_i = \clip(\nabla \loss(\theta^i,\z_i), \lambda) - \nabla \calR(\theta^i)$ into a bias term $\epsilon_{i}^b $ and a variance term  $\epsilon_{i}^v $, i.e. 
\begin{equation}
\epsilon_i = \epsilon_{i}^b + \epsilon_{i}^v, 
\ \text{ where } \  \epsilon_{i}^b  = \Exp_{\z_i}[\epsilon_i| \calF_{i-1}] 
\ \text{ and } \ 
\epsilon_{i}^v  = \epsilon_i - \Exp_{\z_i}[\epsilon_i| \calF_{i-1}] 
\end{equation}
since $\epsilon_i$ is $\calF_{i-1}$-measurable.
Putting the definition and the result of Lemma \ref{lm:strongly_convex_expansion}, we have
\begin{align}
& \norm{\theta^{t+1} - \theta^*}{2}^2  \nonumber \\
& \leq  \frac{\gamma(\gamma-1)}{(t+\gamma)(t+\gamma-1)} \norm{\theta^1 - \theta^*}{2}^2
+ \frac{\sum_{i=1}^{t}  (i+\gamma-1) \inprod{\epsilon_i}{\theta^i - \theta^*} }{\tau_\ell(t+\gamma)(t+\gamma-1)} 
+  \frac{\sum_{i=1}^{t} \norm{ \epsilon_i }{2}^2}{2\tau_\ell^2(i+\gamma)(i+\gamma-1)} \nonumber \\
& \leq \frac{\gamma(\gamma-1)}{(t+\gamma)(t+\gamma-1)} \norm{\theta^1 - \theta^*}{2}^2
+ \frac{\sum_{i=1}^{t}  (i+\gamma-1) \inprod{\epsilon_i^v}{\theta^i - \theta^*} }{\tau_\ell(t+\gamma)(t+\gamma-1)}
+ \frac{\sum_{i=1}^{t}  (i+\gamma-1) \inprod{\epsilon_i^b}{\theta^i - \theta^*} }{\tau_\ell(t+\gamma)(t+\gamma-1)} 
\nonumber \\
& +  \frac{2\sum_{i=1}^{t} \left( \norm{ \epsilon_i^v }{2}^2 - \Exp_{\z_i}[\norm{ \epsilon_i^v }{2}^2 | \calF_{i-1}] \right) }{\tau_\ell^2(t+\gamma)(t+\gamma-1)}
+  \frac{2\sum_{i=1}^{t} \Exp_{\z_i}[\norm{ \epsilon_i^v }{2}^2 | \calF_{i-1}]}{\tau_\ell^2(t+\gamma)(t+\gamma-1)}
+  \frac{2\sum_{i=1}^{t} \norm{ \epsilon_i^b }{2}^2}{\tau_\ell^2(t+\gamma)(t+\gamma-1)}.
\label{eqn:unrolled_ineq}
\end{align}
where the inequality follows from $\norm{a+b}{2}^2 \leq 2 \norm{a}{2}^2+2 \norm{a}{2}^2$. The rest of the proof is based on the analysis of inequality \eqref{eqn:unrolled_ineq}. To bound it, we first introduce the Freedman's inequality for martingale differences. The following version of Freedman's inequality is in Theorem 1.2A in \citet{victor1999general}.

\begin{lemma}
\label{lm:freedman}
(Freedman's inequality) Let $d_1, d_2, \cdots,d_T$ be a martingale difference sequence with a uniform bound $b$ on the steps $d_i$. Let $V_s$ denote the sum of conditional variances, i.e. 
$$
V_s = \sum_{i=1}^s \var(d_i | d_1,\cdots,d_{i-1}).
$$
Then, for every $a,v > 0$,
$$
\Pr \left(\sum_{i=1}^s d_i \geq a \ \text{ and } \ V_s \leq v \ \text{ for some }\ s \leq T \right)
\leq \exp \left(\frac{-a^2}{2(v+ba)} \right).
$$
\end{lemma}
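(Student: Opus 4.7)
The plan is to prove Freedman's inequality via the classical Cram\'er--Chernoff / exponential supermartingale construction. Fix $\theta \in (0, 1/b)$ (to be optimized at the end), and with $\calF_s \defeq \sigma(d_1,\ldots,d_s)$ and $S_s \defeq \sum_{i=1}^s d_i$, introduce the auxiliary process
\[
M_s \defeq \exp\bigl(\theta S_s - g(\theta)\, V_s\bigr), \qquad g(\theta) \defeq \frac{e^{\theta b} - 1 - \theta b}{b^2}.
\]
Note that $V_s$ is $\calF_{s-1}$-measurable (each $\var(d_i \mid \calF_{i-1})$ is), so $\exp(-g(\theta) V_s)$ serves as a predictable compensator.

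The first step is to verify that $(M_s)$ is a supermartingale with respect to $(\calF_s)$. The key per-step exponential-moment estimate is
\[
\Exp\bigl[e^{\theta d_i} \,\big|\, \calF_{i-1}\bigr] \leq \exp\bigl(g(\theta)\, \var(d_i \mid \calF_{i-1})\bigr),
\]
which I will derive from three ingredients: (i) the elementary inequality $e^y - 1 - y \leq y^2 \,(e^{\theta b} - 1 - \theta b)/(\theta b)^2$ valid for $|y| \leq \theta b$ (which follows from the monotonicity of $y \mapsto (e^y - 1 - y)/y^2$), applied with $y = \theta d_i$ and using $|d_i| \leq b$; (ii) the martingale property $\Exp[d_i \mid \calF_{i-1}] = 0$; and (iii) the standard bound $1 + x \leq e^x$. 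Combined with the predictability of $V_s$, this immediately gives $\Exp[M_s \mid \calF_{s-1}] \leq M_{s-1}$ and in particular $\Exp[M_s] \leq 1$.

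The second step converts the supermartingale bound into a uniform-in-$s$ tail bound by a stopping-time argument. Let $A$ denote the event in the claim and define
\[
\tau \defeq \inf\{s \leq T : S_s \geq a \text{ and } V_s \leq v\},
\]
with $\tau = T$ when the infimum is empty. Since $\tau \leq T$ is bounded and $(M_s)$ is a nonnegative supermartingale, optional stopping gives $\Exp[M_\tau] \leq M_0 = 1$. On the event $A$, by construction $S_\tau \geq a$ and $V_\tau \leq v$, hence $M_\tau \geq \exp(\theta a - g(\theta) v)$. Taking expectations against $\mathbf{1}_A$ then yields
\[
\Pr(A) \leq \exp\bigl(-\theta a + g(\theta) v\bigr).
\]

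Finally I will optimize $\theta$. Using the Taylor-based majorization $g(\theta) \leq \theta^2 / (2(1 - \theta b))$ for $\theta b \in (0,1)$ --- obtained by comparing the series $\sum_{k\geq 2}(\theta b)^k/k!$ term-by-term via $k! \geq 2$ and summing the resulting geometric tail --- the exponent is at most $-\theta a + \theta^2 v / (2(1 - \theta b))$. Choosing $\theta = a/(v + ab) \in (0,1/b)$ makes $1 - \theta b = v/(v + ab)$, and a direct algebraic simplification collapses the exponent to exactly $-a^2/(2(v + ab))$, matching the claim. The main technical obstacle is packaging the per-step exponential-moment estimate into a clean supermartingale form (so that $V_s$ appears with the correct predictable compensator); once that is in place, the remaining steps --- optional stopping on a bounded $\tau$ and a one-line Taylor/optimization calculation --- are essentially bookkeeping.
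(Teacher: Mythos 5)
Your proof is correct. One thing to flag up front: the paper does not actually prove this lemma --- it is quoted verbatim as Theorem 1.2A of \citet{victor1999general} and used as a black box --- so there is no in-paper argument to compare against. What you have written is the standard (and complete) derivation: the per-step bound $\Exp[e^{\theta d_i}\mid\calF_{i-1}]\le 1+g(\theta)\var(d_i\mid\calF_{i-1})\le e^{g(\theta)\var(d_i\mid\calF_{i-1})}$ via monotonicity of $y\mapsto (e^y-1-y)/y^2$, the predictability of $V_s$ making $M_s$ a nonnegative supermartingale, optional stopping at the first time both $S_s\ge a$ and $V_s\le v$ (which is precisely what yields the ``for some $s\le T$'' uniformity in the statement), and the choice $\theta=a/(v+ab)$ with the crude bound $k!\ge 2$ giving $g(\theta)\le \theta^2/(2(1-\theta b))$ and hence exactly the stated exponent $-a^2/(2(v+ba))$ rather than the sharper Bernstein constant $v+ba/3$. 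Two cosmetic remarks: (i) since $(e^y-1-y)/y^2$ is increasing on all of $\mathbb{R}$, only the one-sided bound $d_i\le b$ is needed, so your restriction to $|y|\le\theta b$ is harmless but stronger than necessary; (ii) in the final step you should pass through $\Exp[M_\tau\mathbf{1}_A]\le\Exp[M_\tau]\le 1$ explicitly, but that is exactly the bookkeeping you describe. No gaps.
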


Next, to apply Freedman's inequality to bound the martingale difference sequence, e.g. $\inprod{\epsilon_i^v}{\theta^i - \theta^*}$ in the second term in Eq. \eqref{eqn:unrolled_ineq}, we should control the conditional variance $\Exp_{\z_i}[\norm{\epsilon_i^v}{2}^2 | \calF_{i-1}]$ and the upper bound of L2-norm $\norm{\epsilon_i^v}{2}$. Also, as in the third and sixth term in Eq. \eqref{eqn:unrolled_ineq}, we should control the magnitude of the bias term, $\norm{\epsilon_i^v}{2}$ for all $0 \leq i \leq t$. We introduce the following lemma to control these noise terms:

\begin{lemma}
\label{lm:noise_terms_ub}
(Lemma F.5, ~\citep{gorbunov2020stochastic} )
For any $i=0,2,..,t$, we have
\begin{equation}
\label{eqn:v_norm_ub}
\norm{\epsilon_i^v}{2}
\leq  2\lambda.
\end{equation}
Moreover, for all $i=1,2,..,N$, assume that the variance of stochastic gradients is bounded by $\sigma_i^2$, i.e. $\Exp_{\z_i}[\norm{\nabla \loss(\theta^i, \z_i) - \nabla \calR(\theta^i)}{2}^2 | \calF_{i-1}] \leq \sigma_i^2 $ and assume that the norm of the true gradient is less than $\lambda/2$, i.e.
$ \norm{\nabla \calR(\theta^i)}{2} \leq \lambda/2$. Then we have
\begin{equation}
\label{eqn:b_norm_ub}
\norm{\epsilon_i^b}{2}
\leq \frac{4\sigma_i^2}{\lambda} , \text{and}
\end{equation}
\begin{equation}
\label{eqn:exp_var_ub}
\Exp_{\z_i}[\norm{\epsilon_i^v}{2}^2  | \calF_{i-1}] \leq 18 \sigma_i^2  \ \ 
\text{ for all } i=1,2,...,N.
\end{equation}
\end{lemma}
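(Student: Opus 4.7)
}

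Throughout, fix the step $i$ and condition on $\calF_{i-1}$; write $g = \nabla \loss(\theta^i,\z_i)$ and $\bar g = \nabla \calR(\theta^i) = \Exp[g \mid \calF_{i-1}]$ so that $\epsilon_i = \bar g - \clip(g,\lambda)$, $\epsilon_i^b = \Exp[\epsilon_i \mid \calF_{i-1}]$, $\epsilon_i^v = \epsilon_i - \epsilon_i^b$. The core geometric observation that drives all three bounds is: on the event $\{\|g\|_2 > \lambda\}$, the assumption $\|\bar g\|_2 \leq \lambda/2$ forces $\|g-\bar g\|_2 \geq \|g\|_2 - \|\bar g\|_2 \geq \|g\|_2/2$, i.e.\ $\|g\|_2 \leq 2\|g-\bar g\|_2$, and in particular $\|g-\bar g\|_2 \geq \lambda/2$.

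For the uniform bound \eqref{eqn:v_norm_ub}, I observe that the deterministic term $\bar g$ cancels in the centering, so
\[
\epsilon_i^v \;=\; \Exp[\clip(g,\lambda)\mid \calF_{i-1}] \;-\; \clip(g,\lambda).
\]
Each of the two terms has $\ell_2$-norm at most $\lambda$ (by the definition of $\clip$ and Jensen's inequality), so the triangle inequality yields $\|\epsilon_i^v\|_2 \leq 2\lambda$. This part requires no moment assumption and no condition on $\|\bar g\|_2$.

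For the bias bound \eqref{eqn:b_norm_ub}, I start from $\epsilon_i^b = \Exp[\,\bar g - \clip(g,\lambda)\mid\calF_{i-1}] = \Exp[g - \clip(g,\lambda)\mid\calF_{i-1}]$ (since $\bar g = \Exp[g\mid\calF_{i-1}]$), and note that $g - \clip(g,\lambda) = 0$ whenever $\|g\|_2 \leq \lambda$, while $\|g-\clip(g,\lambda)\|_2 \leq \|g\|_2$ always. Hence $\|\epsilon_i^b\|_2 \leq \Exp[\|g\|_2 \mathbf{1}\{\|g\|_2 > \lambda\}\mid\calF_{i-1}]$. On $\{\|g\|_2 > \lambda\}$ the geometric observation gives $\|g\|_2 \leq 2\|g-\bar g\|_2$ and $\|g-\bar g\|_2 \geq \lambda/2$, so
\[
\|g\|_2 \mathbf{1}\{\|g\|_2 > \lambda\} \;\leq\; 2\|g-\bar g\|_2 \cdot \frac{2\|g-\bar g\|_2}{\lambda} \;=\; \frac{4\|g-\bar g\|_2^2}{\lambda}.
\]
Taking conditional expectation and using $\Exp[\|g-\bar g\|_2^2\mid\calF_{i-1}] \leq \sigma_i^2$ produces $\|\epsilon_i^b\|_2 \leq 4\sigma_i^2/\lambda$.

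For the conditional second-moment bound \eqref{eqn:exp_var_ub}, I use that centering can only shrink the second moment, namely $\Exp[\|\epsilon_i^v\|_2^2\mid\calF_{i-1}] \leq \Exp[\|\epsilon_i\|_2^2\mid\calF_{i-1}]$. Splitting $\epsilon_i = (\bar g - g) + (g - \clip(g,\lambda))$ and applying $\|a+b\|_2^2 \leq 2\|a\|_2^2+2\|b\|_2^2$ yields
\[
\|\epsilon_i\|_2^2 \;\leq\; 2\|g-\bar g\|_2^2 \;+\; 2\|g-\clip(g,\lambda)\|_2^2.
\]
The first term contributes $2\sigma_i^2$ in expectation. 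For the second, $\|g-\clip(g,\lambda)\|_2^2 \leq \|g\|_2^2\mathbf{1}\{\|g\|_2>\lambda\} \leq 4\|g-\bar g\|_2^2 \mathbf{1}\{\|g\|_2>\lambda\}$ by the geometric observation, and its conditional expectation is at most $4\sigma_i^2$. Summing gives $\Exp[\|\epsilon_i^v\|_2^2\mid\calF_{i-1}] \leq 10\sigma_i^2 \leq 18\sigma_i^2$; the looser constant $18$ quoted in the lemma leaves slack that can absorb, e.g., using $\|\epsilon_i\|_2^2 \leq (\|\bar g-g\|_2 + \|g-\clip(g,\lambda)\|_2)^2$ with a different constant split. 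The main (indeed only) subtlety across all three parts is recognizing that the hypothesis $\|\bar g\|_2 \leq \lambda/2$ is exactly what converts the clipping indicator $\mathbf{1}\{\|g\|_2>\lambda\}$ into a quantity controlled by $\|g-\bar g\|_2$, so that variance assumptions alone suffice to bound both bias and variance of the clipped noise.
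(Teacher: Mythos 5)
Your proof is correct. The paper does not prove this lemma itself---it imports it verbatim as Lemma F.5 of \citet{gorbunov2020stochastic}---and your argument is essentially the standard one from that source: the identity $\epsilon_i^v = \Exp[\clip(g,\lambda)\mid\calF_{i-1}]-\clip(g,\lambda)$ with Jensen for the uniform bound, and the key observation that $\|\bar g\|_2\le\lambda/2$ forces $\|g\|_2\le 2\|g-\bar g\|_2$ and $\|g-\bar g\|_2\ge\lambda/2$ on the clipping event, which converts both the bias and the excess second moment into multiples of $\sigma_i^2$. Your constants are valid (indeed you obtain $10\sigma_i^2$, slightly sharper than the quoted $18\sigma_i^2$), so nothing further is needed.
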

Also, recall that we have an assumption about the variance of stochastic gradient in Eq. \eqref{eqn:asp_variance_sg}:
there exist $\alpha(P,\loss)$ and $\beta(P,\loss)$ such that for every $\theta \in \Theta$:
$$
\Exp_{\z \sim P} [\norm{\nabla \loss(\theta,\z) - \nabla \calR(\theta)}{2}^2] \leq \alpha(P, \loss)\norm{\theta-\theta^*}{2}^2 + \beta(P, \loss).
$$

For brevity, in the rest of the proof, let $\alpha = \alpha(P, \loss)$ and $\beta = \beta(P, \loss)$. Also, to apply Lemma \ref{lm:noise_terms_ub}, we let $\sigma_i^2 =  \alpha \norm{\theta^i - \theta^*}{2}^2 + \beta$. Then, we have 

$$
\Rightarrow \Exp_{\z_i}[\norm{\nabla \loss(\theta^i, z_i) - \nabla \calR(\theta^i)}{2}^2 | \calF_{i-1}] 
\leq \sigma_i^2 =  \alpha \norm{\theta^i - \theta^*}{2}^2 + \beta.
$$

Now, with these two lemmas in hands, we start to analyze Eq. \eqref{eqn:unrolled_ineq}.
We first define a new constant $A$ and $C$:
\begin{equation}
\label{eqn:def_A}
A =  C_1^2 \left(
\gamma(\gamma-1)\norm{\theta^1 - \theta^*}{2}^2
+ \frac{(N+\gamma)\beta\log(2/\delta)}{\tau_\ell^2}  \right) 
\ \ \text{ and } \ \ 
C = 5000.
\end{equation}
Recall that $C_1 \geq 1$ is a scaling constant in Theorem \ref{thm:clipped_sgd}. We note that the clipping level $\lambda$ can be written in the following form:
\begin{equation}
\label{eqn:def_lambda}
\lambda = \frac{\tau_\ell\sqrt{A}}{ \log(2/\delta)}.
\end{equation}

Then we introduce new random variables: for $1 \leq i \leq N$.
\begin{equation}
\label{eqn:def_zeta}
\zeta_i = 
\begin{cases}
\theta^i - \theta^* & \text{, if } \norm{\theta^i - \theta^* }{2}^2 \leq \frac{CA}{(i+\gamma-1)(i+\gamma-2)}. \\
0 & \text{, otherwise. }
\end{cases}
\end{equation}
We note that these random variables are bounded almost surely, i.e. 
$$
\Pr \left(\norm{\zeta_i}{2} \leq  \sqrt{\frac{CA}{(i+\gamma-1)(i+\gamma-2)}} \ \right) = 1.
$$
Next, we introduce the following claim to control these two martingale difference sequences:$\{(i+\gamma-1) \inprod{\epsilon_i^v}{\zeta_i}\}_{ 1 \leq i \leq N}$ and $\{\norm{ \epsilon_i^v }{2}^2 - \Exp[\norm{\epsilon_i^v}{2}^2 | \calF_{i-1}]\}_{ 1 \leq i \leq N}$, which appeared in the second and forth terms in Eq.\eqref{eqn:unrolled_ineq} respectively.

\begin{claim}
\label{clm:freedman}
Define 
$
X_i = (i+\gamma-1) \inprod{\epsilon_i^v}{\zeta_i}
\ \text{ and }  \ 
Y_i = \norm{ \epsilon_i^v }{2}^2 - \Exp_{\z_i}[\norm{\epsilon_i^v}{2}^2 | \calF_{i-1}]$
for $1 \leq i \leq N$ be two sequence and let $$
v_i =  \var[X_i | \calF_{i-1}]
\ \text{ and }  \ 
u_i =  \var[Y_i | \calF_{i-1}]
\ \ \text{ for }  1 \leq i \leq N
$$
be its conditionally variances.
Then with probability at least $1 - \delta$, the following event holds: for all $1 \leq s \leq N$,
\begin{equation}
\label{eqn:bound_circled1}
\sum_{i=1}^s X_i < 20 \tau_{\ell}A\sqrt{C} 
+ 2A\tau_\ell C^{3/4}
\ \ \text{or} \ \  \sum_{i=1}^s v_i > \frac{36A^2 C \tau_\ell^2}{\log(2/\delta)} + \frac{3A^2 C^{3/2} \tau_\ell^2}{4\log(2/\delta)},
\end{equation}
and
\begin{equation}
\label{eqn:bound_circled3}
\sum_{i=1}^s Y_i <  118 \tau_{\ell}^2 A + 15 A\tau_\ell^2 C^{1/4} 
\ \ \text{or} \ \  \sum_{i=1}^s u_i >\frac{108 A^2 \tau_\ell^4\sqrt{C}}{\log(2/\delta)} +  \frac{5184  \tau_\ell^4 A^2}{\log(2/\delta)}.
\end{equation}
\end{claim}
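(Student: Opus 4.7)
The plan is to apply Freedman's inequality (Lemma \ref{lm:freedman}) twice, once to each martingale difference sequence $\{X_i\}$ and $\{Y_i\}$, with failure probability $\delta/2$ each, and conclude via a union bound. The martingale property is immediate: $\zeta_i$ is $\calF_{i-1}$-measurable (it is a function of $\theta^i$, which depends only on $z_1,\ldots,z_{i-1}$) and $\Exp[\epsilon_i^v \mid \calF_{i-1}] = 0$ by construction, so $\Exp[X_i \mid \calF_{i-1}] = 0$; for $Y_i$ the centering is baked in.

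To apply Freedman, I need a uniform step bound $b$ and a bound on the quadratic-variation accumulator. For $X_i$, Cauchy-Schwarz together with the truncation $\|\zeta_i\|_2 \leq \sqrt{CA/((i+\gamma-1)(i+\gamma-2))}$ and the unconditional bound $\|\epsilon_i^v\|_2 \leq 2\lambda$ from Eq.\eqref{eqn:v_norm_ub} gives $|X_i| \leq 2\lambda\sqrt{CA(i+\gamma-1)/(i+\gamma-2)} \leq 2\sqrt{2}\lambda\sqrt{CA}$ once $\gamma \geq 2$, which after substituting $\lambda = \tau_\ell\sqrt{A}/\log(2/\delta)$ yields $b_X = O(\tau_\ell A\sqrt{C}/\log(2/\delta))$. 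For $Y_i$, squaring the same pointwise bound gives $|Y_i| \leq 4\lambda^2 = 4\tau_\ell^2 A/\log^2(2/\delta)$.

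For the conditional variances, the key point is that when $\zeta_i \neq 0$ the truncation forces $\|\theta^i-\theta^*\|_2^2 \leq CA/((i+\gamma-1)(i+\gamma-2))$, and the choice $\gamma \gtrsim (\tau_u/\tau_\ell)\log(2/\delta)$ in Theorem \ref{thm:clipped_sgd} is calibrated so that this in turn forces $\|\nabla\calR(\theta^i)\|_2 \leq \tau_u\|\theta^i-\theta^*\|_2 \leq \lambda/2$, unlocking the bound $\Exp[\|\epsilon_i^v\|_2^2 \mid \calF_{i-1}] \leq 18\sigma_i^2$ from Eq.\eqref{eqn:exp_var_ub}. Then $v_i \leq (i+\gamma-1)^2\|\zeta_i\|_2^2 \cdot 18\sigma_i^2 \leq 36 C A \sigma_i^2$ with $\sigma_i^2 \leq \alpha \cdot CA/((i+\gamma-1)(i+\gamma-2)) + \beta$, and telescoping $\sum 1/((i+\gamma-1)(i+\gamma-2)) \leq 1/(\gamma-1)$ together with $\gamma-1 \gtrsim \alpha \log(2/\delta)/\tau_\ell^2$ and $N\beta \leq A\tau_\ell^2/\log(2/\delta)$ produces the two-term bound in Eq.\eqref{eqn:bound_circled1}. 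An analogous computation for $u_i \leq \Exp[\|\epsilon_i^v\|_2^4 \mid \calF_{i-1}] \leq 4\lambda^2 \Exp[\|\epsilon_i^v\|_2^2 \mid \calF_{i-1}] \leq 72\lambda^2 \sigma_i^2$ yields the threshold in Eq.\eqref{eqn:bound_circled3}. The two-term structure of $a$ in each bound then reflects the standard sufficient condition $a \geq 2b\log(2/\delta) + \sqrt{2v\log(2/\delta)}$ for Freedman: the $\sqrt{C}$ (resp.\ $C^{1/4}$) coefficient absorbs the variance piece and the $C^{3/4}$ (resp.\ $C^{1/2}$-level) coefficient absorbs the boundedness piece.

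The main obstacle is the apparent circularity in controlling the variance of $Y_i$: unlike $X_i$, the random variable $Y_i$ is not multiplied by the truncated $\zeta_i$, so we cannot use the truncation to automatically cap $\sigma_i^2$. Handling this cleanly requires one of two devices — either restricting attention to the event on which $\zeta_j = \theta^j - \theta^*$ holds for all $j \leq i$ (so that the gradient bound $\|\nabla\calR(\theta^i)\|_2 \leq \lambda/2$ is in force whenever Eq.\eqref{eqn:exp_var_ub} is invoked), or invoking a stopping-time argument that freezes the sequence the first time the truncation is activated. Either way, on the event recorded in Eq.\eqref{eqn:bound_circled1}, the bound on $\sum u_i$ is valid, and the final claim follows from combining the two Freedman applications by a union bound. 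The rest is bookkeeping of the constants $C=5000$, $144$, and $96$ from the definitions of $\gamma$ and $\lambda$ to verify that the stated thresholds indeed satisfy $a^2 \geq 2(v+ba)\log(2/\delta)$.
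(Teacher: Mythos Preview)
Your high-level plan—two applications of Freedman's inequality with failure probability $\delta/2$ each, then a union bound—is exactly right, and your step bounds $|X_i|\le O(\tau_\ell A\sqrt{C}/\log(2/\delta))$ and $|Y_i|\le 4\lambda^2$ are correct (the paper uses the looser $8\lambda^2$ for $Y_i$, but that is immaterial). You also correctly identify the sufficient condition $a\ge 2b\log(2/\delta)+\sqrt{2v\log(2/\delta)}$.

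However, you are over-reading what the claim actually asserts, and this leads you to invent an obstacle that is not there. Look again at the logical form of Eq.~\eqref{eqn:bound_circled1} and Eq.~\eqref{eqn:bound_circled3}: each is a \emph{disjunction} ``$\sum X_i<a$ \textbf{or} $\sum v_i>v$'' holding for all $s$. This is precisely the complement of the Freedman bad event ``$\sum X_i\ge a$ \textbf{and} $\sum v_i\le v$ for some $s$''. Consequently, the proof of the claim requires \emph{no} control on $\sum v_i$ or $\sum u_i$ whatsoever: you simply (i) verify the unconditional step bound $b$, (ii) plug in the stated values of $v$ and $a$, (iii) check the arithmetic $a^2\ge 2(v+ba)\log(2/\delta)$, and (iv) take complements. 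That is the paper's entire proof of the claim.

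All of the variance-accumulator computations you carry out—bounding $v_i\le 36CA\sigma_i^2$, telescoping $\sum 1/((i+\gamma-1)(i+\gamma-2))$, invoking $\gamma-1\gtrsim \alpha\log(2/\delta)/\tau_\ell^2$—do appear in the paper, but not here: they belong to the \emph{subsequent} induction step, where the induction hypothesis $\|\theta^i-\theta^*\|_2^2\le CA/((i+\gamma-1)(i+\gamma-2))$ is available and is used to show that the second disjunct in each of \eqref{eqn:bound_circled1}, \eqref{eqn:bound_circled3} fails, forcing the first. Your ``circularity'' concern about $u_i$ (that $Y_i$ is not multiplied by $\zeta_i$, so the truncation does not automatically cap $\sigma_i^2$) is therefore a phantom for this claim; in the main proof it is resolved not by a stopping-time argument but simply by the induction hypothesis, which supplies $\|\nabla\calR(\theta^i)\|_2\le\lambda/2$ directly and makes Eq.~\eqref{eqn:exp_var_ub} available when bounding $\sum_{i=1}^n u_i$.
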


We will explain the choice of these parameters in Claim \ref{clm:freedman} later.  Now, we denote $E$ be the event that Eq. \eqref{eqn:bound_circled1} and \eqref{eqn:bound_circled3} holds for all $1 \leq s \leq N$. We note that by Claim \ref{clm:freedman}, $E$ holds with probability $1-\delta$, i.e.
$$
\Pr(E) \geq 1- \delta.
$$ 
Then we prove that if $E$ holds,
\begin{equation}
\label{eqn:induction_hypothesis}
\norm{\theta^t - \theta^*}{2}^2
\leq \frac{AC}{(t+\gamma-1)(t+\gamma-2)},
\end{equation}
for $1 \leq t \leq N+1$ by induction.

First of all, we prove that it holds for $t=1$. 
From our definition of constant $A$ in Eq.\eqref{eqn:def_A} and the fact that $C_1 \geq 1$, this case holds trivially, i.e.
$$
\norm{\theta^1 - \theta^*}{2}^2
\leq \frac{C}{\gamma(\gamma-1)}\left(
\gamma(\gamma-1)\norm{\theta^1 - \theta^*}{2}^2
+ \frac{(N+\gamma)\beta\log(2/\delta)}{\tau_\ell^2}  \right) = \frac{AC}{\gamma(\gamma-1)}.
$$
Next, we assume that Eq. \eqref{eqn:induction_hypothesis} holds for $t=1,\cdots,n$. When $t=n+1$, by Eq. \eqref{eqn:unrolled_ineq}, we have
\begin{align}
& (n+\gamma)(n+\gamma-1)\norm{\theta^{n+1} - \theta^*}{2}^2 \nonumber \\
& \leq
\underbrace{\gamma(\gamma-1) \norm{\theta^1 - \theta^*}{2}^2}_{\circledOne}
+ \underbrace{\frac{\sum_{i=1}^{n}  (i+\gamma-1) \inprod{\epsilon_i^v}{\theta^i - \theta^*} }{\tau_\ell}}_{\circledTwo}
+ \underbrace{\frac{\sum_{i=1}^{n}  (i+\gamma-1) \inprod{\epsilon_i^b}{\theta^i - \theta^*} }{\tau_\ell} }_{\circledThree}
\nonumber \\
& + \underbrace{ \frac{2\sum_{i=1}^{n} \norm{ \epsilon_i^v }{2}^2 - \Exp_{z_i}[\norm{ \epsilon_i^v }{2}^2 | \calF_{i-1}] }{\tau_\ell^2}}_{\circledFour}
+ \underbrace{ \frac{2\sum_{i=1}^{n} \Exp_{z_i}[\norm{ \epsilon_i^v }{2}^2 | \calF_{i-1}]}{\tau_\ell^2}}_{\circledFive}
+  \underbrace{\frac{2\sum_{i=1}^{n} \norm{ \epsilon_i^b }{2}^2}{\tau_\ell^2}}_{\circledSix}.
\label{eqn:unrolled_ineq2}
\end{align}

\textbf{Check conditions in Lemma \ref{lm:noise_terms_ub}:}
Before we upper bound $\circledOne \sim \circledSix$, we first prove that $ \norm{\nabla \calR(\theta^t)}{2} \leq 2 \lambda$. Since $\calR(\cdot)$ is $\tau_u$-smooth by assumption in Eq.\ref{eqn:convexity_parameter}, or its gradient is $\tau_u$-Lipschitz, for $1 \leq t \leq n$ we have 
\begin{align}
\norm{\nabla \calR(\theta^t)}{2}
 \leq \tau_u\norm{\theta^t - \theta^*}{2} 
& \overset{\eqref{eqn:induction_hypothesis}}{\leq} \tau_u  \sqrt{\frac{AC}{(t+\gamma-1)(t+\gamma-2)}} \nonumber \\
& \overset{t \geq 1}{\leq }
\tau_u  \sqrt{\frac{AC}{\gamma(\gamma-1)}} \nonumber \\
& \leq \frac{\tau_u}{\gamma-1}\sqrt{AC} \nonumber \\
& \overset{\eqref{eqn:def_hyperparemeter}}{\leq} \frac{2\tau_\ell C_1 \sqrt{A}}{\log(2/\delta)} = 2\lambda,
\end{align}
where the last inequality is due to the definition of $\gamma -1  > \frac{\sqrt{C}}{2}  \frac{\tau_u}{\tau_\ell} \log(2/\delta)$.
Therefore, the condition in Lemma \ref{lm:noise_terms_ub} holds for $1 \leq t \leq n$, which means we could use Eq.\eqref{eqn:b_norm_ub} and \eqref{eqn:exp_var_ub} to control the bias and variance terms for $ 1 \leq t \leq n$.
 
\paragraph{Upper bounds for $\circledOne$:}
\begin{equation}
\label{eqn:final_bound_circledOne}
\circledOne \leq C_1^2 \left( \gamma(\gamma-1)\norm{\theta^1 - \theta^*}{2}^2
+ \frac{(N+\gamma)\beta\log(2/\delta)}{\tau_\ell^2} \right) = A.
\end{equation}

\paragraph{Upper bounds for $\circledTwo$:}
Recall that our definition of bounded variable $\zeta_i$ in Eq. \eqref{eqn:def_zeta} and martingale difference sequence $\{X_i\}_{1\leq i \leq n}$ in Claim \ref{clm:freedman}. Under the induction hypothesis, we have 
$$
\zeta_i = \theta^i - \theta^* 
\ \ \text{ for $1 \leq i \leq n$ and } \ \ 
\circledTwo = \frac{\sum_{i=1}^n X_i}{\tau_{\ell}}.
$$
We first show that the sum of its conditional variances are upper bounded, i.e.
$$
\sum_{i=1}^n \var[X_i^2 | \calF_{i-1}] = \sum_{i=1}^n v_i \leq \frac{36A^2 C \tau_\ell^2}{\log(2/\delta)}
+ \frac{3A^2 C^{3/2} \tau_\ell^2}{4\log(2/\delta)}.
$$
Since $X_i = (i+\gamma-1)\inprod{\epsilon_i^v}{\zeta_i}$ for $1 \leq i \leq n$, we have 
\begin{align}
\sum_{i=1}^n v_i 
& \leq \sum_{i=1}^n  (i+\gamma-1)^2  \Exp[\norm{\zeta_i}{2}^2\norm{\epsilon_i^v}{2}^2| \calF_{i-1}] 
\nonumber \\
& \overset{\eqref{eqn:def_zeta}}{\leq} \sum_{i=1}^n  \frac{(i+\gamma-1)^2CA}{(i+\gamma-1)(i+\gamma-2)} \Exp[\norm{\epsilon_i^v}{2}^2| \calF_{i-1}] 
\nonumber \\
& \leq 2CA \sum_{i=1}^n \Exp[\norm{\epsilon_i^v}{2}^2 | \calF_{i-1}] \nonumber \\
& \overset{\eqref{eqn:v_norm_ub},\eqref{eqn:asp_variance_sg}}{\leq} 
2CA \sum_{i=1}^n 18 (\alpha \norm{\theta^i - \theta^*}{2}^2 + \beta) \nonumber \\
& \overset{\eqref{eqn:induction_hypothesis}}{\leq} 36CA \left( n \beta +  \alpha\sum_{i=1}^n \frac{AC}{(i+\gamma-1)(i+\gamma-2)}
 \right)
\nonumber \\
& \leq 36CA \left( n \beta +  \alpha AC \sum_{i=1}^n \left(\frac{1}{i+\gamma-2} - \frac{1}{i+\gamma-1 } \right)
 \right)
\nonumber \\
& \leq 36CA \left( n \beta +  \frac{ \alpha AC }{\gamma-1}\right)
\nonumber \\
& \overset{n \leq N}{\leq} 36\beta NAC +  \frac{ 36\alpha A^2C^2 }{\gamma-1}.
\end{align}
Since we have $A \overset{\eqref{eqn:def_A}}{\geq} \frac{(N+\gamma)\beta\log(2/\delta)}{\tau_\ell^2} \geq \frac{\beta N \log(2/\delta)} {\tau_\ell^2} $ and $\gamma -1 \overset{\eqref{eqn:def_hyperparemeter}}{\geq} \frac{48\alpha \sqrt{C}\log(2/\delta)}{\tau_\ell^2}$, we have 
\begin{align}
\sum_{i=1}^n v_i 
& \leq 36(\beta N) AC +  \frac{ 36\alpha A^2C^2 }{\gamma-1} \nonumber \\
& \leq \frac{36A^2 C \tau_\ell^2}{\log(2/\delta)}
+ \frac{3A^2 C^{3/2} \tau_\ell^2}{4\log(2/\delta)}.
\end{align}

Therefore, we know the second inequality in Eq. \eqref{eqn:bound_circled1} does not hold, so the first one must hold for $t=n$, i.e. 
$$
\sum_{i=1}^n X_i < 20 \tau_{\ell}A\sqrt{C} 
+ 2A\tau_\ell C^{3/4}.
$$
Then we can upper bound $\circledTwo$:
\begin{align}
\circledTwo 
 = \frac{\sum_{i=1}^n X_i}{\tau_{\ell}} 
\leq A(20\sqrt{C} + 2C^{3/4})
\label{eqn:final_bound_circledTwo}
\end{align}

\paragraph{Upper bound $\circledThree$:}
By Eq.\eqref{eqn:b_norm_ub} in Lemma \ref{lm:noise_terms_ub}, we have $\norm{\epsilon_t^b}{2} \leq \frac{4(\alpha \norm{\theta^t -\theta^*}{2}^2 + \beta)}{\lambda}$. Then we have 
\begin{align}
\circledThree 
& = \frac{\sum_{i=1}^{n}  (i+\gamma-1) \inprod{\epsilon_i^b}{\theta^i - \theta^*} }{\tau_\ell} \nonumber \\
& \leq \frac{\sum_{i=1}^{n}  (i+\gamma-1) \norm{\epsilon_i^b}{2} \norm{\theta^i - \theta^*}{2} }{\tau_\ell} \nonumber \\
& \overset{\eqref{eqn:b_norm_ub}}{\leq}
\frac{\sum_{i=1}^{n} 4(i+\gamma-1)( \alpha \norm{\theta^i - \theta^*}{2}^3 + \beta\norm{\theta^i - \theta^*}{2}) }{\lambda \tau_\ell} \nonumber \\
& \overset{\eqref{eqn:def_lambda},\eqref{eqn:induction_hypothesis}}{\leq}
\frac{\sum_{i=1}^{n} 4(i+\gamma-1) \sqrt{\frac{CA}{(i+\gamma-1)(i+\gamma-2)}}( \frac{CA\alpha}{(i+\gamma-1)(i+\gamma-2)} + \beta) \log(2/\delta)}{ \tau_\ell^2 \sqrt{A}} 
\nonumber \\
& \leq \frac{\sum_{i=1}^{n} 8 \sqrt{CA}( \frac{CA\alpha}{(i+\gamma-1)(i+\gamma-2)} + \beta) \log(2/\delta)}{ \tau_\ell^2 \sqrt{A}} 
\nonumber \\
& \leq 
\frac{\sum_{i=1}^{n} 8 \beta \sqrt{C}\log(2/\delta) +   8\alpha AC  \sqrt{C}\log(2/\delta) \left( \frac{1}{(i+\gamma-2)}  - \frac{1}{(i+\gamma-1)}\right) }{ \tau_\ell^2 }
\nonumber \\
& \overset{n \leq N}{\leq} 
\frac{ 8 N\beta \sqrt{C}\log(2/\delta)}{ \tau_\ell^2 } + \frac{8\alpha AC  \sqrt{C}\log(2/\delta)}{\tau_\ell^2(\gamma -1) }.
\end{align}
Next, since 
$$
A \geq \frac{(N+\gamma)\beta\log(2/\delta)}{\tau_\ell^2} \geq \frac{N\beta\log(2/\delta)}{\tau_\ell^2} 
\ \ \text{ and } \ \
\gamma -1 \overset{\eqref{eqn:def_hyperparemeter}}{\geq} \frac{48\alpha \sqrt{C} \log(2/\delta)}{\tau_\ell^2},
$$
We have 
\begin{equation}
\label{eqn:final_bound_circledThree}
\circledThree 
\leq 8\sqrt{C}\frac{  N\beta \log(2/\delta)}{ \tau_\ell^2 } + \frac{AC}{6} \times \frac{48\alpha \sqrt{C}\log(2/\delta)}{\tau_\ell^2(\gamma -1) } 
\leq A \left(  8 \sqrt{C}+ \frac{C}{6} \right).
\end{equation}

\paragraph{Upper bound of $\circledFour$:}
Recall that our definition of martingale difference sequence $\{Y_i\}_{1\leq i \leq n}$ in Claim \ref{clm:freedman}. We have
$$
\circledFour
= \frac{2\sum_{i=1}^{n} \left(\norm{ \epsilon_i^v }{2}^2 - \Exp_{z_i}[\norm{ \epsilon_i^v }{2}^2 | \calF_{i-1}]  \right) }{\tau_\ell^2}
= \frac{2\sum_{i=1}^n Y_i}{\tau_{\ell}^2}.
$$
We first show that the sum of its conditional variances is upper bounded, i.e.
$$
\sum_{i=1}^n \var[Y_i^2 | \calF_{i-1}] = \sum_{i=1}^n u_i \leq \frac{108 A^2 \tau_\ell^4\sqrt{C}}{\log(2/\delta)} +  \frac{5184  \tau_\ell^4 A^2}{\log(2/\delta)}.
$$
Since we have $\norm{\epsilon_i^v}{2} \leq 2 \lambda$ by Eq. \eqref{eqn:v_norm_ub}, we get $\left| \norm{ \epsilon_i^v }{2}^2 - \Exp_{z_i}[\norm{ \epsilon_i^v }{2}^2 | \calF_{i-1}] \right| \overset{\eqref{eqn:v_norm_ub}}{\leq} 4\lambda^2 + 4\lambda^2 = 8\lambda^2$. Then,
\begin{align}
\sum_{i=1}^n u_i 
& \leq \sum_{i=1}^n \Exp_{z_i} \left[ \left(\norm{ \epsilon_i^v }{2}^2 - \Exp_{z_i}[\norm{ \epsilon_i^v }{2}^2 | \calF_{i-1}] \right)^2 | \calF_{i-1} \right] \nonumber \\
& \leq \sum_{i=1}^n 8\lambda^2\Exp_{z_i} \left[ \left| \norm{ \epsilon_i^v }{2}^2 - \Exp_{z_i}[\norm{ \epsilon_i^v }{2}^2 | \calF_{i-1}] \right| | \calF_{i-1} \right] \nonumber \\
& \leq \sum_{i=1}^n 8\lambda^2 \times 2\Exp_{z_i}[\norm{ \epsilon_i^v }{2}^2 | \calF_{i-1}] \nonumber \\
& \overset{\eqref{eqn:exp_var_ub}}{\leq} 
16\lambda^2 \sum_{i=1}^n 18(\alpha\norm{\theta^i - \theta^*}{2}^2 + \beta)\nonumber \\
& \overset{\eqref{eqn:def_lambda},\eqref{eqn:induction_hypothesis}}{\leq} 
\frac{288\tau_\ell^2 A}{\log(2/\delta)^2} \sum_{i=1}^n 18 \left(\alpha AC \left(\frac{1}{i+\gamma-2}- \frac{1}{i+\gamma-1} \right) + \beta \right)\nonumber \\
& \overset{n \leq N}{\leq} \frac{5184 \alpha \tau_\ell^2 A^2C}{\log(2/\delta)^2(\gamma-1)}  + \frac{5184 N\beta \tau_\ell^2 A}{\log(2/\delta)^2}.
\end{align}

Next, since $\delta \leq 2e^{-1}$ and
$$
A \geq \frac{(N+\gamma)\beta\log(2/\delta)}{\tau_\ell^2} \geq \frac{N\beta}{\tau_\ell^2} 
\ \ \text{ and } \ \
\gamma -1 \overset{\eqref{eqn:def_hyperparemeter}}{\geq} \frac{48\alpha \sqrt{C} \log(2/\delta)}{\tau_\ell^2},
$$
We have 
\begin{align*}
\sum_{i=1}^n u_i
& \leq \frac{5184 \alpha \tau_\ell^2 A^2C}{\log(2/\delta)^2(\gamma-1)}  + \frac{5184 N\beta \tau_\ell^2 A}{\log(2/\delta)^2} \\
& \overset{\delta \leq 2e^{-1}}{\leq} \frac{108 A^2 \tau_\ell^4\sqrt{C}}{\log(2/\delta)} +  \frac{5184  \tau_\ell^4 A^2}{\log(2/\delta)}.
\end{align*}

Therefore, we know that the second inequality in Eq. \eqref{eqn:bound_circled3} does not hold, so that the first one must be satisfied, i.e.  
$$
\sum_{i=1}^n Y_i < 118 \tau_{\ell}^2 A + 15 A\tau_\ell^2 C^{1/4}.
$$
Finally, we have 
\begin{equation}
\label{eqn:final_bound_circledFour}
\circledFour = \frac{2\sum_{i=1}^n Y_i}{\tau_\ell^2} \leq  A( 236 + 30 C^{1/4} ).
\end{equation}

\paragraph{Upper bound of $\circledFive $:}
By equation Eq.\eqref{eqn:exp_var_ub} in Lemma \ref{lm:noise_terms_ub}, we have $\Exp_{z_t} [\norm{\epsilon_t^v}{2}^2 | \calF_{t-1} ] \leq 18(\alpha \norm{\theta^t -\theta^*}{2}^2 + \beta)$. Then we have 
\begin{align}
\circledFive 
& = \frac{2\sum_{i=1}^{n} \Exp_{z_i}[\norm{ \epsilon_i^v }{2}^2 | \calF_{i-1}]}{\tau_\ell^2} \nonumber \\ 
& \overset{\eqref{eqn:exp_var_ub},\eqref{eqn:induction_hypothesis}}{\leq} 
\frac{36n\beta}{\tau_\ell^2} + 
\frac{36\alpha AC \sum_{i=1}^{n} \left(\frac{1}{i+\gamma -2 } - \frac{1}{i+\gamma -1 }  \right)}{\tau_\ell^2}
\nonumber \\ 
& \overset{n\leq N}{\leq} \frac{36N\beta}{\tau_\ell^2} + 
\frac{36\alpha AC }{(\gamma-1)\tau_\ell^2}.
\end{align}
Next, since 
$$
A \geq \frac{(N+\gamma)\beta\log(2/\delta)}{\tau_\ell^2} \overset{\delta \leq 2e^{-1}}{\geq} \frac{N\beta\log(2/\delta)}{\tau_\ell^2}
\ \ \text{ and } \ \
\gamma -1 \geq \frac{48\alpha \sqrt{C} \log(2/\delta)}{\tau_\ell^2} \overset{\delta \leq 2e^{-1}}{\geq} \frac{48\alpha \sqrt{C}}{\tau_\ell^2}.
$$
We have 
\begin{equation}
\label{eqn:final_bound_circledFive}
\circledFive \leq A(  36 + \frac{3\sqrt{C}}{4} ) \leq A(  36 + \sqrt{C} ).
\end{equation}

\paragraph{Upper bound of $\circledSix $:}
By equation Eq.\eqref{eqn:b_norm_ub} in Lemma \ref{lm:noise_terms_ub}, we have $\norm{\epsilon_t^b}{2} \leq \frac{4(\alpha \norm{\theta^t -\theta^*}{2}^2 + \beta)}{\lambda}$. Then we have 
\begin{align}
\circledSix
& = \frac{2\sum_{i=1}^{n}\norm{ \epsilon_i^b }{2}^2}{\tau_\ell^2}  \nonumber \\ 
& \overset{\eqref{eqn:b_norm_ub}}{\leq} 
\frac{32 \sum_{i=1}^{n} \left( \beta + \alpha \norm{\theta^i-\theta^*}{2}^2 \right)^2}{\lambda^2\tau_\ell^2}  \nonumber \\ 
& \leq 
\frac{64 \sum_{i=1}^{n} \left( \beta^2 + \alpha^2 \norm{\theta^i-\theta^*}{2}^4 \right)}{\lambda^2\tau_\ell^2} \ \ (\text{Cauchy-Schwarz})  \nonumber \\ 
& \overset{n\leq N, \eqref{eqn:induction_hypothesis}}{\leq} 
\frac{64N\beta^2 + 64A^2C^2\alpha^2\sum_{i=1}^{n} \frac{1}{(i+\gamma-2)^2(i+\gamma-1)^2}}{\lambda^2\tau_\ell^2}  \nonumber \\ 
& \overset{(i)}{\leq} 
\frac{64N\beta^2 + \frac{64A^2C^2\alpha^2}{(\gamma-1)^2}}{\lambda^2\tau_\ell^2}  \nonumber \\ 
& \overset{\eqref{eqn:def_lambda}}{=} 
\frac{64N\beta^2\log(2/\delta)^2 }{A\tau_\ell^4} +\frac{64AC^2\alpha^2\log(2/\delta)^2}{\tau_\ell^4(\gamma-1)^2},
\end{align}
where $(i)$ is due to 
\begin{align*}
\sum_{i=1}^{n} \frac{1}{(i+\gamma-2)^2(i+\gamma-1)^2}
& \leq \left( \sum_{i=1}^{n} \frac{1}{(i+\gamma-2)(i+\gamma-1)} \right)^2 \\
& \leq \left( \sum_{i=1}^{n} \frac{1}{i+\gamma-2}  - \frac{1}{i+\gamma-1} \right)^2 \\
& \leq \frac{1}{(\gamma-1)^2}.
\end{align*}

$$
A \geq \frac{(N+\gamma)\beta\log(2/\delta)}{\tau_\ell^2} \overset{\delta \leq 2e^{-1}}{\geq} \frac{N\beta\log(2/\delta)}{\tau_\ell^2}
\ \ \text{ and } \ \
\gamma -1 \geq \frac{48\alpha \sqrt{C} \log(2/\delta)}{\tau_\ell^2}.
$$
We have 
\begin{align}
\circledSix & \leq \frac{64 }{AN} \times \frac{N^2\beta^2\log(2/\delta)^2}{\tau_\ell^4} +\frac{64AC^2\alpha^2\log(2/\delta)^2}{\tau_\ell^4(\gamma-1)^2} \nonumber \\
& \leq 
A( \frac{64}{N} +  \frac{C}{36} )
\overset{N \geq 1}{\leq} A( 64 +  \frac{C}{36} ).
\label{eqn:final_bound_circledSix}
\end{align}

\paragraph{Upper bounds for $\circledOne + \circledTwo + \circledThree + \circledFour + \circledFive + \circledSix$:}
Now, we have derived the upper bounds for $\circledOne \sim \circledSix$. By combining Eq. \eqref{eqn:final_bound_circledOne},\eqref{eqn:final_bound_circledTwo},\eqref{eqn:final_bound_circledThree},\eqref{eqn:final_bound_circledFour},\eqref{eqn:final_bound_circledFive},\eqref{eqn:final_bound_circledSix}, we have

\begin{align}
 (n+\gamma)(n+\gamma-1)\norm{\theta^{n+1} - \theta^*}{2}^2 
& \leq \circledOne + \circledTwo + \circledThree + \circledFour + \circledFive + \circledSix 
\nonumber \\
& \leq A(1 + 20\sqrt{C} + 2C^{3/4} + 8\sqrt{C} + \frac{C}{6} + 236 + 30C^{1/4} +36 + \sqrt{C} +64 + \frac{C}{36}  ) \nonumber \\
& = A(337 + 30C^{1/4} + 29\sqrt{C} + 2C^{3/4} + \frac{7C}{36} )
\nonumber \\
& \leq AC,
\end{align}
where the last inequality follows from the definition of $C=5000$. Therefore, we can conclude our induction proof. 

Lastly, by plugging in $n=N$ and the definition of $A$ in the above equation, we have, with probability at least $1-\delta$ (so that $E$ holds), 
\begin{align}
\norm{\theta^{N+1} -\theta^*}{2}^2
& \leq \frac{C C_1^2\left(
\gamma(\gamma-1)\norm{\theta^1 - \theta^*}{2}^2
+ \frac{(N+\gamma)\beta\log(2/\delta)}{\tau_\ell^2}  \right)}{(N+\gamma)(N+\gamma-1)} \nonumber \\
& \leq 
2C C_1^2\left(\frac{\gamma^2\norm{\theta^1 - \theta^*}{2}^2}{(N+\gamma)^2}
+ \frac{\beta(P,\loss)\log(2/\delta)}{(N+\gamma)\tau_\ell^2} \right),
\end{align}
where the last inequality is due to $(N+\gamma-1) \geq \frac{N+\gamma}{2}$.
If we take square root on the both sides and use the inequality $\sqrt{a+b} \leq \sqrt{a} + \sqrt{b}$ for $a,b \in \real^+$,  we have

\begin{align}
\norm{\theta^{N+1} -\theta^*}{2}
& \leq 100 C_1\left( \frac{\gamma \norm{\theta^1 - \theta^*}{2}}{N+\gamma}
+ \frac{1}{\tau_\ell}\sqrt{\frac{\beta(P,\loss)\log(2/\delta)}{N+\gamma}} \right).
\end{align}

\end{proof}

\subsection{Proof of Claim \ref{clm:freedman}}
\begin{proof}

\textbf{(Bounds for the first martingale difference sequence):} for martingale difference sequence $\{(i+\gamma-1) \inprod{\epsilon_i^v}{\zeta_i}\}_{ 0 \leq i \leq T-1}$,

(i) we first check that it is conditionally unbiased, i.e. 
\begin{align*}
\Exp_{z_i \sim P} [\epsilon_i^v | \calF_{i-1}] = 0 \Rightarrow
\Exp_{z_i \sim P} [(i+\gamma-1) \inprod{\epsilon_i^v}{\zeta_i} | \calF_{i-1}] = 0,
\end{align*}
since $\zeta_i$ is determinant conditioned on $\calF_{i-1}$.

(ii) We check each summand is bounded, i.e.
$$
\norm{(i+\gamma-1) \inprod{\epsilon_i^v}{\zeta_i}}{2}
\overset{\eqref{eqn:v_norm_ub},\eqref{eqn:def_zeta}}{\leq} (i+\gamma-1)(2\lambda)\sqrt{\frac{CA}{(i+\gamma-1)(i+\gamma-2)}} \leq 4\lambda\sqrt{A}
= \frac{4 A \tau_\ell \sqrt{C}}{\log(2/\delta)}
$$
for $1 \leq i \leq N$.

(iii) Let  $V_s = \sum_{i=1}^s v_i$ be the sum of its conditional variance.
Then we apply the Freedman's inequality in Lemma \ref{lm:freedman} instantiated with the following parameters 
$$
b =\frac{4 A \tau_\ell \sqrt{C}}{\log(2/\delta)}
\ , \ \  
v = \frac{36A^2 C \tau_\ell^2}{\log(2/\delta)} + \frac{3A^2 C^{3/2} \tau_\ell^2}{4\log(2/\delta)}
\text{ and } 
a = 2b\log(2/\delta) + \sqrt{2v\log(2/\delta)}.
$$
We will specify our choices of parameters later. Then we have 
\begin{align}
\Pr \left( \sum_{i=1}^{s}  (i+\gamma-1) \inprod{\epsilon_i^v}{\zeta_i} \geq a
\ \text{ and } \ 
V_s \leq v
\text{ for some } s \leq N \right) 
\leq \exp \left(\frac{-a^2}{2(v+ba)} \right) 
\leq \frac{\delta}{2},
\label{eqn:bound_first_martingale sequence}
\end{align}
where the last inequality is due to 
\begin{align}
\exp \left(\frac{-a^2}{2(v+ba)} \right) \leq \frac{\delta}{2}
& \Leftrightarrow \frac{-a^2}{2(v+ba)} \leq \log(\frac{\delta}{2}) \nonumber \\
& \Leftrightarrow
a^2 - 2b\log(\frac{\delta}{2})a - 2v\log \left(\frac{\delta}{2} \right) \geq 0 \nonumber \\
& \Leftrightarrow a \geq b\log(\frac{\delta}{2}) + \sqrt{(b\log(\frac{\delta}{2}))^2 + 2v\log(\frac{\delta}{2})}.
\label{eqn:a_to_delta}
\end{align}
The choice of $a$ satisfies the above inequality due to the fact that $\sqrt{x+y} \leq \sqrt{x} + \sqrt{y}$ for $x,y \in \real^+ $.
Also, we have 
\begin{align}
a & = 2b\log(2/\delta) + \sqrt{2v\log(2/\delta)} \nonumber \\
& =   8A \tau_\ell \sqrt{C} +  \sqrt{72A^2 C \tau_\ell^2+ \frac{3A^2 C^{3/2} \tau_\ell^2}{2}} \nonumber \\
& \leq 8A \tau_\ell \sqrt{C} + 12A \tau_\ell \sqrt{C} + 2AC^{3/4}\tau_\ell \nonumber \\
& = 20\tau_\ell A\sqrt{C} + 2A\tau_\ell C^{3/4}.
\end{align}
Therefore, by Eq. \eqref{eqn:bound_first_martingale sequence}, we have 
\begin{align}
& 1 - \frac{\delta}{2} \nonumber \\
& \geq  \Pr \left( \sum_{i=1}^{s}  (i+\gamma-1) \inprod{\epsilon_i^v}{\zeta_i} \geq a
\ \text{ and } \ 
V_s \leq v
\text{ for some } s \leq N \right) \nonumber \\
& = 1 - \Pr \left( \sum_{i=1}^{s}  (i+\gamma-1) \inprod{\epsilon_i^v}{\zeta_i} < a
\ \text{ or } \ 
V_s > v
\text{ for all } 1 \leq  s \leq N \right) 
\nonumber \\
& \geq 1 - \Pr \left( \sum_{i=1}^{s}  (i+\gamma-1) \inprod{\epsilon_i^v}{\zeta_i} < 20\tau_\ell A\sqrt{C} + 2A\tau_\ell C^{3/4}
\ \text{ or } \ 
V_s > \frac{36A^2 C \tau_\ell^2}{\log(2/\delta)} + \frac{3A^2 C^{3/2} \tau_\ell^2}{4\log(2/\delta)}
\text{ for all } 1 \leq  s \leq N \right).
\label{eqn:final_bound_martingale_sequence1}
\end{align}

\textbf{2.(Bounds for the second martingale difference sequence):} For martingale difference sequence $\{\norm{ \epsilon_i^v }{2}^2 - \Exp[\norm{\epsilon_i^v}{2}^2 | \calF_{i-1}]\}_{ 1 \leq i \leq N}$, (i) we first check that it is conditionally unbiased, i.e. 
$$
\Exp \left[\norm{ \epsilon_i^v }{2}^2 - \Exp[\norm{\epsilon_i^v}{2}^2 | \calF_{i-1}]  \ | \calF_{i-1} \right] = 0
$$

(ii) We check each summand is bounded, i.e.
$$
\left| \norm{ \epsilon_i^v }{2}^2 - \Exp[\norm{\epsilon_i^v}{2}^2 | \calF_{i-1}] \right|
\leq \norm{ \epsilon_i^v }{2}^2 + \Exp[\norm{\epsilon_i^v}{2}^2 | \calF_{i-1}]
\overset{\eqref{eqn:v_norm_ub}}{\leq} 
 4\lambda^2 + 4\lambda^2 
 = 8\lambda^2  = \frac{8A\tau_\ell^2}{\log^2(2/\delta)} \leq \frac{8A\tau_\ell^2}{\log(2/\delta)}
$$
for $1 \leq i \leq N$ since $\norm{\epsilon_i^v}{2} \leq 2 \lambda$.

(iii) Let $U_s = \sum_{i=1}^s u_i^2$  be the sum of its conditional variance.
Then we apply the Freedman's inequality instantiated with parameters 
$$
b =\frac{8A\tau_\ell^2}{\log(2/\delta)}
\ , \ \  
v =  \frac{108 A^2 \tau_\ell^4\sqrt{C}}{\log(2/\delta)} +  \frac{5184  \tau_\ell^4 A^2}{\log(2/\delta)}
\text{ and } 
 a = 2b\log(2/\delta) + \sqrt{2v\log(2/\delta)}.
$$
We will specify our choices of parameters later. Then we have
\begin{align}
\Pr \left( \sum_{i=1}^{s}\norm{ \epsilon_i^v }{2}^2 - \Exp[\norm{\epsilon_i^v}{2}^2  \geq a
\ \text{ and } \ 
U_{s} \leq v
\text{ for some } s \leq N \right)
\leq \exp\left(\frac{-a^2}{2(v+ba)} \right)
\overset{\eqref{eqn:a_to_delta}}{\leq} \frac{\delta}{2}
\label{eqn:bound_second_martingale sequence}
\end{align}
for all $1 \leq i \leq N$. The choice of $a$ satisfies the above inequality due to the fact that $\sqrt{x+y} \leq \sqrt{x} + \sqrt{y}$ for $x,y \in \real^+ $.
Also, we have 
\begin{align}
a & = 2b\log(2/\delta) + \sqrt{2v\log(2/\delta)} \nonumber \\
& = 8A \tau_\ell^2  +  \sqrt{ 216 A^2 \tau_\ell^4\sqrt{C} +  10368  \tau_\ell^4 A^2 } \nonumber \\
& \leq   8A \tau_\ell^2 + 15A\tau_\ell^2 C^{1/4} + 110\tau_\ell^2 A
\nonumber \\
& = 118A \tau_\ell^2 + 15A\tau_\ell^2 C^{1/4}.
\end{align}
Therefore, by Eq. \eqref{eqn:bound_second_martingale sequence}, we have 
\begin{align}
& 1 - \frac{\delta}{2}  \nonumber \\
& \geq  \Pr \left(\sum_{i=1}^{s}\norm{ \epsilon_i^v }{2}^2 - \Exp[\norm{\epsilon_i^v}{2}^2  \geq a
\ \text{ and } \ 
U_s \leq v
\text{ for some } s \leq N \right) \nonumber \\
& = 1 - \Pr \left( \sum_{i=1}^{s}\norm{ \epsilon_i^v }{2}^2 - \Exp[\norm{\epsilon_i^v}{2}^2  < a
\ \text{ or } \ 
U_s > v
\text{ for all } 1 \leq  s \leq N \right) 
\nonumber \\
& \geq 1 - \Pr \left( \sum_{i=1}^{s}\norm{ \epsilon_i^v }{2}^2 - \Exp[\norm{\epsilon_i^v}{2}^2  <  118A \tau_\ell^2 + 15A\tau_\ell^2 C^{1/4}
\ \text{ or } \ 
U_s > \frac{108 A^2 \tau_\ell^4\sqrt{C}}{\log(2/\delta)} +  \frac{5184  \tau_\ell^4 A^2}{\log(2/\delta)}
\text{ for all } 1 \leq  s \leq N \right).
\label{eqn:final_bound_martingale_sequence2}
\end{align}

Therefore, by combining Eq.\eqref{eqn:final_bound_martingale_sequence1} and \eqref{eqn:final_bound_martingale_sequence2}, we have, with probability at least $1-\delta$, for all $1 \leq s \leq N$,
\begin{equation}
\sum_{i=1}^s X_i < 20 \tau_{\ell}A\sqrt{C} 
+ 2A\tau_\ell C^{3/4}
\ \ \text{or} \ \  \sum_{i=1}^s v_i > \frac{36A^2 C \tau_\ell^2}{\log(2/\delta)} + \frac{3A^2 C^{3/2} \tau_\ell^2}{4\log(2/\delta)},
\end{equation}
and
\begin{equation}
\sum_{i=1}^s Y_i <  118 \tau_{\ell}^2 A + 15 A\tau_\ell^2 C^{1/4} 
\ \ \text{or} \ \  \sum_{i=1}^s u_i >\frac{108 A^2 \tau_\ell^4\sqrt{C}}{\log(2/\delta)} +  \frac{5184  \tau_\ell^4 A^2}{\log(2/\delta)}.
\end{equation}

\end{proof}

\newpage 
\section{Proofs of Corollaries}
\label{sec:proof_corollaries}
In this section, we provide proofs for Corollary \ref{cor:sample_complexity}, \ref{cor:heavy_tailed_mean_estimation} and \ref{cor:heavy_tailed_linear_regression}.

\subsection{Proof of Corollary \ref{cor:sample_complexity}}
Since $\calR(\cdot)$ is $\tau_\ell$-strongly convex and $\tau_u$-smooth, we have, for all $\theta \in \Theta $
\begin{equation}
\frac{\tau_\ell}{2} \norm{\theta-\theta^*}{2}^2
\leq \calR(\theta) - \calR(\theta^*)
\leq \frac{\tau_u}{2} \norm{\theta-\theta^*}{2}^2.
\end{equation}

Also, by Theorem \ref{thm:clipped_sgd}, we have 
\begin{align*}
\norm{\theta^{N+1} -\theta^*}{2}^2
& \leq O\left(\frac{\gamma^2 \norm{\theta^1 - \theta^*}{2}^2}{(N+\gamma)^2}
+ \frac{1}{\tau_\ell^2}\frac{\beta(P,\loss)\log(1/\delta)}{N+\gamma} \right) \\
& = O\left(\frac{ \tau_u^2\norm{\theta^1 - \theta^*}{2}^2\log(1/\delta)^2}{\tau_\ell^2 N^2}
+ \frac{1}{\tau_\ell^2}\frac{\sigma^2\log(1/\delta)}{N} \right).
\end{align*}
Therefore, by combining these equations, we obtain
\begin{align*}
\calR(\theta^{N+1}) - \calR(\theta^*)
& \leq \frac{\tau_u}{2} \norm{\theta^{N+1} -\theta^*}{2}^2 \\
& \leq O\left(\frac{ \tau_u^3\norm{\theta^1 - \theta^*}{2}^2\log(1/\delta)^2}{\tau_\ell^2N^2}
+ \frac{\tau_u}{\tau_\ell^2}\frac{\sigma^2\log(1/\delta)}{N} \right) \\
& = O\left(\frac{ \tau_u^3 \left( \tau_\ell\norm{\theta^1 - \theta^*}{2}^2 \right) \log(1/\delta)^2}{\tau_\ell^3 N^2}
+ \frac{\tau_u}{\tau_\ell^2}\frac{\sigma^2\log(1/\delta)}{N} \right) \\
& \leq O\left(\frac{ \tau_u^3 (\calR(\theta^1) - \calR(\theta^*)) \log(1/\delta)^2}{\tau_\ell^3 N^2}
+ \frac{\tau_u}{\tau_\ell^2}\frac{\sigma^2\log(1/\delta)}{N} \right) \\
& = O\left(\frac{ \tau_u^3 r_0 \log(1/\delta)^2}{\tau_\ell^3 N^2}
+ \frac{\tau_u}{\tau_\ell^2}\frac{\sigma^2\log(1/\delta)}{N} \right).
\end{align*}

\subsection{Proof of Corollary \ref{cor:heavy_tailed_mean_estimation}}
Since $\nabla_\theta \loss(\theta,\z) = \theta- \z$, we have 
\begin{equation}
\Exp_{z \sim P}\norm{\nabla_\theta \loss(\theta,\z) - \nabla_\theta \calR(\theta)}{2}^2
= \Exp_{z \sim P}\norm{\z - \Exp[\z]}{2}^2 = \trace{\Sigma}.
\end{equation}
Therefore, the corresponding $\alpha(P, \loss)$ and $\beta(P, \loss)$ in Eq. \eqref{eqn:asp_variance_sg} are $0$ and $\trace{\Sigma}$.  Also, we note that $\tau_\ell = \tau_u = 1$ for the loss function $\loss(\theta,\z) = \frac{1}{2} \norm{\z - \theta}{2}^2$. By plugging these parameters to Theorem \ref{thm:clipped_sgd}, we get the desired clipping level $\lambda$ and the upper bound for $\norm{\theta^{N+1} - \theta^*}{2}$.

\subsection{Proof of Corollary \ref{cor:heavy_tailed_linear_regression}}

From Lemma 7 of \citet{prasad2018robust}, we have
\begin{equation}
\Exp[\nabla \loss(\theta, (x,y))] = \Sigma (\theta - \theta^*), 
\ \ \text{ and } \ \ 
\end{equation}
\begin{equation}
\norm{Cov(\nabla \loss(\theta, (x,y)))}{2}
\leq 2(C_4+1) \norm{\Sigma}{2}^2 \norm{\theta - \theta^*}{2}^2+ \sigma^2 \norm{\Sigma}{2},
\end{equation}
where $\Sigma = \Exp[xx^{\top}]$ is the covariance matrix of random variable $X$, $Cov(\nabla \loss(\theta, (x,y)))$ denotes the covariance matrix of $\nabla \loss(\theta, (x,y))$ and $C_4$ is the constant related to $4^{th}$ bounded moment defined in Eq. \eqref{eqn:asp_bounded_moment}. Since we have
$$
\Exp \norm{\nabla_\theta \loss(\theta,(x,y)) - \nabla_\theta \calR(\theta)}{2}^2 
= \trace{Cov(\nabla \loss(\theta, (x,y)))} \leq p \norm{Cov(\nabla \loss(\theta, (x,y)))}{2}.
$$
We obtain $\alpha(P, \loss) = 2p(C_4+1) \norm{\Sigma}{2}^2$ and $\beta(P, \loss)= p\sigma^2 \norm{\Sigma}{2}$ in Eq. \eqref{eqn:asp_variance_sg}. Also, by calculating the hessian matrix of population loss function $\calR(\theta)$,  we have
$\tau_\ell = \lambda_{min}(\Sigma)$ and $\tau_u = \norm{\Sigma}{2}$. Finally, by plugging these values to Theorem \ref{thm:clipped_sgd}, we got the desired bound and hyper-parameters.

\end{document}